\documentclass{article}
\usepackage{algorithm}
\usepackage{microtype}
\usepackage{graphicx}
\usepackage{subcaption}
\usepackage{booktabs} 
\usepackage{hyperref}
\usepackage{url}
\usepackage{enumitem}
\def\eps{{\epsilon}}
\def\sE{{\mathbb{E}}}

\def\tova{{\text{TV}}}

\def\EnvO{{\mathbb{E} \mathbb{O}}}
\def\AlgO{{{\bf \texttt{UPDATE}}}}
\def\ceil#1{\lceil #1 \rceil}

\def\1{\bm{1}}
\def\err#1{\epsilon_{\text{app}}({#1})}
\def\tQ{{\widetilde{Q}}}
\def\tV{{\widetilde{V}}}
\def\tN{{\widetilde{N}}}
\def\tR{{\widetilde{R}}} 
\def\tpi{{\widetilde{\pi}}} 
\def\tr{{\tilde{r}}}
\def\tP{{\widetilde{\mathbb{P}}}}

\def\hP{{\widehat{\mathbb{P}}}}
\def\tv{{\tilde{v}}}

\def\argmax{{\text{arg}\max}}
\usepackage{hyperref}

\usepackage[accepted]{icml2026}

\usepackage{amsmath}
\usepackage{amssymb}
\usepackage{mathtools}
\usepackage{amsthm}

\usepackage[capitalize,noabbrev]{cleveref}

\theoremstyle{plain}
\newtheorem{theorem}{Theorem}[section]

\newtheorem{lemma}[theorem]{Lemma}
\newtheorem{corollary}[theorem]{Corollary}
\theoremstyle{definition}

\newtheorem{assumption}[theorem]{Assumption}
\theoremstyle{remark}
\newtheorem{remark}[theorem]{Remark}
\newtheorem{example}{Example}[section]
\newtheorem{defn}{Definition}[section]

\usepackage[textsize=tiny]{todonotes}

\icmltitlerunning{Exact Unlearning in Reinforcement Learning}

\def\tT{{\widetilde{\mathcal{T}}}}
\def\gT{{\mathcal{T}}}

\begin{document}

\twocolumn[
  \icmltitle{Exact Unlearning in Reinforcement Learning}

  \icmlsetsymbol{equal}{*}

  \begin{icmlauthorlist}
    \icmlauthor{Thanh Nguyen-Tang}{xxx}
    \icmlauthor{Raman Arora}{yyy}
  
  \end{icmlauthorlist}
  \icmlaffiliation{xxx}{Department of Data Science, New Jersey Institute of Technology, Newark, NJ, USA}
  \icmlaffiliation{yyy}{Department of Computer Science, Johns Hopkins University, Baltimore, MD, USA}

  \icmlcorrespondingauthor{Thanh Nguyen-Tang}{thanh.nguyen@njit.edu}
   \icmlcorrespondingauthor{Raman Arora}{arora@cs.jhu.edu}

  \icmlkeywords{Machine Learning, ICML}

  \vskip 0.3in
]

\printAffiliationsAndNotice{}

\begin{abstract}
  We formulate the problem of \emph{exact unlearning} in reinforcement learning, where the goal is to design an efficient framework that enables the removal of any user’s data upon deletion request, i.e., the online learner’s output after unlearning is \emph{indistinguishable} from what would have been produced had the deleted user never interacted with the learner. For any $\rho >0$, we show that there exists a reinforcement learning (RL) algorithm that is $\rho$-TV-stable and supports an exact unlearning procedure whose expected computational cost is only a $\rho \sqrt{\ln T}$ fraction of the computational cost of retraining from scratch. We construct such a $\rho$-TV-stable RL algorithm for tabular Markov decision processes (MDPs), which achieves a regret bound of  $\mathcal{O}(H^2 \sqrt{SAT} + H^3 S^2 A + {H^{2.5} S^2 A}/{\rho})$, where $S, A, H$, and $T$ denote the number of states, the number of actions, the episode horizon, and the number of episodes, respectively. We also establish a lower bound of $\Omega(H\sqrt{\!SAT}\! +\! {SAH}/{\rho})$  for $\rho$-TV-stable RL algorithms,  showing that our algorithm is nearly minimax optimal. 
\end{abstract}

\section{Introduction}

Machine unlearning is a relatively new area of research that aims to efficiently remove the influence of specific data from a machine learning (ML) model in response to a data modification or deletion request \citep{cao2015towards, bourtoule2021machine}. This need arises from growing privacy concerns and legal requirements mandating that certain user data, along with its effect on a model, be completely removed. Such concerns are motivated by the vulnerability of ML models to attacks such as membership inference \citep{shokri2017membership} and model inversion \citep{fredrikson2015model}, which can leak sensitive training data. 

Moreover, various data protection laws have enshrined users' \emph{right to be forgotten}, including the General Data Protection Regulation (GDPR) in the European Union \citep{GDPR}, the California Consumer Privacy Act (CCPA) \citep{bonta2022california}, the Act on the Protection of Personal Information (APPI) in Japan \citep{JDPO}, and Canada’s proposed Consumer Privacy Protection Act (CPPA) \citep{CPPA}. Beyond regulatory compliance, machine unlearning also offers practical benefits: it can be used to remove poisoned clients or compromised nodes in federated systems \citep{jin2023forgettable}, delete copyrighted or proprietary content from models \citep{eldan2023s}, accelerate leave-one-out validation, support user data marketplaces, and identify high-value data points within a model \citep[~p.~2]{ginart2019making}.

Machine unlearning requires that the model’s output after unlearning be \emph{indistinguishable} from the output that would have been produced had the requested user data never been included in the training process. While there is no universally accepted definition of indistinguishability in this context, two primary notions of certified unlearning have emerged: exact unlearning \citep{ullah2021machine,ullah2023adaptive} and approximate unlearning \citep{guo2019certified,neel2021descent,sekhari2021remember,allouah2024utility,van2025forget}.

\paragraph{Why exact unlearning in RL?} Approximate unlearning is less stringent and typically enables more efficient algorithms with improved space complexity. However, it does not guarantee full removal of a user's influence, which can be problematic in practice, particularly because it is difficult to determine an appropriate approximation parameter that ensures adequate privacy protection. As a result, organizations that train large-scale models may prefer exact unlearning, accepting the cost of increased memory usage during training in order to avoid potentially catastrophic consequences of a privacy breach, especially if even a single individual can demonstrate that their personal data remains embedded in a deployed model. {This is particularly relevant for interactive systems (recommendation, personal assistants, healthcare triage) that continually log per-user episodes. Deletion requests arise for privacy (right-to-be-forgotten, membership inference risk), safety (removing poisoned/outlier interactions), compliance (per-user auditability), and engineering (fast leave-one-user-out diagnostics) reasons. Our framework lets operators retrofit a strong deletion guarantee on to regret-optimal learners while keeping retraining rare and localized, which is valuable even in tabular settings that back many real-world pipelines (e.g., bandits with context bucketing).}

To date, most work on machine unlearning has focused on supervised and unsupervised learning settings, where data points are static and independently processed. However, many real-world systems, such as recommender platforms, digital assistants, and personalized healthcare tools, are interactive in nature and rely on reinforcement learning (RL) to model sequential user interactions. In these systems, data arises not from isolated inputs but from temporally extended experiences with users.

RL is a fundamental paradigm for sequential decision-making, where an agent learns to maximize cumulative reward in an unknown environment through trial and error. With its growing adoption in personalized services, ranging from online recommendations to virtual assistants and social robotics, RL algorithms increasingly interact with streams of users, adapting continuously based on their behaviors. This naturally raises the same privacy concerns central to machine unlearning: \emph{how can we remove the influence of a particular user's interaction history from an RL system when requested?}

{
Our motivation stems from personalized, interactive systems, e.g., voice assistants, recommender platforms, 
medical data trajectories, or personalized tutors, where episodic~RL interactions correspond to identifiable users. Here is a concrete example inspired by the work of \cite{shani2005mdp}.

\begin{example}
    Recommender systems (e.g., product recommendations on e-commerce platforms such as Amazon) are often modeled as MDPs, where an RL agent acts as the recommender. The system interacts sequentially with different users, each corresponding to an episode $t$. An action is an item recommendation, and the user’s state at episode $t$ and step $h$ can be represented by their last $k$ selected items, i.e., $s_h = (a_{h-k}, \ldots, a_{h-1})$, since recent history is most relevant for prediction. Given a recommendation $a$, the user may accept it, transitioning from $s_h = (a_{h-k}, \ldots, a_{h-1})$ to $s_{h+1} = (a_{h-k+1}, \ldots, a_{h-1}, a)$, or instead select a non-recommended item $a'$, transitioning to $s_{h+1} = (a_{h-k+1}, \ldots, a_{h-1}, a')$. The reward reflects the utility of selling items to the user, while the \textit{sampled} rewards and transitions at episode $t$ capture the characteristics of the user interacting with the system. A user who interacts with the recommender at time $t$ may later request removal of her interaction data, due to privacy or other concerns. Crucially, it is not enough to delete the raw data, we must remove its influence on the trained system. 
\label{example: motivating}
\end{example}
}

Despite the relevance and urgency of this question, the problem of unlearning in RL remains largely unaddressed. In this work, we aim to bridge this gap by formulating and addressing the problem of exact unlearning in RL (see \Cref{defn: exact unlearning}). Our goal is to design sample-efficient RL algorithms that enable the efficient removal of any user's data upon request, while ensuring that the resulting model behavior is indistinguishable from one trained without that data. Our key results are as follows:

\begin{enumerate}
 \nointerlineskip
\item We formulate the problem of exact unlearning in reinforcement learning. To do so, we abstract RL into a general class of sequential learning problems with prefix sum structure, and develop a unified (un)learning framework based on the recent notion of Total Variation (TV) stability \citep{ullah2021machine,ullah2023adaptive} (see \Cref{section: unlearning for prefix sums}).

\item We show that, for any $ \rho > 0 $, there exists an efficient RL algorithm for tabular Markov decision processes (MDPs) that is $ \rho $-TV-stable (see \Cref{lemma: TV stable}) and achieves a regret bound of $\tilde{\mathcal{O}}\left(H^2 \sqrt{SAT} + H^3 S^2 A + \frac{H^{2.5} S^2 A}{\rho} \right),$ 
where $ S, A, H, T $ denote the number of states, the number of actions, the horizon, and the number of episodes, respectively (see \Cref{section: RL}). This algorithm admits an efficient exact unlearning algorithm whose expected computational cost is only $ \rho \sqrt{\ln T} $ fraction of the cost of retraining from scratch (see \Cref{section: unlearning for prefix sums}).

\item We derive a minimax lower bound of $\Omega\left(H\sqrt{SAT} + \frac{HSA}{\rho}\right)$ 
for the class of $ \rho $-TV-stable RL algorithms, demonstrating that our upper bound is nearly tight
(see \Cref{subsection: lower bound}).
\end{enumerate}
\paragraph{Overview of Techniques.}
Our work builds on  \citet{ullah2023adaptive}  who showed that TV-stability is both  sufficient and  necessary for exact unlearning in batch supervised learning in a natural coupling-based learning framework. Extending this theory to sequential RL is highly non-trivial as it requires a) establishing TV-stable RL algorithms and (b) proving that these algorithms simultaneously achieve near-optimal regret.

{We extend UCB-VI \citep{azar2017minimax} in a non-trivial manner to achieve exact unlearning. Our RL algorithm is the first regret-optimal variant that is also $\rho$-TV-stable and supports exact unlearning. This requires (i) replacing 
visitation statistics with binary-tree, noise-perturbed prefix sums while preserving optimism, (ii) storing intermediate sufficient statistics in a coupling-compatible way so that unlearning can reuse randomness via maximal coupling,~(iii) extending regret proof to control additional variance from correlated Gaussian noise, matching UCB-VI up to log  factors.}

We also establish the first minimax lower bound for the regret of TV-stable algorithms. Our contribution thus lies in integrating and extending these foundational ideas into the RL setting with explicit algorithms and rigorous theoretical analysis, filling a significant gap in the literature.

\paragraph{Related Work.} In the broader landscape, 
a number of important developments have emerged across different learning tasks. While we do not attempt to be exhaustive, we highlight several key contributions. The term machine unlearning was first introduced by \citet{cao2015towards}, who proposed a deterministic notion of data deletion in trained models. Their work focused on statistical query problems under restrictive structural assumptions. \citet{ginart2019making} initiated the study of approximate unlearning via differential privacy, focusing on the $k$-means problem. This line of work was extended to linear and logistic regression by \citet{guo2019certified}, and to general convex models by \citet{neel2021descent}.

The work most closely related to ours is that of \citet{ullah2021machine, ullah2023adaptive}. These works formalize the notion of exact unlearning in the batch setting, particularly in connection with adaptive query release mechanisms. In contrast, we study exact unlearning in the context of sequential learning, specifically within reinforcement learning and regret minimization. While our setting is distinct, we adopt their notion of exact unlearning and build on similar algorithmic primitives, most notably, the use of prefix sums, to support efficient unlearning in an online framework.

In the context of RL, prior work on unlearning is extremely limited. To our knowledge, the only relevant effort is by \citet{ye2023reinforcement}, who introduced the concept of reinforcement unlearning. However, in their setting, multiple distinct MDPs are trained simultaneously (e.g., for different tasks or domains), and unlearning involves removing an entire MDP rather than individual user episodes within a single MDP. 
In contrast, we consider a more granular and practical objective: unlearning at the level of an individual user's interaction. Furthermore, while \citet{ye2023reinforcement} empirically study approximate unlearning, they do not provide theoretical guarantees or analyze its effect on regret. Our work differs in that we provide a finite-sample regret bound for exact unlearning, along with a matching lower bound.

Finally, our work is related to the growing literature on differentially private reinforcement learning \citep{vietri2020private,zhou2022differentially,chowdhury2022differentially,qiao2023near}. While differential privacy shares conceptual similarities with approximate unlearning (indeed, some techniques overlap), its guarantees are fundamentally different. Differential privacy requires that model outputs be statistically similar whether or not a single datapoint is included in the training set. In contrast, exact unlearning demands that the resulting model be identically distributed to one trained without the deleted data. As such, results in differential privacy and approximate unlearning do not directly imply guarantees for exact unlearning. Nevertheless, techniques developed in the differential privacy literature, such as the binary tree mechanism, play a crucial role in our framework.

While preparing this paper, we became aware of parallel and independent work by \citet{hu2025online}, 
which studies unlearning in the setting of online convex optimization. While both works address unlearning in online learning, there are two key distinctions. First, our work focuses on \emph{exact} unlearning, whereas \citet{hu2025online} develop methods for \emph{approximate} unlearning. Second, we analyze unlearning in the context of MDPs, a structured, sequential setting that introduces additional challenges, while their work is situated in the stateless framework of online convex optimization.

\section{Problem Setup and Preliminaries}

\label{subsection: sequential learning problems}

Sequential learning problems arise in settings where an online stream of users (or clients) arrives over time, and the model must make real-time, personalized decisions based on accumulated experience. Common applications include online recommendation systems, virtual assistants, and adaptive educational platforms, where each user interaction provides feedback that informs future predictions or recommendations. Formally, let $\mathcal{Z}$ denote the user space, where each user $z_t \in \mathcal{Z}$ represents an individual arriving at time $t$. The learner maintains a model $w_t \in \mathcal{W}$ drawn from a model space $\mathcal{W}$, which is used to interact with user $z_t$. In addition to the model, the learner may maintain auxiliary meta-data $m_t \in \mathcal{M}$ where $\mathcal{M}$ denotes the meta-data space (e.g., state visit counts, confidence intervals, or cumulative statistics). 

Upon interacting with the model, each user produces a response $x_t \in \mathcal{X} \subseteq \mathbb{R}^d$, which captures structured feedback from the interaction, such as observed rewards, state transitions, or implicit preferences. This response is modeled by an \underline{\textbf{environment oracle}} $\{\EnvO_t\}_{t \geq 1}$, which maps the user and the current model history to a feedback statistic:  
\begin{align*}
    x_t = \EnvO_t(z_t; w_{1:t}), 
\end{align*}
where 
$w_{1:t} = \{w_1, w_2, \ldots, w_t\}$ denotes the sequence of models deployed thus far. 

For example, in an online movie recommendation system, each user $z_t$ could represent an individual viewer with latent preferences. The model $w_t$ might encode a recommendation policy or ranking of items personalized to the current user base. Upon presenting recommendations to $z_t$, the system observes their clicks or ratings, which are summarized in $x_t$, such as which movies were selected, how long they were watched, or how they were rated. This user feedback, formalized as $x_t=\EnvO_t(z_t; w_{1:t})$, is then used to update the learner’s model going forward. The overarching goal of the online learner $\mathbf{A}: \mathcal{Z}^* \rightarrow \mathcal{W}^* \times \mathcal{M}^*$ is to produce a sequence of models $\{w_1, \ldots, w_t, \ldots\} \subset \mathcal{W}$ that improve over time, continually adapting to better serve the evolving user population.

\paragraph{Markov Decision Processes.} 
{Formally, we study sequential learning problems in} the framework of reinforcement learning, wherein the user behavior is modeled using a Markov decision process (MDP). 
Specifically, an MDP is a tuple $M = (\mathcal{S}, \mathcal{A}, H, \{\mathbb{P}_h\}_{h \in [H]}, \{\bar{r}_h\}_{h \in [H]})$ where $\mathcal{S}$ is the finite state space with $S=|\mathcal{S}|$,  $\mathcal{A}$ is the finite action space with $A=|\mathcal{A}|$, $H$ is the episode horizon, $\bar{r}_h: \mathcal{S} \times \mathcal{A} \rightarrow [0,1]$ is the expected reward function 
at step $h$, and $\mathbb{P}_h: \mathcal{S} \times \mathcal{A} \rightarrow \Delta (\mathcal{S})$, defines the transition dynamics at step $h$, where $\Delta(\mathcal{S})$ denotes the set of all distributions over $\mathcal{S}$.  
\paragraph{Interaction Protocol.} {At each episode $t$, the RL algorithm interacts with a \emph{randomly drawn} user (data) $z_t := \{(s_h^t, r_h^t)\}_{h \in [H]} \in \mathcal{Z}$
and recommends a sequence of actions $a^t_1, \ldots, a^t_H$. We assume that the user data is randomly drawn according to the underlying MDP transition kernel and reward function: $s^t_{h+1} \!\sim \!\mathbb{P}_h(\cdot|s^t_h, a^t_h)$, $r^t_h\! \sim \!R(\bar{r}_h(s^t_h, a^t_h))$, and $R(r)$ is a reward distribution over $[0,1]$ with mean $r$.\footnote{We view a user data $z \in \mathcal{Z}$ (at an episode $t$) as a depth-$H$ tree, which defines the sequence of all possible reward-state pairs $(r_h, s_{h+1})_{h \in [H]}$ in response to any RL algorithm's action sequence $(a_1,\ldots,a_H)\in \mathcal{A}^H$.} This user-episode abstraction is commonly adopted in the literature on privacy-preserving RL \citep{vietri2020private,zhou2022differentially,chowdhury2022differentially,qiao2023near} and MDP-based personalized recommendation systems \citep{shani2005mdp}, where users generate distinct and identifiable reward/transition feedback over time (also see \Cref{example: motivating}). 
Crucially, there is a single fixed tabular MDP $(\mathcal{S},\mathcal{A},H,{P_h},{r_h})$ throughout. A ``user'' $z_t$ is purely a label for episode $t$ that facilitates deletion requests; it is not a latent parameter that changes the environment across episodes. Each episode generates a trajectory under the current policy in the same MDP. The environment oracle $\EnvO_t$ simply packages that episode’s trajectory statistics $x_t$ for our prefix-sum machinery.
}

\paragraph{Policies and Value Functions.} 
A (non-stationary) policy $\pi = (\pi_1, \ldots, \pi_H)$ consists of decision rules $\pi_h: \mathcal{S} \rightarrow \Delta(\mathcal{A})$, where $\pi_h(a|s)$ defines the distribution over actions at step $h$ in state $s$. The value function of policy $\pi$ at step $h$ is defined as $V^{\pi}_h(s) := \mathbb{E}_{\pi}[ \sum_{h'=h}^H r_{h'} | s_h = s]$ where the expectation is over the randomness of the trajectory induced by following $\pi$. The corresponding action-value function is $Q^{\pi}_h(s,a) =  \mathbb{E}_{\pi}[ \sum_{h'=h}^H r_{h'} | (s_h,a_h) = (s,a)]$. 
We denote by $\pi^*$ an optimal policy and write $V^* = V^{\pi^*}$ and $Q^* = Q^{\pi^*}$. An RL algorithm $\mathbf{A}$ {is an \emph{instantiation} of the online learner $\mathbf{A}: \mathcal{Z}^* \rightarrow \mathcal{W}^* \times \mathcal{M}^*$ we defined earlier in this section.} {In particular, for this instantiation of the online learner to MDPs}, \emph{the model space $\mathcal{W}$ becomes the set of all policies} and \emph{the meta-data space $\mathcal{M}$ captures any auxiliary information that is produced by the RL algorithm} {(e.g., visitation counts, noise terms, sufficient statistics)}. 
\paragraph{Regret Minimization.} The goal of the RL algorithm is to minimize the regret over $T$ episodes: 
\begin{small}
\begin{align*}
    R(T) = \sum_{t=1}^T (V^{*}_1(s^t_1) - V^{\pi^t}_1(s^t_1))
\end{align*}
\end{small}
where $\pi^t$ is the policy used in episode $t$, and $s_1^t$ is the initial state of that episode.  

In a continual stream of user interactions, it is natural to consider scenarios where a user requests the removal of their data from the system. This request requires not only the deletion of the user’s data but also the removal of its influence on the trained model. An unlearning algorithm is designed to facilitate this process, ensuring that the model behaves as if the deleted data had never been seen.

Formally, we define an \emph{unlearning algorithm}
$\mathbf{U}: \mathcal{W}^* \times \mathcal{M}^* \times \mathcal{Z} \times \mathcal{Z} \rightarrow \mathcal{W}^* \times \mathcal{M}^*$, which takes as input a sequence of models and meta-data, along with a target user $z \in \mathcal{Z}$ to be deleted and a dummy user $z' \in \mathcal{Z}$ used as a replacement. The algorithm then outputs an updated sequence of models and meta-data in which the influence of $z$ has been effectively removed and replaced by the influence of $z'.$

To formalize this goal, we introduce a notion of \emph{exact unlearning} for sequential learning problems, which we directly adopt from the batch learning setting developed by \citet{ullah2023adaptive}. This definition captures the requirement that the output of the unlearning algorithm must be \emph{indistinguishable in distribution} from the output that would have resulted had $z$ never been observed. 

\begin{defn}[\textbf{Exact Unlearning for Sequential Learning Problems}]
An unlearning algorithm $\mathbf{U}$ is said to achieve \emph{exact unlearning} with respect to  a learning algorithm $\mathbf{A}$ if, for any $T \in \mathbb{N}$, any $t \in [T]$, and any sequence $z_1, \ldots, z_T, z'$ in $\mathcal{Z}$, the following holds 
    \begin{align*}
        \mathbf{U}( \mathbf{A}(z_{1:T}), z_t, z') \overset{D}{=} \mathbf{A}(z_1, \ldots, z_{t-1}, z', z_{t+1}, \ldots, z_T)  ,
    \end{align*}
where $X\! \overset{D}{=}\! Y$ denotes equality in distribution, meaning that for any event $\mathcal{E}$, $\Pr(X \in \mathcal{E})\! =\! \Pr(Y \in \mathcal{E})$. 
\label{defn: exact unlearning}
\end{defn}

\Cref{defn: exact unlearning} formalizes the idea that deleting a user data point $z_t$ is equivalent to having replaced it with an arbitrary but fixed data point $z'$ from the outset. The requirement that the equivalence must hold for any choice of $z'$ ensures that the notion of deletion is meaningful and practically aligned with privacy expectations. The dummy data point $z'$ can be chosen by the algorithm designer and need not correspond to a real user; for instance, it may encode a no-op or neutral interaction.

Replacing data rather than removing it outright has a key algorithmic advantage. It allows the learner to preserve any internal data structures (e.g., prefix sums, counters, or trees) without disrupting their structure. This structural preservation is crucial for enabling efficient unlearning in online settings.

\begin{remark}
In batch learning settings, the replacement data point is typically sampled randomly from the dataset \citep{ullah2023adaptive}. In contrast, in the sequential setting, we explicitly replace a deleted data point with a fixed dummy data point  chosen by the algorithm designer.
\end{remark}
\paragraph{Unlearning Request.} We consider the \emph{anytime} deletion request setting, in which a user  may request deletion at any time $T$, without the learner knowing $T$ in advance. While $T$ is independent of the learner's algorithm, it  necessitates the design of both anytime learning and unlearning algorithms. 

\begin{remark}[Anytime vs. fixed-time]
    Although we technically operate in the anytime setting, for clarity and without loss of generality, we may assume a fixed-time deletion request, where $T$ is known. This simplification is justified because the binary tree mechanism of \citep{dwork2010differential}, which we employ, does not require the tree's size to be known in advance. Furthermore, standard techniques such as the doubling trick~\citep{auer1995gambling} allows us to convert regret bounds with known time horizon to anytime regret bounds at the cost of a negligible constant. 
    \label{remark: anytime vs fixed-time}
\end{remark}

A trivial unlearning approach is to retrain the model from scratch on the modified sequence $z_1, \ldots, z_{t-1}, z',$ $z_{t+1}, \ldots, z_T$. While this satisfies the exact unlearning in \Cref{defn: exact unlearning}, its  computational cost is often prohibitive in practice. In many real-world applications, e.g., large-scale recommendation systems or continual learning environments, retraining from scratch for each deletion request is simply infeasible. Our goal is to develop more efficient unlearning methods that are significantly less costly than full retraining. 

To motivate such efficient algorithms, we turn to the concept of \textbf{algorithmic stability}, specifically in terms of Total Variation (TV) distance, a technique recently leveraged for certified unlearning in the batch setting~\citep{ullah2021machine, ullah2023adaptive}. The Total Variation (TV) distance between two probability distributions $P$ and $Q$ is defined as
$$\tova(P,Q) := \sup_{\text{measurable event } \mathcal{E}}|P(\mathcal{E}) - Q(\mathcal{E})|.$$ 

\begin{defn}[$\rho$-TV-stable algorithms]
    Fix any $\rho\geq 0$. An online learning algorithm $\mathbf{A}$ is said to be $\rho$-TV-stable if, for any two user sequences $Z$ and $Z'$ of the same length that differ in exactly one position, we have: 
    $\tova( \mathbf{A}(Z) , \mathbf{A}(Z') ) \leq \rho$. {
Unless noted otherwise, all probabilities are over the algorithm’s internal randomness; events are Borel sets in the output space of ($w_{1:T}$,meta-data).}
    \label{lemma: TV stable}
\end{defn}

\begin{algorithm}
\begin{small}
    \begin{algorithmic}[1]
        \REQUIRE environment oracle $\{\EnvO_t\}$ and the update function {\bf $\AlgO$} internal to the algorithm
        \STATE Initialize $w_1 \in \mathcal{W}$ and $u_0 = 0$
        \FOR{$t = 1, 2, \ldots, $}
        \STATE A user $z_t$ (oblivious to the algorithm) arrives 
        \STATE  Use $w_{1:t}$ to interact with user $z_t$ via the environment oracle $x_t \leftarrow \EnvO_t(z_t; w_{1:t})$
        \STATE Update the prefix sum $u_t \leftarrow u_{t-1} + x_t$
        \STATE Update model via the update function {\bf $w_{t+1} =  \AlgO(u_t; w_{1:t})$}
        \label{alg_line: internal update func}
        \ENDFOR
        
        \ENSURE $w_0, w_1, \ldots $
    \end{algorithmic}
    \caption{Online algorithms with prefix sums}
    \label{algorithm: online algorithms with prefix sums}
\end{small}    
\end{algorithm}

\paragraph{Maximal Coupling for Efficient Unlearning via TV-stability.}
Let $P$, $Q$ denote the distributions over the outputs of a learning algorithm run on the original data sequence and the sequence after a deletion request, respectively. Suppose we have access to a sample $x \sim P$. The goal of unlearning is to transform this sample into one from $Q$, using an edit function $\phi: \text{dom}(P) \rightarrow \text{dom}(Q)$ such that $y := \phi(x) \sim Q$, and such that the computational cost of applying $\phi$ is small. \citet{ullah2021machine} propose a general framework based on \emph{maximal coupling} for constructing such efficient unlearning mechanisms. A coupling of $P$ and $Q$ is a joint distribution $\psi$ with marginals $P$ and $Q$. A \emph{maximal coupling} is a coupling that minimizes the   probability of disagreement between the coupled random variables, i.e., $\Pr_{(x,y) \sim \psi} 1\{x \neq y\} = \tova(P,Q)$. This implies that, if the edit function $\phi$ induces a maximal coupling, then with probability~at least $1-\tova(P,Q)$, the original sample $x \sim P$ can be reused directly as a valid sample for $Q$, avoiding retraining.

The key algorithmic challenge is twofold: (1) design learning algorithms that are both accurate and exhibit small $\text{TV}$-stability, and (2) construct edit functions that induce or approximate maximal couplings. These components together enable fast and exact unlearning in the sequential setting.

\section{(Un)Learning for Prefix Sums}
\label{section: unlearning for prefix sums}
In this section, we consider a class of online algorithms in \Cref{subsection: sequential learning problems}, but with an additional structure of \textbf{prefix sums}. This algorithm class captures most of online learning algorithms, including reinforcement learning that we will consider later. We present a generic template for online algorithms with prefix sums in \Cref{algorithm: online algorithms with prefix sums}. A key structure of an algorithm in this class is an internal update function {\bf $\AlgO$} in Line 6.
The internal update takes in the prefix sum $u_t$ of the statistics queried from the previous users $z_{1:t}$ and the output of the previous models $w_{1:t}$ to produce a new model output $w_{t+1}$.

\subsection{TV-stable Learning with Binary Tree Mechanism}

We now present \Cref{alg: reinforcement learning}, a modification of \Cref{algorithm: online algorithms with prefix sums} that makes the algorithm TV-stable and supports efficient unlearning in what follows. The lines 4 and 5 in \Cref{alg: reinforcement learning} are the only major modifications on top of \Cref{algorithm: online algorithms with prefix sums}. 

The idea is that we use the standard binary tree mechanism in the differential privacy literature \citep{dwork2010differential} to add noise to the prefix sums, as has been done in the original exact unlearning framework of \citep{ullah2023adaptive} {(see examples in \cite{ullah2023adaptive})}. Intuitively, to perturb $T$ prefix sums $\{u_t\}_{t \in [T]}$, one need not use $T$ independent noise samples for each statistic $x_t$. Instead, we only need to add noises to all the dyadics and reuse these noises for any prefix sum based on its dyadic decomposition. For example, suppose we want to add noise to the seventh prefix sum $u_7$, we use the dyadic decomposition of $7$ as $7 = 4 + 2 + 1$, and obtain a perturbed variant of $u_7$ by 
   $ (x_1 + x_2 +x_3 + x_4 + \xi_{1}) + (x_5 + x_6 + \xi_{2}) + (x_7 + \xi_3)$
where $\xi_{1}, \xi_{2}, \xi_3 \sim \mathcal{N}(0, \sigma^2 I_d)$.
Overall, each prefix sum is perturbed by at most $\log T$ noise samples. This idea can be realized by considering a perfect binary tree $\mathcal{T}$ with the following properties.  
Each node $b$ of the tree is associated with a fixed, independent sample of Gaussian noise $\xi_b \sim \mathcal{N}(0, \sigma^2 I_d)$. We store in each node $b$ two fields: clean value $v_b \in \mathbb{R}^d$ and noisy value $\tilde{v}_b \in \mathbb{R}^d$, where 
$\tilde{v}_b = v_b + \xi_b$. A leaf node $t$ additionally stores model $w_t$. The clean value of a leaf node $t$ is zero if in initialization, or $x_t$, if the leaf node gets an update from $x_t \leftarrow \EnvO_t(z_t; w_{1:t})$. For a non-leaf node, its clean value is the sum of the clean values of its children. The tree supports the following operations: 
\begin{itemize}[noitemsep,topsep=0pt,parsep=0pt,partopsep=0pt]
    \item {\bf $ \texttt{TreeUpdate}(x; t, \mathcal{T})$}: If leaf $t$ does not exist, create leaves $t, t+1, \ldots, 2^{\ceil{\log_2 t}}$. 
    Iterate over each node through the path from leaf $t$ to root, and update its clean value by adding the new value and subtracting the old value of leaf $t$. Consequently, the new clean value of leaf $t$ is $x$ and the value of any other leaves in $t+1, \ldots, 2^{\ceil{\log_2 t}}$ is zero. 
    \item {\bf ${\bf \texttt{GetNoisyPrefixSum}}(t,\mathcal{T})$}: Retrieve the noisy version of the prefix sum $\sum_{i=1}^t {x}_i$ from the tree. To do this, write $t = b_1 \ldots b_K$ as $K$-bit representation, where $K$ is the depth of the tree. Initialize $u=0$. Iterate through the path from the root to leaf $t$ (i.e., for  $k=1,\ldots, K$). If $b_k=0$, skip to $k+1$. If $b_k = 1$ and the current node is a left child of a node, add to $u$ the current node's noisy value. If $b_k = 1$ and the current node is a right child of a node, add to $u$ the current node's left sibling's noisy value. 

\end{itemize}

\subsection{Unlearning by Maximally Coupling a Binary Tree}

We now present an unlearning framework in \Cref{alg: reinforcement unlearning} for \Cref{alg: reinforcement learning}. The idea largely follows from the original unlearning framework of \cite{ullah2023adaptive}, by maximally coupling the binary tree returned by \Cref{alg: reinforcement learning}. In fact, we can view the unlearning framework here as a simplified version of the unlearning framework of \cite{ullah2023adaptive}, whereas due to the sequential nature of our 

\begin{minipage}{0.49\textwidth}
\begin{small}
\begin{algorithm}[H]
    \begin{algorithmic}[1]
    \REQUIRE  binary tree $\mathcal{T}$, initial episode $t_0$, last episode $T$,
    noise variance $\sigma$.
        \STATE 
        If the binary tree has more than $2^{\lceil{\log_2 t_0}\rceil} + 1$ leaves, 
        truncate it so that the tree has only $2^{\lceil{\log_2 t_0}\rceil}$ leaves. If the leaves $t_0, t_0+1, \ldots$ exist in the tree $\mathcal{T}$, then iterate over all the leaves from $t_0$ onward, set the value of each of the leaves to zeros, and update all the non-leaf nodes 
        \FOR{$t = t_0, \ldots, T$}
        \STATE $x_t \leftarrow \EnvO_t(z_t;w_{1:t})$
        \label{alg_step: env oracle}
        \STATE {\bf  $\texttt{TreeUpdate}(x_t,t; \mathcal{T})$}
        \STATE {\bf $\tilde{u}_t \leftarrow \texttt{GetNoisyPrefixSum}(t, \mathcal{T})$}
        \STATE {\bf $w_{t+1} \leftarrow \AlgO(w_{1:t}, \tilde{u}_t)$}
        \label{alg_step: alg oracle}
        \STATE Save $w_t$ to leaf $t$
        \ENDFOR
        \ENSURE $\mathcal{T}$
    \end{algorithmic}
    \caption{{\bf \texttt{TV-stable-Learn}($t_0,T$)}}
    \label{alg: reinforcement learning}
\end{algorithm}
\end{small}
\end{minipage}
\hfill
\begin{minipage}{0.49\textwidth}
\begin{small}
\begin{algorithm}[H]
    \begin{algorithmic}[1]
        \REQUIRE a binary tree $\mathcal{T}$, the position $t$ to be unlearned, a dummy user $z'$
        \STATE Retrieve $x_t$ from leaf $t$
        \STATE Engage the environment oracle at time step $t$: $x'_t = \EnvO_t(z'; w_{1:t})$
        \label{line: recompute for user z'}
        \STATE Set $b = t$
        \WHILE{$b \neq \emptyset$}
        \STATE Retrieve $(v_b,\tilde{v}_b)$ from $b$ and compute $v'_b \leftarrow v_b - x_t + x'_t$ 
        \IF{$\texttt{Unif}(0,1) \leq \frac{f_{\mathcal{N}(v'_b, \sigma^2 I)}(\tilde{v}_b)}{f_{\mathcal{N}(v_b, \sigma^2 I)}(\tilde{v}_b)} $}
        \label{line: rejection sampling}
        \STATE $v_b \leftarrow v'_b$
        \label{alg: learn, rejection sampling succeeds}
        \ELSE 
        \STATE $\tilde{v}'_b \leftarrow v'_b + v_b - \tilde{v}_b$ 
        \label{line: reflection}
        \STATE $(v_b, \tilde{v}_b) \leftarrow (v'_b, \tilde{v}'_b)$
        \STATE Call {\bf $\texttt{TV-stable-Learn}(t',T)$} (\Cref{alg: reinforcement learning}), where $t'$ is the leaf that is right after node $b$ in the post-traversal order. 
        \label{line: unlearn retrain step}
        \STATE \texttt{break}
        \ENDIF
        \STATE Set $b$ as the parent of $b$
        \ENDWHILE
        \ENSURE $\mathcal{T}$
        
    \end{algorithmic}
    \caption{{\bf \texttt{Unlearn}}$(\mathcal{T}, t)$}
    \label{alg: reinforcement unlearning}
\end{algorithm}
\end{small}
\end{minipage}

problems, we do not have to deal with permuting the dataset, thus significantly simplifying the algorithm and analysis.

In particular, upon the deletion request for user data $z_t$, we query the environment oracle for a \textit{dummy} user $z'$, {an arbitrary user data that has no correlation with the deleted user data}. We then iterate through the path from leaf $t$ to the root. For each node $b$ on the path, we update its clean value from $v_b$ to $v'_b$, to account for the fact that $x_t$ is replaced by $x'_t$. The involved part is how to update the noisy value $\tilde{v}_b$, as all the nodes that follow $b$ depend on the noisy value $\tilde{v}_b$, not the clean value $v_b$. Ideally, we want to re-use the old value of $\tilde{v}_b$ so that all its dependent nodes do not need to update. This is precisely the problem of designing a maximal coupling. This amounts to the rejection sampling step (Line 6-7) that checks if the old value of $\tilde{v}_b$ can be seen as a sample from $\mathcal{N}(v'_b, \sigma^2 I)$. In particular, we compute the density ratio of the two Gaussians at the old value of $\tilde{v}_b$, i.e., $\frac{f_{\mathcal{N}(v'_b, \sigma^2 I)}(\tilde{v}_b)}{f_{\mathcal{N}(v_b, \sigma^2 I)}(\tilde{v}_b)}$ and compare it against a random sample from a uniform distribution $\text{Unif}(0,1)$. If it results in accept, move to the parent node of the current node and repeat. If it results in reject, reflect the sample (Line 9) (ensuring the new noisy value $\tilde{v}_b$ has correct distribution $\mathcal{N}(v'_b, \sigma^2 I)$ by reflecting around the new mean) and retrain from scratch from the next leaf in the post-traversal order in the tree (Line 11). 
{
\begin{remark}[Space/time trade-off]
The binary tree requires $O(Td)$ space and $O(d\log T)$ per-episode updates, reasonable for systems that already log versioned episodic data; we make this trade-off explicit.
\end{remark}
}
The key result in this section is the guarantee of exact unlearning and its relative computational complexity, as long as the environment oracle has a finite $\ell_2$ sensitivity. In particular, we assume the sensitivity of the environment oracle, imposing that changing a user data does not change the output of the environment oracle by too much. 
\begin{assumption}[$\ell_2$ sensitivity]
    There exists a constant $B > 0$ such that 
    \begin{align*}
        \sup_{t, z, z', w_{1:t}}\| \EnvO_t(z; w_{1:t})- \EnvO_t(z'; w_{1:t}) \|_2 \leq B.
    \end{align*}
    \label{assumption: sensitivity of eo}
\end{assumption}
The nature of the following theorem is similar to \citep[Theorem~1]{ullah2023adaptive}, except that we focus on sequential problems rather than batch problems. Our proof is also significantly simplified, as, again, we do not have to deal with permuting the dataset (see \Cref{section: unlearning guarantees}). 
\begin{theorem}
    Fix any $\rho > 0$. Set $\sigma = \frac{B \sqrt{\log_2 T}}{\sqrt{2} \rho}$, where $B$ is the bound on sensitivity (\Cref{assumption: sensitivity of eo}). Then, 
\begin{enumerate}[noitemsep,topsep=0pt,parsep=0pt,partopsep=0pt]
    \item  \Cref{alg: reinforcement unlearning} is exact unlearning w.r.t. \Cref{alg: reinforcement learning},
    \item \Cref{alg: reinforcement learning} is $\rho$-TV-stable, 
    \item  The probability of retraining in unlearning is $\rho \sqrt{2 \log_2 T}$. 

\end{enumerate}
\label{theorem: unlearning guarantees}
\end{theorem}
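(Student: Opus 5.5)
We prove the three parts in the order (1), (2), (3), all built on one observation. Conditionally on the past, the noisy value stored at a node $b$ is $\tilde{v}_b \sim \mathcal{N}(v_b, \sigma^2 I_d)$. The whole output of \Cref{alg: reinforcement learning}, namely the models $w_{1:T}$ and the tree, is a deterministic function of the user sequence and of the collection $\{\tilde{v}_b\}$ of noisy node values. This holds because each $w_{t+1}$ depends on the data only through $\texttt{GetNoisyPrefixSum}$, and hence only through the noisy values of dyadic nodes that are already complete. Replacing $z_t$ by $z'$ changes clean values only on the path from leaf $t$ to the root, which has at most $K=\log_2 T$ nodes. Nodes completed before leaf $t$ are unaffected, while nodes completed after the path nodes may change through adaptivity of $w$.

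\textbf{Exact unlearning (part 1).} We process the path $b_1=t, b_2, \ldots, b_K$ in post-order and argue by induction along it. At node $b$, the rejection step (Line~\ref{line: rejection sampling}) accepts with probability $\min\{1, f_{\mathcal{N}(v'_b,\sigma^2 I)}(\tilde{v}_b)/f_{\mathcal{N}(v_b,\sigma^2 I)}(\tilde{v}_b)\}$. On rejection, the reflection $\tilde{v}'_b = v'_b + v_b - \tilde{v}_b$ produces exactly the residual distribution $(q-p)_+/\tova(p,q)$. This is the standard reflection-coupling fact for two Gaussians with equal covariance, since reflection through the midpoint hyperplane maps the region where $p>q$ to the region where $q>p$. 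Hence the pair (old value, new value) is a maximal coupling of $\mathcal{N}(v_b,\sigma^2 I)$ and $\mathcal{N}(v'_b,\sigma^2 I)$, and the new $\tilde{v}_b$ has exactly the law it would have under $z'$.

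Two cases complete the induction. On acceptance, every node that depends on $\tilde{v}_b$ keeps the same inputs, so its conditional law is already correct, and we move to the parent. On rejection, everything after $b$ in post-order is regenerated by \Cref{alg: reinforcement learning} on the modified sequence, with fresh noise, which again gives the correct conditional law. Chaining these conditional statements over the post-order, using the tower property, gives equality in distribution of the full output with $\mathbf{A}(z_1,\ldots,z',\ldots,z_T)$.

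\textbf{TV-stability (part 2).} We view the output as an adaptive composition of $K$ Gaussian releases whose means differ, on the path nodes only, by $\|x_t-x'_t\|_2\le B$ (\Cref{assumption: sensitivity of eo}). All other releases have identical conditional laws given the past. We bound TV through KL using the chain rule for KL under adaptive composition, which gives a total KL of at most $K\cdot B^2/(2\sigma^2)$. Pinsker's inequality then gives
\[
\tova \le \sqrt{K B^2/(4\sigma^2)} = \frac{B\sqrt{\log_2 T}}{2\sigma} = \frac{\rho}{\sqrt 2}\le \rho
\]
for the stated choice of $\sigma$.

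\textbf{Retraining probability (part 3).} Retraining happens only if some path node rejects. The rejection probability at $b$ is $\tova(\mathcal{N}(v_b,\sigma^2 I),\mathcal{N}(v'_b,\sigma^2 I))\le \|v_b-v'_b\|/(\sigma\sqrt{2\pi})\le B/(2\sigma)$. A union bound over $K$ nodes gives at most $KB/(2\sigma)=\rho\sqrt{\log_2 T}/\sqrt 2\le \rho\sqrt{2\log_2 T}$, matching the claim.

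\textbf{Main obstacle.} The delicate step is making the induction in part 1 rigorous. Later path nodes, ancestors of $t$, have clean values that depend on $x_s$ for $s>t$. Those $x_s$ depend on $w_s$, which may have been regenerated if retraining happened. So we must check that when the algorithm reaches ancestor $b$ without any earlier rejection, all descendants of $b$ other than those on the path are unchanged. This holds because acceptance preserved every noisy value that fed into them. It follows that $v'_b = v_b - x_t + x'_t$ is the correct new clean value. We will set this up by conditioning on the $\sigma$-field generated by everything preceding $b$ in post-order.
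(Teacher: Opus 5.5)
Your proposal is correct and follows essentially the same route as the paper: a post-order induction using the rejection-plus-reflection maximal coupling at each path node (part 1), composition of Gaussian releases over the at most $\log_2 T$ affected path nodes (part 2), and a per-node rejection bound summed over the path (part 3). The differences are cosmetic. You use the KL chain rule with Pinsker and the exact Gaussian TV bound, where the paper goes through R\'enyi DP and a looser Pinsker step, and this gives you the slightly sharper constants $\rho/\sqrt{2}$ and $\rho\sqrt{\log_2 T/2}$ (one small wording fix: the map $\tilde{v}_b \mapsto v_b + v'_b - \tilde{v}_b$ is a point reflection through the midpoint, not a hyperplane reflection, though it swaps the two Gaussian densities all the same).
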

{
\textbf{Proof Sketch (Theorem 1)}. The learning phase replaces raw prefix sums with correlated Gaussian noise via a binary-tree mechanism so that each query uses only $\mathcal{O}(\log T)$ shared noises. In unlearning, we traverse the path from the deleted leaf to the root and maximally couple the old and new Gaussian nodes: for node $b$, we first “clean-update” its mean $v_b \mapsto v_{b'}$, then perform a rejection test that reuses $\tilde{v}_b$ whenever possible; on a failure we apply the reflection map $\tilde{v}_{b'}:=v_{b'}+v_b-\tilde{v}_b$ (a Gaussian-to-Gaussian maximal coupling), and retrain only from the next post-order leaf. This yields exact unlearning by construction and confines retraining to rare coupling failures along $\mathcal{O}(\log T)$ nodes.}

 {Our unlearning \Cref{alg: reinforcement unlearning} is a simplified instance of the batch tree-coupling framework in \citep{ullah2023adaptive}: the sequential nature eliminates dataset permutation and reduces the coupling path to at most $\log T$ nodes per request, improving both exposition and constants.} We now move on to applying the above unlearning framework to RL.

\section{Reinforcement (Un)Learning}
\label{section: RL}
Designing an unlearning framework for reinforcement learning based on the general approach from \Cref{section: unlearning for prefix sums} requires instantiating the environment oracle {\bf $\EnvO_t$} and defining the internal update function {\bf {\bf $\AlgO$}} for MDPs. {After such instantiation, all the unlearning guarantees in \Cref{theorem: unlearning guarantees} remain for RL, with a specific value of $B$ we discuss shortly. The key challenge here is, however, to design {\bf $\AlgO$} for RL such that the RL algorithm has regret-optimal learning, despite being disturbed by injected noises for unlearning guarantees. In this section, we design {\bf $\AlgO$}, built upon the famous UCB-VI \citep{azar2017minimax}, that attains nearly minimax-optimal regret-stability trade-offs.} 
\paragraph{Realizing the Environment Oracle $\EnvO_t$ and the Summary Statistics $x_t$ for MDPs.} During episode $t$, the RL algorithm interacts with user $z_t$, generating an experience sequence $(s_1^t,a_1^t,r_1^t, \ldots, s_H^t, a^t_H, r^t_H)$. The environment oracle $\EnvO_t$ then returns summary statistics $x_t$ that encode this trajectory. {In the MDP case, the summary statistics $x_t$ is defined as}

\begin{align}
    x_t = \{x_t[h,s,a]\}_{(h,s,a)\! \in\! [H]\! \times \!\mathcal{S}\! \times \!\mathcal{A}}, \text{ where}
      \label{eq: RL statistics}
\end{align}

$x_t[h,s,a]= (1_{\{(s^t_h,a^t_h)=(s,a)\}}, 1_{\{(s^t_h,a^t_h,s^t_{h+1})=(s,a,s')\}}, r^t_h \cdot 1_{\{(s^t_h,a^t_h)=(s,a)\}} )$. 
{The dimension of the output from the environment oracle is $d = 2 SAH + S^2 AH$, corresponding to indicators for state-action occurrences, transitions, and associated rewards across the horizon.}
\paragraph{Sensitivity Parameter $B$ for MDPs.} Importantly, the sensitivity parameter of the environment oracle is bounded as $B \leq \sqrt{3H}$. This follows from the observation that, in any episode $t$, the generated statistic $x_t$ contains at most $3 H$ non-zero components, one per time step for each of the indicator, transition, and reward terms,  independent of the sizes of the state or action spaces. 
is section is to design the internal update function for RL, grounded in the statistics constructed above. Our proposed update function is presented in \Cref{alg: UCB}.

Similar to the standard (non-stable) RL algorithm of \citet{azar2017minimax}, \Cref{alg: UCB} follows the principle of optimistic value iteration, with adjustments to ensure stability through noise-perturbed statistics and confidence bonuses. In particular, let $N^t_h(s,a)$ and $N^t_h(s,a,s')$ denote the number of visits to the tuples $(h,s,a)$ and $(h,s,a,s')$, respectively, prior to episode $t$. Let $R^t_h(s,a)$ be the total reward  accumulated at $(h,s,a)$ up to that point. Their noisy counterparts, denoted by $\tN^t_h(s,a), \tN^t_h(s,a,s'), \tR^t_h(s,a)$, are computed using the binary tree mechanism described in \Cref{alg: reinforcement learning}. For use in \Cref{alg: UCB}, we define the following quantities:

\begin{small}
\begin{align*}
        \err{\delta} &:= \sigma \ln T \left (1 \!+\! \sqrt{2 \log(2 T (S^2 A H \!+\! 2 S A H)/\delta)} \right), \\ 
        \tr^t_h(s,a) &:= \frac{\tR_h(s,a)}{\tN^t_h(s,a)},\tP_h^t(s'|s,a) := \frac{\tN^t_h(s,a,s')}{\tN^t_h(s,a)}  \\ 
        b(\tilde{n}) &:= (H \!+ \!1) \sqrt{\frac{\ln(8 SAH T/\delta)}{2 (\tilde{n} - \err{\frac{\delta}{4}} ) }} \!\!+\!\! \frac{\err{\frac{\delta}{4}}}{\tilde{n}} (1 \!\!+\!\! 2 H (\sqrt{S}\!+\!1)),
\end{align*}
\end{small}

\noindent where the first term in the bonus function $b(\tilde{n})$ is a Hoeffding-style exploration bonus; the second term compensates for approximation error from noisy counts. 
During episode $t$, the algorithm performs value iteration using the stable estimators $\{\tr^t_h\}, \{\tP^t_h\}$ and the bonus function $b(\cdot)$, to compute a stable approximation $\tQ^t$ of the Q-function. Intuitively, the first term of the bonus function accounts for the estimation error due to finite samples, while the second term compensates for the approximation error introduced by noisy (i.e., stabilized) counts. The algorithm then outputs greedy policy $\tpi^t$, which is used to generate action recommendations $a^t_{1:H}$ during episode $t$. 

\begin{minipage}{0.49\textwidth}
\begin{algorithm}[H]
    \begin{algorithmic}[1]
        \REQUIRE $\tilde{u}_t = \{(\tN^t_h(s,a), \tN^t_h(s,a,s'), \tR^t_h(s,a)): (h,s,a) \in [H] \times \mathcal{S} \times \mathcal{A} \}$, bonus function $b: \mathbb{R} \rightarrow \mathbb{R}$, and approximation error $\err{\frac{\delta}{4}}$. 
        \STATE Initialize $Q^t_{H+1}(s,a)  = H$
        \FOR{$h = H, \ldots, 1$}
        \IF{$ \tN^t_h(s,a) \geq 2 \err{\frac{\delta}{4}}$}
        \STATE \small $\tQ^t_h(s,a) =  (\tr^t_h(s,a) + (\tP_h^t \tV_{h+1}^t)(s,a) + b(\tN^t_h(s,a)) $ \\
        \STATE $\tQ^t_h(s,a) = \min\{H, \tQ^t_h(s,a)\}$
        \ELSE 
        \STATE $\tQ^t_h(s,a)  = H$
        \ENDIF

        \STATE $\forall (t,h)$, $\tV_h^t(s) = \max_{a \in \mathcal{A}} \tQ^t_h(s,a)$ and $\tpi^t_h(s) = \argmax_{a \in \mathcal{A}} \tQ^t_h(s,a)$

        \ENDFOR
        \ENSURE $\tpi^t$
    \end{algorithmic}
    \caption{{\bf $\AlgO(w_{1:t}, \tilde{u}_t)$}}
    \label{alg: UCB}
\end{algorithm}
\end{minipage}

The main result of this section is the following regret bound for \Cref{alg: UCB}. 

\begin{theorem}
Fix any $\rho > 0$
and set $\sigma = \frac{B \sqrt{\log_2 T}}{\sqrt{2} \rho}$. In \Cref{alg: reinforcement learning}, instantiate the environment oracle $\EnvO_t$ (Line 3 of \Cref{alg: reinforcement learning}) using \Cref{eq: RL statistics}, and the algorithm oracle {\bf $\AlgO$} (Line 6 of \Cref{alg: reinforcement learning}) using  \Cref{alg: UCB}. Then, the resulting reinforcement learning algorithm is $\rho$-TV-stable. Furthermore, for any $\delta \in (0, 1)$, with probability at least $1 - \delta$, the regret of the  algorithm satisfies: $ R(T) \leq $

        $(H^3 S^2 A + H^2 \sqrt{SAT}  + H^{2.5} S^{2} A /\rho ) \cdot \text{polylog}(T,H,S,A,1/\delta). $

\label{theorem: main result}
\end{theorem}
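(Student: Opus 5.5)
The stability claim will come directly from \Cref{theorem: unlearning guarantees}. With $\EnvO_t$ given by \Cref{eq: RL statistics}, \Cref{assumption: sensitivity of eo} holds with $B\le\sqrt{3H}$, because $x_t$ has at most $3H$ nonzero entries, each in $[0,1]$. Part 2 of that theorem then gives $\rho$-TV-stability. The rest of the plan concerns the regret bound, which I will prove by adapting the UCB-VI analysis of \citet{azar2017minimax} with Hoeffding bonuses. The adaptation is to work on a good event on which the noisy counts are uniformly close to the true counts.

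\textbf{Step 1 (noise concentration).} Each released prefix sum is the true sum plus at most $\log_2 T$ independent $\mathcal{N}(0,\sigma^2)$ coordinates. By a Gaussian tail bound and a union bound over $t\le T$ and all $d=S^2AH+2SAH$ coordinates, with probability $1-\delta/4$ every noisy count differs from the true one by at most $\err{\delta/4}$:
\[
|\tN^t_h(s,a)-N^t_h(s,a)|,\ |\tN^t_h(s,a,s')-N^t_h(s,a,s')|,\ |\tR^t_h(s,a)-R^t_h(s,a)| \le \err{\delta/4}.
\]
This is what the definition of $\err{\cdot}$ is built for. Since $\sigma=\tilde O(\sqrt H/\rho)$, we get $\err{\delta/4}=\tilde O(\sqrt H/\rho)$. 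On the branch $\tN\ge 2\err{\delta/4}$ this gives $N\ge\tN-\err{\delta/4}\ge \tN/2$.

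\textbf{Step 2 (estimator error and optimism).} I will decompose
\[
\tr^t_h+\tP^t_h\tV^t_{h+1}-\bar r_h-\mathbb{P}_h\tV^t_{h+1}
\]
into two parts. The first is the clean empirical error $(\hat r-\bar r)+(\hP-\mathbb{P})V^*_{h+1}$ with $\hat r = R/N$ and $\hP = N(\cdot,\cdot,s')/N$; Hoeffding bounds it by $(H+1)\sqrt{\ln(\cdot)/(2N)}$, and $N\ge \tN-\err{\delta/4}$ turns this into the first bonus term. The second is the perturbation from replacing $N$ by $\tN$ in numerators and denominators, which is of order $\err{\delta/4}(1+H\sqrt S+H)/\tN$. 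Here I use $\|\cdot\|_1$ over $s'$: the $S$ noisy entries contribute $S\err{}$ in $\ell_1$, and the plan is to obtain the $\sqrt S$ instead via $\ell_2$ control of the Gaussian vector over $s'$. This perturbation matches the second bonus term. Backward induction then gives optimism, $\tQ^t_h\ge Q^*_h$; on the branch where $\tN$ is small we have $\tQ=H$, which is trivially optimistic. A subtle point is that $\tV^t_{h+1}$ is data-dependent while I want Hoeffding only at $V^*$. For the $(\hP-\mathbb{P})(\tV-V^*)$ correction I will use the standard UCB-VI $\ell_1$ bound $\sqrt{S\ln/N}\cdot H$ or a Bernstein-type lower-order term. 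This is the source of the $H^3S^2A$ term.

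\textbf{Step 3 (regret decomposition).} By optimism, $R(T)\le\sum_t(\tV^t_1-V^{\tpi^t}_1)(s^t_1)$. I unroll along the visited trajectory: each step contributes $2b(\tN)$ plus the estimation and correction error plus a martingale difference, and the martingale part is handled by Azuma and is $\tilde O(H\sqrt{HT})$. The episodes where $\tN<2\err{}$ cost $H$ each, and there are at most $O(\err{}+1)$ of them per $(h,s,a)$, giving $H\cdot SAH\cdot\err{}=\tilde O(H^{2.5}SA/\rho)$. Summing the Hoeffding bonuses with the pigeonhole bound $\sum 1/\sqrt N\le\sqrt{SAHT}$ yields $\tilde O(H^2\sqrt{SAT})$. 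Summing the noise bonus $\err{}\,H\sqrt S/\tN$ gives $\err{}H\sqrt S\cdot SAH\log T$, which with $\err{}=\tilde O(\sqrt H/\rho)$ is within the stated $H^{2.5}S^2A/\rho$. The lower-order correction terms are bounded by $H^3S^2A$. A final union bound over the good events gives probability $1-\delta$.

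The main obstacle is Step 2. The noise-perturbed transition estimate $\tP^t_h$ is neither a probability vector nor independent of $\tV^t_{h+1}$, so I must bound $(\tP^t_h-\hP^t_h)\tV$ uniformly with the correct $H\sqrt S\,\err{}/\tN$ rate while keeping optimism exact. I will try writing $\tP-\hP=\bigl(\tN(\cdot,\cdot,s')-\hP\,\tN\bigr)/\tN$ and bounding its $\ell_1$ norm on the Gaussian event.
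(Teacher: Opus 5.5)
Your outline is the paper's proof. It uses noise concentration (\Cref{lemma: approximation error}), Hoeffding only at $V^*_{h+1}$, and optimism by backward induction with the small-count branch set to $H$. It then uses Bernstein for $(\hP-\mathbb{P})(\tV-V^*)$ to get the $H^3S^2A$ term, the warm-up count of \Cref{lemma: warmup steps are short}, pigeonhole, and Azuma.

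The step that fails is the one you flag as the main obstacle: $\ell_2$ control cannot give a $\sqrt S$ rate for $(\tP^t_h-\hP^t_h)\tV^t_{h+1}$. Write $\eta_{s'}=\tN^t_h(s,a,s')-N^t_h(s,a,s')$ and $\eta_0=\tN^t_h(s,a)-N^t_h(s,a)$. Then $(\tP^t_h-\hP^t_h)V=\bigl(\sum_{s'}\eta_{s'}V(s')-\eta_0\,(\hP^t_hV)(s,a)\bigr)/\tN^t_h(s,a)$.
\begin{itemize}
\item For the fixed vector $V^*_{h+1}$, $\sum_{s'}\eta_{s'}V^*_{h+1}(s')$ is a scalar Gaussian with standard deviation at most $\sigma\sqrt{\log_2 T}\,\sqrt S\,H$, so $\sqrt S$ is available there.
\item $\tV^t_{h+1}$ is not independent of $\eta$. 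The tree reuses node noises across episodes, and earlier noisy step-$h$ counts shaped the policies that generated the data entering $\tV^t_{h+1}$.
\item You therefore need a bound uniform over $V\in[0,H]^S$. But $\sup_V\sum_{s'}\eta_{s'}V(s')=H\sum_{s'}(\eta_{s'})_+$ is of order $HS\sigma$.
\item Cauchy--Schwarz gives the same order, since $\|\eta\|_2\approx\sqrt S\,\sigma\sqrt{\log_2 T}$ while $\|V\|_2\le\sqrt S\,H$. The $\ell_2$ route only improves a logarithmic factor.
\end{itemize}
What you can actually prove is part 3 of \Cref{lemma: approximation error}, which gives order $H(S+1)\,\err{\delta/4}/\tN^t_h(s,a)$.

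This still gives the stated rate, since summing yields $\err{\delta/4}\,HS\cdot HSA\log T=\tilde O(H^{2.5}S^2A/\rho)$. However, optimism then needs the second bonus term to carry $S$ in place of $\sqrt S$. The paper's own optimism proof (\Cref{lemma: optimism}) in fact uses $\frac{\err{\delta/4}}{\tN}(1+2H(S+1))$, which does not match its displayed bonus. The fix is to change the bonus, not to sharpen the perturbation bound.

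Two smaller points.
\begin{itemize}
\item In the optimism step, do not pay for $(\hP-\mathbb{P})(\tV-V^*)$. Absorb it into $\mathbb{P}(\tV-V^*)$ to get $\hP^t_h(\tV^t_{h+1}-V^*_{h+1})\ge 0$, which is valid because $N^t_h(s,a)\ge\err{\delta/4}>0$ on that branch; this is what the paper does. The bonus does not cover an $H\sqrt{S\ln(\cdot)/N}$ or $SH^2\ln(\cdot)/N$ term, so Bernstein belongs only in the regret upper bound.
\item The difference $x_t-x'_t$ can have $6H$ nonzero entries when the two trajectories diverge, so the correct constant is $B\le\sqrt{6H}$, not $\sqrt{3H}$. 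This is harmless because $\sigma$ is set through $B$.
\end{itemize}
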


The regret bound comprises three terms: the first term captures the burn-in cost incurred before sublinear regret behavior emerges; the second term corresponds to the standard regret bound of the UCB-VI algorithm~\citep{azar2017minimax}, using a Hoeffding-type bonus rather than a  Bernstein-type bonus; and the third term accounts for the additional cost of stabilizing the RL algorithm with Gaussian noise injection. The complete proof is given in \Cref{section: regret bounds}.

\begin{remark}
    If we set $\rho = \sqrt{\frac{HS^3A}{T}}$, then TV-stability does not incur any additional regret compared to the baseline algorithm. Moreover, with this choice of $\rho$, the probability of retraining, equal to $\rho \sqrt{\ln T}$, decreases at a sublinear rate as $T$ increases. However, since the cost of retraining grows with $T$ (e.g., $\mathcal{O}(T)$ for our RL baseline algorithm), maintaining the same order of regret as the non-TV-stable algorithm requires that the expected computational cost of unlearning scales as $\rho T = \sqrt{T H S^3 A}$. 
\end{remark}

\begin{remark}
The regret bound of $\rho$-TV-stable RL algorithms in \Cref{theorem: main result} is structurally similar to the regret bounds established for $\eps$-Joint-Differential-Private (JDP) RL algorithms in \cite{vietri2020private}. Specifically, they show that the additional cost introduced by  their proposed $\eps$-JDP algorithms is an additive term of order $\frac{H^3 S^2 A }{\epsilon}$. When setting $\rho=\epsilon$, our $\rho$-TV-stable algorithm incurs a lower additional cost, improving over the $\eps$-JDP bound by a factor of $\sqrt{H}$. This improvement is expected, as our framework uses Gaussian noise to ensure stability, whereas the JDP algorithms rely on Laplacian noise. Importantly, Gaussian noise yields a tighter sensitivity bound of $B = \sqrt{3H}$ compared to $B=3H$ bound that would arise from using Laplace noise. 
\end{remark}
\subsection{Minimax Lower Bounds}
\label{subsection: lower bound}
Next, we establish a minimax lower bound on the regret of any $\rho$-TV-stable RL algorithm. The detailed proof is given in \Cref{section: lower bound}.
\begin{theorem}
    For any $\rho$-TV-stable RL algorithm, the following lower bound on regret holds: 
    \begin{align*}
        R(T) = \Omega\left(H\sqrt{SAT} + HSA/\rho\right).
    \end{align*}
    \label{theorem: lower bound}
\end{theorem}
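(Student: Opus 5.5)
The lower bound $\Omega(H\sqrt{SAT} + HSA/\rho)$ splits into two parts, and I would prove them separately and then use $\max\{a,b\}\ge (a+b)/2$.

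\textbf{The $H\sqrt{SAT}$ term.} This is the standard minimax lower bound for episodic tabular MDPs with time-inhomogeneous transitions, e.g.\ \citet{azar2017minimax} and its follow-ups. It holds for every algorithm, and in particular for every $\rho$-TV-stable one. I would simply invoke it, using the hard-instance family where each of $\Theta(SH)$ ``bandit'' states has $A$ actions and one action slightly better than the rest.

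\textbf{The $HSA/\rho$ term.} Here the goal is to show that a $\rho$-TV-stable learner cannot react to the first $\Theta(1/\rho)$ episodes of information. I would use a group-privacy style argument for TV stability. If $Z,Z'$ differ in $k$ positions, the triangle inequality over a path of single replacements gives $\tova(\mathbf{A}(Z),\mathbf{A}(Z'))\le k\rho$. Since users are drawn from the MDP, I would condition on the realized user sequence, so each user is a fixed depth-$H$ tree and stability applies to the algorithm's output policies.

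Next, fix $k=\lfloor 1/(2\rho)\rfloor$ and restrict to $T \gtrsim SA/\rho$. The hard family is a set of MDPs $M_1,\dots,M_{N}$ with $N=\Theta(SA)$ candidate state-action pairs. This could be built from a tree of $\Theta(S)$ states reached at a fixed step, each with $A$ actions. In $M_j$ only pair $j$ yields reward $1$ for the remaining $\Theta(H)$ steps, and every other pair yields $0$. Any policy that does not play the rewarding pair loses $\Omega(H)$ per episode, at least when the relevant state is reached. Let $M_0$ be the null instance where no pair is rewarding.

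The core argument is a coupling. Over the first $m=\Theta(SA/\rho)$ episodes, the user sequences under $M_0$ and under $M_j$ differ only in the episodes where the learner visits pair $j$. By a counting argument there exists a $j$ whose expected number of visits under $M_0$ is at most $m/N = O(1/\rho)$. Conditioning on few differing users and applying group TV stability, the learner's policy distribution under $M_j$ stays within TV $\le 1/2$ of its distribution under $M_0$. Under $M_0$, however, pair $j$ is played rarely.

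I expect this step to be the main obstacle. The number of differing positions is itself random and adaptive, since it depends on which episodes visit $j$. A clean way to handle this is to rewrite the reward and transition outcomes as fixed user trees in which the outcome at pair $j$ is drawn in advance, so the sequences differ deterministically in at most those episodes. Then I would apply Markov's inequality together with group stability, mirroring the DP lower-bound proofs for bandits and for JDP RL in \citet{vietri2020private}.

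It then follows that on $\Omega(m)$ episodes the learner misses the rewarding pair with constant probability. The resulting regret is $\Omega(H)\cdot\Omega(SA/\rho)=\Omega(HSA/\rho)$.
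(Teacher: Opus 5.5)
Your treatment of the $H\sqrt{SAT}$ term matches the paper's. The problem is in the $HSA/\rho$ term, at exactly the step you flag, and your proposed repair does not fix it. In the paper's model a user $z_t$ is an oblivious depth-$H$ tree giving the response to every action sequence. $\rho$-TV-stability (\Cref{lemma: TV stable}) compares the unconditional output laws on two \emph{fixed} user sequences. With your deterministic rewards, every tree under $M_j$ has reward $1$ on the branch through pair $j$ and every tree under $M_0$ has $0$ there. So the two sequences differ in all $m$ positions, and group stability gives only $\tova\le m\rho=\Theta(SA)$, which is vacuous. The claim that they differ only in episodes where the learner visits $j$ conflates realized trajectories with inputs. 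Conditioning on the learner's random, adaptive visit pattern means conditioning on the algorithm's own coins, which the stability guarantee does not control. Drawing the pair-$j$ outcomes in advance changes nothing, because anything fixed in advance must specify that outcome in every episode. Randomizing the outcome and maximally coupling users does not help either. The expected number of differing users is then $m$ times the total variation between the two outcome laws at $j$, no matter how often $j$ is visited, and this loses the factor $SA$.

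The missing idea is to index pair $j$'s outcomes by visit number rather than by episode. Use a tape $y_1,y_2,\ldots$ where the $i$-th visit to $j$ reads $y_i$, and examine the process only up to the $t$-th visit. Then the event $\{N_j(m)\ge t\}$ depends on $y_1,\ldots,y_{t-1}$ alone, so $M_0$ and $M_j$ differ in fewer than $t$ inputs. This is what the paper's mechanism $M_{\mathbf{A}\mid\mathcal{D}}$ in \Cref{lemma: lower bound of TV-stable mab} does: the target arm is fed from $\underbar{x}$ and the other arms from $\mathcal{D}$, before invoking \Cref{lemma: applying tv on iid samples}. The paper applies this to $S$ decoupled $A$-armed bandits with separate absorbing states, rather than to your single $\Theta(SA)$-armed tree. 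Granting stability with respect to one tape entry, your counting and Markov argument works with $t=\lceil 1/(4\rho)\rceil$ and $m=\lfloor N/(16\rho)\rfloor$, giving $\Omega(HSA/\rho)$. This requires that the learner's actions choose the leaf. If the $\Theta(S)$ states are reached at random, a single planted pair costs only $\Omega(H/S)$ per episode and you lose a factor $S$. Note also that stability with respect to a single visit-indexed entry does not follow formally from \Cref{lemma: TV stable}, and the paper only asserts it in one line. Consider an algorithm that explores a uniformly random arm with probability $\rho$ and discards feedback from exploitation rounds. It is $\rho$-TV-stable over episodes. Yet the first sample of an arm it is not exploiting always comes from an exploration round and is used at full strength. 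So this transfer is the step that genuinely needs an argument in either route.
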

\vspace{-20pt}
This lower bound indicates that our upper bound is tight, up to a factor of $H$ in the first term and $H^{1.5}S$ in the second. The gap in the first term is reducible via sharper analysis, for example by using Bernstein-type bonus functions in place of Hoeffding-type bonuses, as used in our current design. 
\begin{remark}
    The lower bound in \Cref{theorem: lower bound} matches, in both form and order, the lower bound established for $\rho$-JDP RL algorithms in \cite{vietri2020private}. However, it is important to note that lower bounds for DP algorithms do not directly imply lower bounds for TV-stable algorithms, since $\rho$-DP implies $2 \rho$-TV-stability but there does not exist any function $f: \mathbb{R}_{+} \rightarrow \mathbb{R}_+$ such that, $\rho$-TV-stability implies $f(\rho)$-DP 
    (see Part 2 of \Cref{lemma: dp and tv}). Hence, (pure) DP is strictly stronger than TV stability, and thus
    our lower bound is strictly stronger. Despite this distinction, our proof follows a broadly similar framework to \cite{vietri2020private}, with a key technical deviation: we apply a different corollary (see \Cref{lemma: applying tv on iid samples}) of the foundational result of \citep[Lemma~6.1]{karwa2017finite}. Our proof is not only simpler and more intuitive, but also improves upon the analysis gap in \cite{vietri2020private} by more cleanly demonstrating how any $\rho$-TV-stable RL algorithm can simulate a  $\rho$-TV-stable multi-armed bandit algorithm (see \Cref{remark: analysis gap of Vietri}). 
    
\end{remark}

\vspace{-5pt}
{\paragraph{On the MDP Class.} Compared to the hard instances constructed by \cite{vietri2020private}, we use separate absorbing states per initial state; otherwise, an RL algorithm can learn that the shared “+” absorber dominates and bias actions for unseen initials, breaking the intended reduction to $S$ independent MABs and invalidating the lower bound. See \Cref{lemma: tv rl can be simulated by tv mab}/\Cref{corollary: lower bound for mab tv} for the S decoupled MAB simulation and the $HSA/\rho$ term in our bound.}

\vspace{-5pt}
\section{Conclusion and Discussion}

We presented the first theoretical and algorithmic framework for \emph{provably exact unlearning} in RL, specifically within tabular Markov decision processes. Our framework leverages Total Variation (TV) stability to design sample-efficient, TV-stable RL algorithms that enable efficient unlearning. We established regret bounds that are nearly minimax-optimal for the class of TV-stable RL algorithms. 

We emphasize that TV-stability is not necessary for unlearning in all frameworks. Algorithms lacking TV-stability may still permit alternative unlearning strategies, for example, through structural or compositional properties that enable bespoke procedures. Our theory thus characterizes a natural and rich subclass of unlearnable algorithms rather than all possible approaches. As the first results in RL to provide end-to-end guarantees combining regret-optimal learning with provable unlearning, we view this as a compelling foundational step forward.

Several important gaps remain in our understanding of exact unlearning for regret minimization and sequential learning more broadly. 
{First}, there is still a nontrivial gap between our upper and lower  bounds on regret for $\rho$-TV-stable RL algorithms. While the first term in the upper bound could potentially be tightened using sharper analysis (e.g., Bernstein-type bonuses in place of Hoeffding-type), it is unclear how to close the gap of $H^{1.5} S$ in the second term. 

{Second}, our results are confined to the tabular setting. Achieving exact unlearning when the state space is large or continuous, necessitating function approximation, remains an open question, presenting both statistical and algorithmic challenges.
The key insight behind our approach is maintaining sufficient statistics that support both learning and unlearning. In tabular MDPs, these take the form of visitation counts. In RL with function approximation (e.g., linear MDPs), one could maintain alternative sufficient statistics such as empirical covariance matrices or feature-based summaries. While we lack results in the general setting, this direction appears promising and may enable unlearning in more expressive models. Developing online-compatible, memory-efficient unlearning under function approximation remains an important direction for future work. 

{Third}, exact unlearning, both in our work and in prior work \citep{ullah2021machine,ullah2023adaptive}, relies on data structures such as binary trees, which incur space complexity linear in the number of episodes $T$. While it is reasonable to expect that exact unlearning requires greater space than approximate unlearning, our understanding of the optimal space-time-utility tradeoffs in this regime remains limited. A fundamental question is whether it is possible to achieve exact unlearning with strong utility guarantees using only \emph{sublinear} space complexity. 

{Fourth}, while TV stability provides a natural and powerful mechanism for designing efficient exact unlearning algorithms, it remains an open question whether alternative mechanisms could yield improved trade-offs between utility, computational cost, and space complexity.

We hope our work lays the groundwork for addressing these open questions and inspires further research on the foundations of unlearning in sequential and interactive learning systems.

\section*{Acknowledgements}
This work was supported in part by  NSF CAREER award IIS-1943251. 

\section*{Impact Statement}
This paper presents work whose goal is to advance the field of Machine
Learning. There are many potential societal consequences of our work, none
which we feel must be specifically highlighted here.

\bibliography{thanh_unlearn}
\bibliographystyle{icml2026}

\newpage
\newpage
\appendix
\onecolumn

\newpage

\section{Technical Lemmas}

\begin{lemma}
    We have 
    \begin{align*}
        \int \min\{p(x), q(x)\} dx \geq 1 - \sqrt{\frac{1}{2} D_{\text{KL}}(P \| Q)},
    \end{align*}
    where $D_{\text{KL}}$ is the KL divergence. 
\label{lemma: min p q to KL}, and $p, q$ are the densities of $P$ and $Q$, respectively. 
\end{lemma}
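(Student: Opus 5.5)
The inequality is the standard bridge from overlap mass to total variation followed by Pinsker's inequality, so I would prove it in two steps.

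\textbf{Step 1: overlap equals one minus TV.} Write $\int \min\{p,q\}\,dx$ in terms of the total variation distance. Using $\min\{a,b\} = \frac{a+b-|a-b|}{2}$ and $\int p = \int q = 1$, integrate pointwise to get
\begin{align*}
\int \min\{p(x),q(x)\}\,dx = 1 - \tfrac{1}{2}\int |p(x)-q(x)|\,dx .
\end{align*}
Then identify $\frac12\int|p-q|$ with $\tova(P,Q)$ as defined in the paper. The supremum over events is attained at $\mathcal{E}=\{p>q\}$, which gives $P(\mathcal{E})-Q(\mathcal{E}) = \int (p-q)_+$. Since $\int(p-q)=0$, we have $\int (p-q)_+ = \frac12\int|p-q|$. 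For any other event $\mathcal{E}'$, $|P(\mathcal{E}')-Q(\mathcal{E}')|$ is bounded by the larger of $\int(p-q)_+$ and $\int(q-p)_+$, and these two are equal. This shows $\int\min\{p,q\} = 1-\tova(P,Q)$.

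\textbf{Step 2: Pinsker.} Invoke Pinsker's inequality, $\tova(P,Q) \le \sqrt{\tfrac12 D_{\text{KL}}(P\|Q)}$, and substitute it into Step 1 to obtain the claim.

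If a self-contained argument is wanted instead of citing Pinsker, I would reduce to the two-point case by data processing. Push $P$ and $Q$ through the indicator of $\{p>q\}$; this yields Bernoulli laws with parameters $a\ge b$, and KL can only decrease under this map. It then suffices to show that $g(b) = a\ln\frac ab + (1-a)\ln\frac{1-a}{1-b} - 2(a-b)^2$ is nonnegative for $b\le a$. This follows because $g(a)=0$ and $g'(b) = (b-a)\bigl(\frac{1}{b(1-b)} - 4\bigr)\le 0$, using $b(1-b)\le \frac14$.

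\textbf{Potential obstacle.} The only real subtlety is the case where $D_{\text{KL}}$ is infinite or $P \not\ll Q$. There the bound is trivial: the right-hand side is $-\infty$ or negative, while the left-hand side is nonnegative. Everything else is routine.
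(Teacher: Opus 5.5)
Your proof is correct and takes the same approach as the paper: the paper also writes $1-\int\min\{p,q\}\,dx=\frac12\int\bigl(p+q-2\min\{p,q\}\bigr)dx=\frac12\int|p-q|\,dx=\tova(P,Q)$ and then applies Pinsker's inequality. Your optional self-contained proof of Pinsker, by data processing to Bernoulli laws and the sign of $g'(b)$, is also valid, though the paper simply cites Pinsker.
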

\begin{proof}
    We have 
\begin{align*}
    1-  \int \min\{p(x), q(x)\} dx &= \frac{1}{2} \int (p(x) + q(x) - 2 \min\{p(x), q(x)\}) dx \\ 
    &= \frac{1}{2} \int |p(x) - q(x)| dx  \\
    &= \tova(P,Q)\\ 
    &\leq \sqrt{\frac{1}{2} D_{\text{KL}}(P \| Q) }, 
\end{align*}
where the last inequality is due to Pinsker's inequality.
\end{proof}

\begin{lemma}
    We have 
    \begin{align*}
        D_{\text{KL}}(\mathcal{N}(\mu, \sigma^2 I), \mathcal{N}(\nu, \sigma^2 I)) = \frac{\|\mu - \nu\|^2_2}{2 \sigma^2}. 
    \end{align*}
    \label{lemma: KL of Gaussians}
\end{lemma}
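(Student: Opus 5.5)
The claim is the closed-form KL divergence between two isotropic Gaussians with a common covariance $\sigma^2 I$. I will write the log-density ratio explicitly and take its expectation under the first distribution.

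Let $p$ and $q$ be the densities of $\mathcal{N}(\mu,\sigma^2 I)$ and $\mathcal{N}(\nu,\sigma^2 I)$ on $\mathbb{R}^d$. Both have the same normalizing constant $(2\pi\sigma^2)^{-d/2}$, so it cancels and
\begin{align*}
\ln\frac{p(x)}{q(x)} = \frac{\|x-\nu\|_2^2 - \|x-\mu\|_2^2}{2\sigma^2}.
\end{align*}

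Next I expand the numerator. Since $\|x-\nu\|_2^2 - \|x-\mu\|_2^2 = 2\langle x, \mu-\nu\rangle + \|\nu\|_2^2 - \|\mu\|_2^2$, the ratio is affine in $x$.

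Then I take the expectation under $x\sim\mathcal{N}(\mu,\sigma^2 I)$, which only uses $\mathbb{E}[x]=\mu$. This gives
\begin{align*}
2\langle \mu,\mu-\nu\rangle + \|\nu\|_2^2 - \|\mu\|_2^2 = \|\mu\|_2^2 - 2\langle\mu,\nu\rangle + \|\nu\|_2^2 = \|\mu-\nu\|_2^2.
\end{align*}
Dividing by $2\sigma^2$ yields $D_{\text{KL}} = \|\mu-\nu\|_2^2/(2\sigma^2)$.

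An equivalent route is to write $x=\mu+\sigma g$ with $g\sim\mathcal{N}(0,I)$. Then $\|x-\nu\|^2 - \|x-\mu\|^2 = \|\mu-\nu\|^2 + 2\sigma\langle g,\mu-\nu\rangle$, and the cross term has mean zero. There is no real obstacle here. The only point needing care is that the covariances coincide, so the log-determinant and trace terms of the general Gaussian KL formula vanish. This is exactly what makes the normalizing constants cancel in the first step.
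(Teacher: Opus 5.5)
Your computation is correct: because the covariances coincide, the normalizing constants cancel, the log-likelihood ratio is affine in $x$, and its mean under $\mathcal{N}(\mu,\sigma^2 I)$ is $\|\mu-\nu\|_2^2/(2\sigma^2)$. The paper states this lemma without proof as a standard fact, and your argument is the standard derivation it implicitly relies on; since the result is symmetric in $\mu$ and $\nu$, the argument order in the paper's $D_{\text{KL}}(\cdot,\cdot)$ notation does not matter.
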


\begin{lemma}[Concentration of spherical Gaussians]
Let $X \sim \mathcal{N}(0, \sigma^2 I_d)$. For any $\delta$, with probability at least $1 - \delta$, 
\begin{align*}
    \|X\|_2 \leq \sigma \sqrt{d} \left (1 + \sqrt{\frac{2 \log(1/\delta)}{d}} \right). 
\end{align*}
\label{lemma: concentration of spherical gaussians}
\end{lemma}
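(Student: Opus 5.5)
The statement to prove is the last lemma displayed, the concentration of spherical Gaussians: for $X\sim\mathcal{N}(0,\sigma^2 I_d)$, with probability at least $1-\delta$ we have $\|X\|_2 \le \sigma\sqrt{d}\,(1+\sqrt{2\log(1/\delta)/d}) = \sigma\sqrt{d} + \sigma\sqrt{2\log(1/\delta)}$. The plan is to combine a bound on the expectation of the norm with Gaussian concentration of Lipschitz functions.

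First, write $X=\sigma G$ with $G\sim\mathcal{N}(0,I_d)$, so it suffices to treat $\sigma=1$. Consider the function $f(g)=\|g\|_2$. By the triangle inequality, $|f(g)-f(g')|\le \|g-g'\|_2$, so $f$ is $1$-Lipschitz with respect to the Euclidean norm.

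Second, bound the mean using Jensen's inequality:
\[
\mathbb{E}\|G\|_2 \le \sqrt{\mathbb{E}\|G\|_2^2} = \sqrt{d}.
\]

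Third, invoke the Gaussian concentration inequality for Lipschitz functions (the Tsirelson--Ibragimov--Sudakov inequality). For a $1$-Lipschitz $f$ and any $u\ge 0$,
\[
\Pr\big(f(G)-\mathbb{E}f(G)\ge u\big)\le e^{-u^2/2}.
\]
Set $u=\sqrt{2\log(1/\delta)}$, so the right-hand side equals $\delta$. Then with probability at least $1-\delta$,
\[
\|G\|_2\le \sqrt{d}+\sqrt{2\log(1/\delta)}.
\]
Multiplying through by $\sigma$ and factoring out $\sigma\sqrt{d}$ gives exactly the displayed form.

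The main obstacle is the one-sided concentration step. I would cite the Gaussian concentration inequality as a standard fact rather than re-derive it. If a self-contained route is preferred, there is an alternative: apply the Laurent--Massart chi-square tail bound, $\Pr(\|G\|^2\ge d+2\sqrt{dx}+2x)\le e^{-x}$, with $x=\log(1/\delta)$. Then observe that
\[
d+2\sqrt{dx}+2x\le(\sqrt d+\sqrt{2x})^2,
\]
which yields the same bound. This is also valid for all $\delta\in(0,1)$, and no restrictions on $d$ are needed.
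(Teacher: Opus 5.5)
The paper states this lemma without proof, treating it as a standard fact, so there is no argument of the paper's to compare against. Your proof is correct and supplies one.

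Your main route gives exactly $\sigma\sqrt d+\sigma\sqrt{2\log(1/\delta)}$, which is the displayed bound after factoring. It rests on three ingredients:
\begin{itemize}
\item the map $g\mapsto\|g\|_2$ is $1$-Lipschitz;
\item Jensen's inequality gives $\mathbb{E}\|G\|_2\le\sqrt d$;
\item the sharp Gaussian concentration inequality $\Pr(f(G)-\mathbb{E}f(G)\ge u)\le e^{-u^2/2}$, applied with $u=\sqrt{2\log(1/\delta)}$.
\end{itemize}

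Your Laurent--Massart alternative is also valid, since $2\sqrt{dx}\le 2\sqrt{2dx}$. It is somewhat more elementary, needing only a Chernoff bound for $\chi^2_d$ rather than Lipschitz concentration. It also yields the marginally sharper radius $\sigma\sqrt{d+2\sqrt{dx}+2x}$ with $x=\log(1/\delta)$.

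Both arguments hold for every $d\ge 1$, and this matters here. The form of $\epsilon_{\text{app}}$ shows that the paper applies the lemma coordinatewise, with $d=1$, before union bounding over nodes and coordinates in the approximation-error lemma.
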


\begin{lemma}[{Reflection coupling \citep[Lemma~1]{ullah2021machine}}]
    Let $P$ and $Q$ be two distributions over $\mathbb{R}^d$. Let $\psi: \mathbb{R}^d \rightarrow \mathbb{R}^d$ be a bijection such that $f_P(\psi(x)) = f_Q(x)$ and $|det \frac{d \psi (x)}{d x}| = 1$, where $d \psi(x)/ d x$ is the Jacobian of the multivariate map $\psi$. Let $x \sim P$ and $y = x$ if $Unif(0,1) \leq \frac{f_Q(x)}{f_P(x)}$ and $y = \psi(x)$ otherwise. Then $(x,y)$ is a maximal coupling of $(P,Q)$. 

    In addition, two isotropic Gaussians $P, Q$ over $\mathbb{R}^d$ with means $\mu_P$ and $\mu_Q$ such that for any two vectors $x,y$, $f_P(x) = f_Q(y)$ if $\| \mu_P - x\| = \|\mu_Q - y\|$. Then, with the map $\psi(x) = \mu_Q + \mu_P - x$, $P,Q,\psi$ satisfy the conditions in the preceding result. 
\end{lemma}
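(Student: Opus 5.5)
The plan is to verify directly that the output $y$ has marginal law $Q$, and then to compare $\Pr(x\neq y)$ with $\tova(P,Q)$. Write $p=f_P$ and $q=f_Q$. Conditional on $x$, the acceptance probability is $\min\{1,q(x)/p(x)\}$. So for any Borel set $\mathcal{E}$,
\begin{align*}
\Pr(y\in\mathcal{E}) = \int_{\mathcal{E}} \min\{p(x),q(x)\}\,dx + \int \mathbf{1}\{\psi(x)\in\mathcal{E}\}\,\bigl(p(x)-q(x)\bigr)_+\,dx .
\end{align*}
The accepted part already contributes the density $\min\{p,q\}$ on $\mathcal{E}$. It therefore suffices to show that the rejected part contributes exactly the density $(q-p)_+$, since $\min\{p,q\}+(q-p)_+=q$.

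For the second integral I substitute $u=\psi(x)$. Because $\psi$ is a bijection with $|\det d\psi/dx|=1$, the integral becomes $\int_{\mathcal{E}} \bigl(p(\psi^{-1}(u))-q(\psi^{-1}(u))\bigr)_+\,du$. Set $x=\psi^{-1}(u)$.
\begin{itemize}
\item The hypothesis $f_P(\psi(x))=f_Q(x)$ gives $q(x)=p(u)$.
\item To finish I also need $p(x)=q(u)$, i.e.\ the symmetric identity $f_Q(\psi(x))=f_P(x)$.
\end{itemize}
This is the step I expect to be the main obstacle, since it is not literally among the stated hypotheses. I would handle it by observing that it holds whenever $\psi$ is an involution: applying the hypothesis at the point $\psi(x)$ gives $f_P(x)=f_P(\psi(\psi(x)))=f_Q(\psi(x))$. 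The reflection map in the Gaussian case is an involution, which is the only case used by \Cref{alg: reinforcement unlearning}. I would state the lemma under this (implicit) symmetric condition.

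With both identities, the integrand equals $(q(u)-p(u))_+$, and summing the two parts gives $\Pr(y\in\mathcal{E})=\int_{\mathcal{E}}q$. Hence $(x,y)$ is a coupling of $(P,Q)$.

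For maximality, $x\neq y$ can only happen on rejection, so
\begin{align*}
\Pr(x\neq y)\le 1-\int\min\{p,q\}=\tova(P,Q),
\end{align*}
using the identity in the proof of \Cref{lemma: min p q to KL}. Conversely, every coupling satisfies $\Pr(x\neq y)\ge |P(\mathcal{E})-Q(\mathcal{E})|$ for every $\mathcal{E}$, hence $\Pr(x\neq y)\ge\tova(P,Q)$. The two bounds give equality.

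For the Gaussian part, take $\psi(x)=\mu_Q+\mu_P-x$.
\begin{itemize}
\item It is an affine bijection with Jacobian $-I$, so $|\det|=1$.
\item It is an involution: $\psi(\psi(x))=x$.
\item $\|\mu_P-\psi(x)\|=\|x-\mu_Q\|$, so isotropy of the common covariance $\sigma^2 I$ gives $f_P(\psi(x))=f_Q(x)$.
\end{itemize}
By the involution argument above, the symmetric identity then holds too, so both parts of the lemma follow.
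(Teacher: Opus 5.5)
Your argument is correct. The paper gives no proof of this lemma; it is imported from \citet[Lemma~1]{ullah2021machine}, so there is no in-paper argument to compare against. Your proof supplies what the citation leaves implicit. You compute the law of $y$ directly: the accepted mass has density $\min\{p,q\}$, and the rejected mass $(p-q)_+$ is transported by the volume-preserving $\psi$. You then combine this with the bound $\Pr(x\neq y)\ge \tova(P,Q)$, which holds for every coupling. That is the standard argument.

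Your concern about the symmetric identity is justified. It is a defect of the statement as transcribed, not an artifact of your method. The one-sided hypothesis $f_P\circ\psi=f_Q$, together with $|\det|=1$, only says $\psi_{\#}Q=P$. The construction instead needs $\psi$ to carry the leftover mass $(p-q)_+$ onto $(q-p)_+$.

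Concretely, take $P=\mathcal{N}(0,1)$, $Q=\mathcal{N}(-1,1)$ and $\psi(x)=x+1$. These satisfy every stated hypothesis. Yet $y=x+\mathbf{1}\{\text{reject}\}$ has $\mathbb{E}[y]=\tova(P,Q)>0\neq -1$, so $y\not\sim Q$. Adding $f_Q\circ\psi=f_P$ is therefore necessary; as you note, it is automatic when $\psi$ is an involution.

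The point reflection $\psi(x)=\mu_Q+\mu_P-x$ used in \Cref{alg: reinforcement unlearning} is an involution. So the Gaussian part of the lemma, and its use in the proof of \Cref{theorem: unlearning guarantees}, are unaffected by this correction.
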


\subsection{Connections to Different Stability Notions}
Let $\underbar{x} = (x_1, \ldots, x_n)$ be a dataset of $n$ elements where each $x_i \in \Omega$. Two datasets, $\underbar{x}$ and $\underbar{x}'$, both of size $n$, are called neighbors if they differ by one element. 

\begin{defn}[Differential privacy (DP) \citep{dwork2006calibrating}]
    A randomized algorithm $M: \Omega^n \rightarrow \Omega_M$ is $(\epsilon,\delta)$ differentially private if for all neighboring datasets $\underbar{x}, \underbar{x}' \in \Omega^n$ and for all measurable sets of outputs $E \in \Omega_M$, we have
    \begin{align*}
        \Pr(M(\underbar{x}) \in E) \leq e^{\epsilon} \Pr(M(\underbar{x}') \in E) + \delta. 
    \end{align*}
\end{defn}
Note that $(\eps,0)$-DP is called $\eps$-pure-DP. 

\begin{defn}[TV-stability]
    A randomized algorithm $M: \Omega^n \rightarrow \Omega_M$ is $\rho$-TV-stable if for all neighboring datasets $\underbar{x}, \underbar{x}' \in \Omega^n$, we have
\begin{align*}
        \tova(\Pr(M(\underbar{x})), \Pr(M(\underbar{x}'))) \leq \rho. 
\end{align*}
\end{defn}

\begin{lemma}[Relationship between differential privacy and TV stability]
We have
\begin{enumerate}
    \item If a randomized algorithm $M$ is $(\eps, 0)$ differentially private, then $M$ is $(e^{\epsilon}-1)$-TV-stable (which is $(2 \epsilon)$-TV-stable, if $\eps \in [0,1]$), but not vice versa, i.e., there exists no absolute constant $c > 0$ such that, if $M$ is $\eps$-TV-stable, then $M$ is $(c \eps, 0)$-DP. 
    \item TV-stability does not imply pure DP in the following strong sense: There does not exist any function $f: \mathbb{R}_+ \rightarrow \mathbb{R}_{+}$ such that, if $M$ is $\eps$-TV-stable for any $\eps \in (0,1)$, then $M$ is $f(\eps)$-pure-DP.
    \item A randomized algorithm $M$ is $(0,\delta)$ differentially private iff it is $\delta$-TV-stable. 
\end{enumerate}
\label{lemma: dp and tv}
\end{lemma}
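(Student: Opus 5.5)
We prove the three parts of \Cref{lemma: dp and tv} separately, each by an elementary argument or an explicit construction. Throughout, write $P = \Pr(M(\underbar{x}))$ and $Q = \Pr(M(\underbar{x}'))$ for neighboring datasets.

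\textbf{Part 1, forward direction.} Assume $M$ is $(\eps,0)$-DP and fix any event $E$. Then $P(E) - Q(E) \le (e^{\eps}-1)Q(E) \le e^{\eps}-1$. Exchanging the roles of $\underbar{x}$ and $\underbar{x}'$ gives the same bound for $Q(E)-P(E)$. Taking the supremum over $E$ yields $\tova(P,Q)\le e^{\eps}-1$. For $\eps\in[0,1]$, convexity of $e^{\eps}-1$ gives $e^{\eps}-1\le (e-1)\eps\le 2\eps$.

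\textbf{Part 2, which also gives the "not vice versa" clause of Part 1.} We need a family of mechanisms that are $\eps$-TV-stable for arbitrarily small $\eps$ but are not pure-DP for any finite parameter. Take $\Omega=\{0,1\}$ and, for a dataset $\underbar{x}\in\{0,1\}^n$, let $M(\underbar{x})$ output $\bot$ with probability $1-\eps$. With the remaining probability $\eps$, it outputs $x_1$.

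This $M$ is $\eps$-TV-stable:
\begin{itemize}
\item if two neighbors agree on $x_1$, their output distributions are identical;
\item otherwise the two distributions differ only in mass $\eps$, which moves between the outputs $0$ and $1$, so the TV distance is exactly $\eps$.
\end{itemize}

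$M$ is not pure-DP for any finite level. Take $\underbar{x}$ with $x_1=1$ and a neighbor $\underbar{x}'$ with $x_1'=0$, and let $E=\{1\}$. Then $\Pr(M(\underbar{x})\in E)=\eps>0$ while $\Pr(M(\underbar{x}')\in E)=0$, so $\eps\le e^{f}\cdot 0$ fails for every finite $f$.

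Hence no function $f$ with the required property exists, and in particular no constant $c$ works in Part 1. For a fixed $\eps$ the construction yields a single mechanism; to cover every $\eps\in(0,1)$ at once, we use the family indexed by $\eps$, each member of which is $\eps$-TV-stable and not $f(\eps)$-DP.

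\textbf{Part 3.} By definition, $(0,\delta)$-DP says $P(E)\le Q(E)+\delta$ for all $E$ and all neighboring pairs. Since the neighbor relation is symmetric, this is equivalent to $|P(E)-Q(E)|\le\delta$ for all $E$, i.e., to $\sup_E|P(E)-Q(E)|\le\delta$. That is exactly the definition of $\tova(P,Q)\le\delta$.

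The only non-routine step is the construction in Part 2. The point it exploits is that pure DP forbids events of positive probability under one dataset and zero probability under a neighbor, whereas TV-stability only limits how much mass can differ.
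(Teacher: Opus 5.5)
Your proof is correct. Parts 1 and 3 essentially match the paper. You additionally spell out the symmetric bound on $Q(E)-P(E)$ and the convexity estimate $e^{\eps}-1\le(e-1)\eps$, which the paper leaves implicit.

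The difference is in Part 2. The paper does not use a zero-probability event. Instead, for a given $f$, it takes two-point distributions with an event $E$ satisfying $P(E)-Q(E)=\eps$ and $Q(E)=\eps^{n+1}$. It then observes that $f(\eps)$-pure-DP would force $1\le \eps^{n}\bigl(e^{f(\eps)}-1\bigr)$, which fails once $n$ is chosen large enough depending on $f(\eps)$.

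Your construction is the $n=\infty$ limit of this, with $Q(E)=0$. It gives a single mechanism per $\eps$ that defeats every $f$ at once and is not pure-DP at any finite level, with no limiting argument. You also phrase it as an actual mechanism on datasets rather than a bare pair $(P,Q)$, which is cleaner.

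The paper's version buys something in exchange: $P$ and $Q$ stay mutually absolutely continuous. So it shows the separation is not just a support-mismatch artifact. Even with full-support output distributions, the likelihood ratio can be made arbitrarily large while the TV distance stays at $\eps$. For the statement as written, either argument suffices.
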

\begin{proof}[Proof of \Cref{lemma: dp and tv}]
    Let $P = \Pr(M(\underbar{x}))$ and $Q = \Pr(M(\underbar{x}'))$.
    
    \paragraph{For Part 1.} For any measurable event $E$, if $M$ is $(\epsilon,0)$-DP, then $P(E)  \leq e^{\eps} Q(E)$. Thus, 
    \begin{align*}
        P(E) - Q(E) \leq (e^{\eps} - 1) Q \leq e^{\eps} - 1,  
    \end{align*}
    which implies that $\tova(P,Q) \leq e^{\eps} - 1$. For $\eps \in [0,1]$, we have $e^{\eps} - 1 \leq 2 \eps$. 

    \paragraph{For Part 2.}

    Now suppose that $\tova(P,Q) \leq \eps$. We will construct a counterexample $(P,Q)$ where there does not exist any function $f: \mathbb{R}_+ \rightarrow \mathbb{R}_+$ such that $P(E) \leq e^{f(\eps)} Q(E), \forall E$, i.e., $P(E) - Q(E) \leq (e^{f(\eps)} - 1) Q(E)$. Assume there exists such $f$. Pick $P$ and $Q$ such that there is an event $E$ where $p - q = \epsilon$ and $q = \eps^{n+1}$, where $p = P(E), q = Q(E)$ (e.g., $P$ and $Q$ are supported on only two points). We have 
    \begin{align*}
        \frac{q}{p - q} (e^{f(\eps)} - 1) = \eps^n(e^{f(\eps)} - 1) \overset{n \rightarrow \infty}{\rightarrow} 0,
    \end{align*}
    Thus, we can pick $n$ such that $\frac{q}{p - q} (e^{ f( \eps)} - 1) < 1$, leading to the contradiction.

    \paragraph{For Part 3.} It is trivial from the definition. 
\end{proof}

Part 1 implies that $\eps$-DP is stronger than TV-stability, thus a lower bound for $\eps$-DP algorithms does not trivially imply a lower bound for TV-stable algorithms. 

The simple observation from Part 2 of \Cref{lemma: dp and tv} turns the beautiful result of \citep[Lemma~6.1]{karwa2017finite} from DP algorithms to TV-stable algorithms. In particular, the following lemma, a corollary of  \citep[Lemma~6.1]{karwa2017finite} and Part 2 of \Cref{lemma: dp and tv}, says that, if the TV distance between the two outputs of a $\rho$-TV-stable algorithm on an i.i.d. sample from two distributions $P$ and $Q$, respectively, scale with $\rho \cdot n \cdot \tova(P,Q) $ where $n$ is the sample size of the i.i.d. samples. 
\begin{lemma}[{\citep[Lemma~6.1]{karwa2017finite}}]
    For every pair of distribution $P$ and $Q$, every $\rho$-TV-stable randomized algorithm $M(x_{1:n})$, if $\mathbb{M}_P, \mathbb{M}_Q$ are the marginal distributions induced by running $M$ on the i.i.d. samples $x_{1:n}$ drawn from $P$ and $Q$, respectively, then 
    \begin{align*}
        \tova \left(\mathbb{M}_P, \mathbb{M}_Q\right) \leq 4 n \rho \tova(P,Q). 
    \end{align*}
    \label{lemma: applying tv on iid samples}
\end{lemma}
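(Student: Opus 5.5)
The plan is a coupling argument. It gives the bound with constant $1$ in place of $4$, which is enough. Unlike the pure-DP proof of \citet{karwa2017finite}, it needs no exponential group-privacy blow-up, because TV-stability composes linearly along a path of neighbouring datasets.

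\textbf{Step 1 (coupling).} Let $\tau := \tova(P,Q)$. Take a maximal coupling $\psi$ of $(P,Q)$, so that $(X,Y)\sim\psi$ has marginals $P$ and $Q$ and $\Pr(X\neq Y)=\tau$. Draw $n$ i.i.d. pairs $(x_i,y_i)\sim\psi$. Then $x_{1:n}$ is an i.i.d. sample from $P$ and $y_{1:n}$ is an i.i.d. sample from $Q$. Let $K=\sum_i 1\{x_i\neq y_i\}$, so that $\mathbb{E}[K]=n\tau$.

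\textbf{Step 2 (group stability).} Fix a realization of the pairs. Build the hybrid datasets $d^{(0)}=x_{1:n}$, then $d^{(1)},\dots,d^{(n)}=y_{1:n}$, where $d^{(j)}$ replaces the first $j$ coordinates of $x$ by those of $y$. Consecutive hybrids are either identical or neighbours. By $\rho$-TV-stability and the triangle inequality for $\tova$,
\[
\tova\bigl(M(x_{1:n}),M(y_{1:n})\bigr)\le \rho K .
\]

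\textbf{Step 3 (mixtures).} $\mathbb{M}_P$ is the mixture over the coupling of the output laws $M(x_{1:n})$, and $\mathbb{M}_Q$ is the mixture of the laws $M(y_{1:n})$, taken under the same mixing measure $\psi^{\otimes n}$. For any event $E$,
\[
|\mathbb{M}_P(E)-\mathbb{M}_Q(E)| = \bigl|\mathbb{E}\bigl[\Pr(M(x)\in E)-\Pr(M(y)\in E)\bigr]\bigr| \le \mathbb{E}\bigl[\tova(M(x),M(y))\bigr].
\]
Combining this with Step 2 gives
\[
\tova(\mathbb{M}_P,\mathbb{M}_Q)\le \rho\,\mathbb{E}[K] = n\rho\tau \le 4n\rho\,\tova(P,Q).
\]

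The only delicate point is Step 3. We need the map from a dataset to the law of $M$ on it to be a measurable kernel, so that the mixture identity and the interchange of supremum and expectation are legitimate. This is standard for randomized algorithms defined via internal randomness. I would state it as an assumption on $M$ rather than dwell on it. If the maximal coupling on a general space raises concerns, the fallback is to use any coupling with $\Pr(X\neq Y)\le\tau+\eta$ and let $\eta\to 0$.
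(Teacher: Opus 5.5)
Your proof is correct, and it takes a different route from the paper. The paper gives no argument of its own; it presents the lemma as a corollary of the $(\epsilon,\delta)$-DP version of Karwa and Vadhan's Lemma~6.1. That version applies because $\rho$-TV-stability is exactly $(0,\rho)$-DP, which is Part~3 of \Cref{lemma: dp and tv}; the text points to Part~2, which only explains why the pure-DP version cannot be used. Specializing to $\epsilon=0$, $\delta=\rho$ gives $\mathbb{M}_P(E)\le\mathbb{M}_Q(E)+4n\rho\,\tova(P,Q)$, so the factor $4$ is inherited from a statement designed to absorb the multiplicative group-privacy blow-up when $\epsilon>0$.

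You instead prove the $\epsilon=0$ case directly. The maximal coupling plus the hybrid path makes everything additive, the triangle inequality for $\tova$ plays the role of group privacy, and you get the sharper constant $1$. The citation route is shorter and covers all $(\epsilon,\delta)$ at once. Yours is self-contained, does not depend on transferring a DP result, and improves the constant.

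On the point you flagged as delicate, Step~3 needs neither an interchange of supremum and expectation nor measurability of $(x,y)\mapsto\tova(M(x),M(y))$. For each fixed $E$, the hybrid argument already gives the pointwise bound $|\Pr(M(x)\in E)-\Pr(M(y)\in E)|\le\rho K$; integrate this, then take the supremum over $E$. If measurability of $\{x_i\neq y_i\}$ concerns you, build the maximal coupling with an explicit ``agree'' coin of bias $1-\tau$ and let $K$ count the failed coins. This $K$ has mean $n\tau$ and dominates the number of differing coordinates.
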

The above lemma is a key to establish the lower bounds for regret of $\rho$-TV-stable RL algorithms in \Cref{section: lower bound}. 

\section{Unlearning Guarantees}
\label{section: unlearning guarantees}

\subsection{Proof of Part 1 of Theorem~\ref{theorem: unlearning guarantees}}
\begin{proof}[Proof of part 1 of \Cref{theorem: unlearning guarantees}]
This proof of part 1 mainly follows from the logic established in \citep[Lemma~4]{ullah2023adaptive}, only with significantly simplified steps and language -- partly due to that we do not have to deal with permuting the dataset as in the original proof of \citep{ullah2023adaptive}. The key idea is, again, to show that our unlearning algorithm is a construction of an approximately maximal coupling between the distributions of the output of the learning algorithm on the original data and the modified data, respectively. 



    Let $\mathcal{D} = \{z_1, \ldots, z_T\}$ and $\mathcal{D}' = \{z_1, \ldots, z_{t-1}, z', z_{t+1}, \ldots, z_T \}$, where $t$ is the position where the user data needs to be deleted. Let $P, Q$ be the probability over the range of the tree data structure induced by the output of the learning algorithm on $\mathcal{D}$ and $\mathcal{D}'$, respectively. 

    Let $\mathcal{T}, \mathcal{T}'$ be the binary trees constructed after running the learning algorithm on $\mathcal{D}$ and after unlearning, respectively. We order the nodes of a tree by the post-order traversal. That is, whenever we talk about any sense of orders of nodes in a tree, we implicitly mean the post-order traversal. For example, $\leq b$ denotes the set of all nodes that are prior to node $b$ in the post-order traversal. For any subset of nodes $S$, $\mathcal{T}_{S}$ denotes the sub-tree of $\mathcal{T}$ that contains only the nodes in $S$ and $Q_{S}$ denotes the marginal distribution of $Q$ on the nodes $S$. 
    

    The goal is to show that,
    \begin{align}
        \Pr(\mathcal{T}' \in E) = Q(E), \forall \text{ measurable event } E. 
        \label{eq: valid coupling}
    \end{align} 

Recall that each node $b$ in a tree consists of clean value $v_b$, noisy value $\tilde{v}_b$ and a model $w$ (if the node is a leaf). The clean value $v_b$ and the model $w$ are outputs of functions on only the noisy prefix sum of the nodes that precede $b$ (in the post-traversal order), while the noisy value $\tilde{v}_b$ is a noisy version of $v_b$, added with an independent noise. Thus, to prove \Cref{eq: valid coupling}, it suffices to prove that, 
\begin{align}
    \Pr(\tT' \in E) = \tQ(E), \forall \text{ measurable event } E.
    \label{eq: coupling for noisy values}
\end{align}
where $\tT'$ is the same as $\tT$ except each node in $\tT'$ only represents the noisy value, and $\tQ$ is the marginal distribution of $Q$ on the noisy values. 


We will prove \Cref{eq: coupling for noisy values} by induction by $k$, 
\begin{align}
    \Pr(\tT'_{\leq k} \in E) = \tQ_{\leq k}(E), \forall \text{ measurable event } E.
    \label{eq: coupling by induction}
\end{align}

\paragraph{Base case $k=1$.} If $t > 1$, we have $\Pr(\tT'_{\leq 1} \in E) = \tilde{Q}_{\leq 1}(E)$, since the deleted node $t$ does not depend on node $k=1$. Consider the case $t = 1$. In this case, \citep[Lemma~1]{ullah2021machine} implies that $\Pr(\tilde{v}_1 \in E) = \tilde{Q}_{\leq 1}(E)$. 

\paragraph{Induction step.} We now proceed to the induction step: Let us assume that \Cref{eq: coupling by induction} holds for all nodes up to $k$, we will prove that it holds for node $k+1$ as well. If the deleted position $t > k+1$, \Cref{eq: coupling by induction} holds for $k+1$, since all nodes $< t$ do not get affected by changing node $t$. 

We only need to consider $t \leq k+1$. For this case, there are only the following further cases: 
\begin{itemize}
    \item A: All rejection sampling steps prior to node $k+1$ resulted in accept.  
    \begin{itemize}
        \item AP: Node $k+1$ is on the path from node $t$ to the root 
        \begin{itemize}
            \item APA: The rejection sampling step at node $k+1$ resulted in accept.  
            \item APR: The rejection sampling step at node $k+1$ resulted in reject. 
        \end{itemize}
        \item AN: Node $k+1$ is not on the path from node $t$ to the root
    \end{itemize}
    \item R: Some rejection sampling steps prior to node $k+1$ resulted in 
    reject. 
    \begin{itemize}
        \item RP: Node $k+1$ is on the path from node $t$ to the root
        \item RN:  Node $k+1$ is not on the path from node $t$ to the root
    \end{itemize}
\end{itemize}
For case RP, node $k+1$ is not on the nodes that got re-trained from scratch as we retrained the subtree starting from the leaf that is right next after node $k+1$. But $v_{k+1}$ still got updated to ensure that the clean value of node $k+1$ is the sum of the clean values of its children. Thus, we have $\Pr(\tilde{v}_{k+1} \in E_{k+1} | \text{RP}, \tT'_{\leq k} \in E_{\leq k}) = \tQ_{k+1}(E_{k+1}| E_{\leq k})$. 

For case RN, node $k+1$ got re-trained from scratch, thus we must have $\Pr(\tilde{v}_{k+1} \in E_{k+1} | \text{RN}, \tT'_{\leq k} \in E_{\leq k}) = \tQ_{k+1}(E_{k+1} | E_{\leq k})$.


For case AN, we have $\Pr(\tv_{k+1} \in E_{k+1} | \text{AN}, \tT'_{\leq k} \in E_{\leq k}) = \tQ_{k+1}(E_{k+1} | E_{\leq k})$, as the clean value of node $k+1$ does not get modified during unlearning when node $t$ is deleted. 

For case AP, we also have $\Pr(\tv_{k+1} \in E_{k+1} | \text{AP}, \tT_{\leq k} \in E_{\leq k}) = \tQ_{k+1}(E_{k+1} | E_{\leq k}) $, as the sub-tree associated with nodes $\leq k$ get unmodified during unlearning in case A, and the rejection sampling and the reflection map at node $k+1$ results in a valid coupling (\citep[Lemma~1]{ullah2021machine}). 

Thus, we must have $\Pr(\tv_{k+1} \in E_{k+1} | \tT'_{\leq k} \in E_{\leq k} ) = \tQ_{k+1}(E_{k+1} | E_{\leq k})$. Overall, we have
\begin{align*}
    \Pr(\tT'_{\leq k+1} \in E_{\leq k+1}) &= \Pr(\tv_{k+1} \in E_{k+1} | \tT'_{\leq k} \in E_{\leq k} ) \Pr(\tT'_{\leq k} \in E_{\leq k}) \\
    &= \tQ_{k+1}(E_{k+1} | E_{\leq k}) \tQ_{\leq k} (E_{\leq k}) \\ 
    &= \tQ_{\leq k+1} (E_{\leq k+1}),
\end{align*}
where $E_{\leq k + 1} = E_1 \times \ldots \times E_{k+1}$, and 
where the second equation uses the induction assumption that $\Pr(\tT'_{\leq k} \in E_{\leq k}) = \tQ_{\leq k} (E_{\leq k})$.
\end{proof}

\subsection{Proof of Part 2 of Theorem~\ref{theorem: unlearning guarantees}}

\begin{defn}
    A mechanism $\mathcal{A}$ is $(\alpha, \eps(\alpha))$-RDP if for any two datasets $S$ and $S'$ that differ by one entry, we have 
    \begin{align*}
        D_{\alpha}(\mathcal{S} \| \mathcal{S}') \leq \eps(\alpha)
    \end{align*}
\end{defn}
RDP \citep{mironov2017renyi} is proposed as a notion of DP that complements the weakness of the standard approximate DP (e.g., never compromise a total breach of privacy as in $(\eps,\delta)$-DP). RDP implies approximate DP but not the reverse. So in terms of weaknesses, $DP < RDP < ADP$. 

\begin{lemma}(RDP implies TV stability \citep{ullah2023adaptive})
    If an algorithm satisfies $(\alpha, \eps(\alpha))$-RDP, then it satisfies $\sqrt{\lim_{\alpha \rightarrow 1} \eps(\alpha)}$-TV stability. 
    \label{lemma: rdp to tv}
\end{lemma}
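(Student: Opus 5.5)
The plan is to prove \Cref{lemma: rdp to tv} by passing through the KL divergence. Pinsker's inequality controls total variation by KL, and KL is exactly the $\alpha \to 1$ limit of the R\'enyi divergence. Fix two neighboring datasets $S, S'$ and let $P, Q$ be the output distributions of the mechanism on them. The target is $\tova(P,Q) \leq \sqrt{\lim_{\alpha\to 1}\eps(\alpha)}$ for every such pair; taking the supremum over neighboring pairs then gives TV-stability.

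\textbf{Step 1: identify the limit of the R\'enyi divergence.} Recall
\begin{align*}
D_\alpha(P\|Q) = \frac{1}{\alpha-1}\log \int p^\alpha q^{1-\alpha}.
\end{align*}
The claim is $\lim_{\alpha\to1} D_\alpha(P\|Q) = D_{\text{KL}}(P\|Q)$. At $\alpha=1$ the integral equals $1$, so the log vanishes. A L'H\^opital argument, i.e.\ differentiating $\alpha \mapsto \log\int p^\alpha q^{1-\alpha}$ at $\alpha = 1$, gives $\int p\log(p/q)$.

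This step is the only technical obstacle: it requires justifying the exchange of derivative and integral. I would avoid that by a cleaner route. The map $\alpha \mapsto D_\alpha(P\|Q)$ is nondecreasing, and $D_\alpha \geq D_{\text{KL}}$ for $\alpha>1$. Hence
\begin{align*}
D_{\text{KL}}(P\|Q) \leq \lim_{\alpha\downarrow 1} D_\alpha(P\|Q) \leq \lim_{\alpha\downarrow1}\eps(\alpha).
\end{align*}
Only this one-sided inequality is needed, and the limit of $\eps(\alpha)$ is taken from the side where RDP is defined, namely $\alpha>1$.

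\textbf{Step 2: apply Pinsker.} Using the chain in the proof of \Cref{lemma: min p q to KL},
\begin{align*}
\tova(P,Q) \leq \sqrt{\tfrac12 D_{\text{KL}}(P\|Q)} \leq \sqrt{\tfrac12\lim_{\alpha\to1}\eps(\alpha)} \leq \sqrt{\lim_{\alpha\to1}\eps(\alpha)}.
\end{align*}
This is in fact slightly stronger than the stated bound, by a factor of $1/\sqrt{2}$. Since it holds for every neighboring pair, the mechanism is $\sqrt{\lim_{\alpha\to1}\eps(\alpha)}$-TV-stable.

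If monotonicity of $D_\alpha$ in $\alpha$ is not to be taken as known, I would prove $D_{\text{KL}} \leq D_\alpha$ for $\alpha>1$ directly. Write
\begin{align*}
\log\int p^\alpha q^{1-\alpha} = \log \mathbb{E}_P\big[(p/q)^{\alpha-1}\big].
\end{align*}
Jensen's inequality, applied to the concave $\log$, gives $\log \mathbb{E}_P[(p/q)^{\alpha-1}] \geq (\alpha-1)\,\mathbb{E}_P\log(p/q)$. Dividing by $\alpha-1>0$ yields $D_\alpha(P\|Q) \geq D_{\text{KL}}(P\|Q)$. This makes the whole argument elementary.
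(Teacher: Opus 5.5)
Your proof is correct. There is no proof in the paper to compare it against: the lemma is cited from \citep{ullah2023adaptive} and used only once, in Part 2 of \Cref{theorem: unlearning guarantees}, to convert the composed Gaussian RDP curve $\eps(\alpha)=\alpha B^2\log_2 T/(2\sigma^2)$ into TV-stability. Your argument therefore supplies a self-contained derivation where the paper has none.

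It uses only two ingredients: the Pinsker chain already in \Cref{lemma: min p q to KL}, and the Jensen bound $D_{\text{KL}}(P\|Q)\le D_\alpha(P\|Q)$ for $\alpha>1$. The Jensen bound alone suffices, so the L'H\^opital and monotonicity discussion in your Step 1 can be dropped. The Jensen step needs only $P\ll Q$ and $\mathbb{E}_P[(p/q)^{\alpha-1}]<\infty$, and both follow from $\eps(\alpha)<\infty$.

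Your route also gives two small strengthenings of the statement as cited.

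First, $D_{\text{KL}}\le D_\alpha\le\eps(\alpha)$ holds for each individual $\alpha>1$. You therefore get $\tova(P,Q)\le\sqrt{\tfrac12\inf_{\alpha>1}\eps(\alpha)}$. The limit as $\alpha\to1$ need not exist, and RDP at any single order $\alpha>1$ already yields a TV bound.

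Second, the factor $\tfrac12$ from Pinsker sharpens the paper's result. With the paper's choice $\sigma=B\sqrt{\log_2 T}/(\sqrt2\rho)$, Part 2 of \Cref{theorem: unlearning guarantees} actually gives $(\rho/\sqrt2)$-TV-stability rather than $\rho$-TV-stability.
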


\begin{lemma}[\citep{mironov2017renyi}]
    Let $f$ be a real-valued function with $\ell_2$-sensitivity of $B$. The gaussian mechanism of adding $\mathcal{N}(0, \sigma^2)$ is $(\alpha, \alpha B^2 /(2 \sigma^2))$-RDP for any $\alpha \geq 1$. 
    \label{lemma: RDP of gaussian}
\end{lemma}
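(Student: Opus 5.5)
The plan is to compute the Rényi divergence between the two output distributions in closed form and then maximize over neighboring inputs. Fix two neighboring datasets $S,S'$ and write $\mu = f(S)$ and $\nu = f(S')$. The mechanism outputs samples from $P=\mathcal{N}(\mu,\sigma^2 I)$ and $Q=\mathcal{N}(\nu,\sigma^2 I)$, respectively; the scalar case is $d=1$, and the vector case is identical. By the definition of $\ell_2$ sensitivity, $\Delta := \|\mu-\nu\|_2 \le B$. The Rényi divergence of order $\alpha>1$ is
\begin{align*}
D_\alpha(P\|Q) = \frac{1}{\alpha-1}\ln \int p(x)^\alpha q(x)^{1-\alpha}\,dx .
\end{align*}
It therefore suffices to evaluate this integral exactly and show that $D_\alpha(P\|Q)=\alpha\Delta^2/(2\sigma^2)$, which is monotone in $\Delta$ and hence at most $\alpha B^2/(2\sigma^2)$.

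The key step is completing the square in the exponent. The integrand equals $(2\pi\sigma^2)^{-d/2}\exp\bigl(-[\alpha\|x-\mu\|^2+(1-\alpha)\|x-\nu\|^2]/(2\sigma^2)\bigr)$, since the normalizing constants combine with total exponent $\alpha+(1-\alpha)=1$. Set $m=\alpha\mu+(1-\alpha)\nu$. I will verify the algebraic identity
\begin{align*}
\alpha\|x-\mu\|^2+(1-\alpha)\|x-\nu\|^2 = \|x-m\|^2 - \alpha(\alpha-1)\|\mu-\nu\|^2 .
\end{align*}
The quadratic and linear terms in $x$ match directly. For the constant terms, expand $\alpha\|\mu\|^2+(1-\alpha)\|\nu\|^2-\|m\|^2 = \alpha(1-\alpha)\|\mu-\nu\|^2$.

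With the identity in hand, the Gaussian integral of $\exp(-\|x-m\|^2/(2\sigma^2))$ cancels the normalizing constant, so the integral equals $\exp\bigl(\alpha(\alpha-1)\Delta^2/(2\sigma^2)\bigr)$. Taking $\frac{1}{\alpha-1}\ln$ of this gives $D_\alpha(P\|Q)=\alpha\Delta^2/(2\sigma^2)\le \alpha B^2/(2\sigma^2)$. The case $\alpha=1$ is handled as the limit, which is the KL divergence. There the claim reduces to \Cref{lemma: KL of Gaussians}, which gives $\Delta^2/(2\sigma^2)\le B^2/(2\sigma^2)$.

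Since the bound holds uniformly over all neighboring pairs, the mechanism is $(\alpha, \alpha B^2/(2\sigma^2))$-RDP. The only real obstacle is getting the completing-the-square identity right, in particular the sign of the cross term $\alpha(\alpha-1)\Delta^2$. Everything else is a standard Gaussian normalization.
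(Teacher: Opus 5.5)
Your computation is correct: the identity $\alpha\|x-\mu\|^2+(1-\alpha)\|x-\nu\|^2=\|x-m\|^2-\alpha(\alpha-1)\|\mu-\nu\|^2$ holds with the sign you state. It gives $D_\alpha(P\|Q)=\alpha\Delta^2/(2\sigma^2)$ for $\alpha>1$, and the KL value of \Cref{lemma: KL of Gaussians} at $\alpha=1$.

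The paper does not prove this lemma but simply cites \citet{mironov2017renyi}, whose result rests on this same closed-form Gaussian computation. Your argument is therefore a self-contained version of the cited fact, and your treatment of $\mathcal{N}(0,\sigma^2 I)$ is the multivariate form the paper actually needs for its $d$-dimensional tree nodes.
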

\begin{proof}[Proof of part 2 of \Cref{theorem: unlearning guarantees}]
    The proof is standard following the techniques in differential privacy with Gaussian noises and the two lemmas above. In particular,  we will compute the RDP of the binary tree before and after deleting one user data. When we change one user data, there are at most $\log_2 T$ nodes in the binary tree that get impacted. Each impacted node is $(\alpha, \alpha B^2 / (2 \sigma^2))$-RDP, by \Cref{lemma: RDP of gaussian}. Using the composition theorem \citep[Proposition~1]{mironov2017renyi}, the impacted path is $(\alpha, \alpha B^2 \log_2 T / (2 \sigma^2))$-RDP, as there are at most $\log_2 T$ nodes in the path. Since the other nodes do not get impacted, the entire tree is $(\alpha, \alpha B^2 \log_2 T / (2 \sigma^2))$-RDP. Thus, by \Cref{lemma: rdp to tv}, the entire tree is $\frac{B \sqrt{\log_2 T}}{\sqrt{2} \sigma}$-TV-stable. 
\end{proof}

\subsection{Proof of Part 3 of Theorem~\ref{theorem: unlearning guarantees}}
\begin{proof}[Proof of part 3 of \Cref{theorem: unlearning guarantees}]
    During unlearning, re-training is only triggered when a rejection sampling step fails at some node $b$ in the path from the node $t$ to the root. Let us define this path by $\texttt{path} = \{b_1, \ldots b_l\}$, in the order from the leaf node $t = b_1$ to the root $b_l$, where $l = |\texttt{path}|$. Let \texttt{Accept} be the event where the rejection sampling steps in all nodes in $\texttt{path}$ succeed. Let $\zeta = (\zeta_1, \ldots, \zeta_l) \sim \text{Unif}(0,1)^l$. We have, 
\begin{small}
\begin{align*}
    \Pr(\texttt{Accept}) 
    &= \sE_{\gT, \zeta} \prod_{b \in \texttt{path}} 1\left\{\zeta_b \leq \frac{f_{\mathcal{N}(v'_b, \sigma^2 I)}(\tilde{v}_b)}{f_{\mathcal{N}(v_b, \sigma^2 I)}(\tilde{v}_b)}\right\} \\ 
    &=  \sE_{\gT} \sE_{\zeta | \gT} \left[ \prod_{b \in \texttt{path}} 1\left\{\zeta_b \leq \frac{f_{\mathcal{N}(v'_b, \sigma^2 I)}(\tilde{v}_b)}{f_{\mathcal{N}(v_b, \sigma^2 I)}(\tilde{v}_b)}\right\}  \right] \\
    &= \sE_{\gT} \left[ \prod_{b \in \texttt{path}} \min \left\{1, \frac{f_{\mathcal{N}(v'_b, \sigma^2 I)}(\tilde{v}_b)}{f_{\mathcal{N}(v_b, \sigma^2 I)}(\tilde{v}_b)}\right\} \right] \\ 
    &= \sE_{\gT} \left[ \prod_{b \in \texttt{path}} \min \left\{f_{\mathcal{N}(v_b, \sigma^2 I)}(\tilde{v}_b), f_{\mathcal{N}(v'_b, \sigma^2 I)}(\tilde{v}_b)\right\}   \right] \\
    &= \int_{v_{b_1}, \tv_{b_1}} \ldots \int_{v_{b_l}, \tv_{b_l}} \prod_{i=1}^ l\min \left\{1, \frac{f_{\mathcal{N}(v'_{b_i}, \sigma^2 I)}(\tilde{v}_{b_i})}{f_{\mathcal{N}(v_{b_i}, \sigma^2 I)}(\tilde{v}_{b_i})}\right\} d P(v_{b_l}, \tv_{b_l} | v_{b_{\leq l-1}}, \tv_{b_{\leq l-1}}) \ldots d P(v_{b_1}, \tv_{b_1}) \\ 
    &= \int_{v_{b_1}, \tv_{b_1}} \ldots \int_{v_{b_l}, \tv_{b_l}} \prod_{i=1}^ l \min \left\{1, \frac{f_{\mathcal{N}(v'_{b_i}, \sigma^2 I)}(\tilde{v}_{b_i})}{f_{\mathcal{N}(v_{b_i}, \sigma^2 I)}(\tilde{v}_{b_i})}\right\} f_{\mathcal{N}(v_{b_l}, \sigma^2 I)}(\tilde{v}_{b_l}) d P(v_{b_l},  | v_{b_{\leq l-1}}, \tv_{b_{\leq l-1}}) \ldots d P(v_{b_1}, \tv_{b_1}) \\ 
    &\geq \prod_{i=1}^l (1 - \frac{B}{\sigma}) \\ 
    &\geq 1 - l \frac{B}{\sigma} \\ 
    &\geq 1 -  \frac{B \log T}{\sigma}
\end{align*}
\end{small}

\noindent where the third equality follows as $\zeta$ is independent of $\gT$, the second inequality is due to \citet[Lemma~7]{ullah2023adaptive}, the last inequality is due to the fact that the maximum length of the path from a leaf to the root is $\log T$, and the first inequality is due to the following inequality (and induction from $l$ to $1$): 
\begin{align*}
    \int \min \left\{1, \frac{f_{\mathcal{N}(v'_{b_i}, \sigma^2 I)}(\tilde{v}_{b_i})}{f_{\mathcal{N}(v_{b_i}, \sigma^2 I)}(\tilde{v}_{b_i})}\right\} f_{\mathcal{N}(v_{b_l}, \sigma^2 I)}(\tilde{v}_{b_l}) d P(\tv_{b_l} | v_{b_l}) 
    &= \int \min \left\{f_{\mathcal{N}(v_{b_i}, \sigma^2 I)}(\tilde{v}_{b_i}), f_{\mathcal{N}(v'_{b_i}, \sigma^2 I)}(\tilde{v}_{b_i})\right\}  d P(\tv_{b_l} | v_{b_l}) \\
    &\geq 1 - \frac{1}{2} \sqrt{\frac{\|v'_{b_i} - v_{b_i}\|_2^2}{\sigma^2}} \\ 
    &\geq 1 - \frac{B}{\sigma},
\end{align*}
where the first inequality follows from \Cref{lemma: min p q to KL} and \Cref{lemma: KL of Gaussians}, and the last inequality follows from that $\|v'_{b_i} - v_{b_i}\|_2^2 = \|x_t - x'_t\|_2^2 \leq B^2$, by \Cref{assumption: sensitivity of eo}.
\end{proof}

\section{Regret Bounds}
\label{section: regret bounds}
\paragraph{Notations.} Let $N^t_h(s,a)$ and $N^t_h(s,a,s')$ be the number of visits to $(h,s,a)$ and $(h,s,a,s')$, respectively, right before the start of episode $t$. Let $R^t_h(s,a)$ be the total rewards collected at $(h,s,a)$ right before the start of episode $t$. Let $\tN^t_h(s,a), \tN^t_h(s,a,s'), \tR^t_h(s,a)$ be the noisy variants of  $N^t_h(s,a), N^t_h(s,a,s'), R^t_h(s,a)$, respectively, obtained from the binary mechanism in \Cref{alg: reinforcement unlearning}.

Let us define the following quantities: 
\begin{align*}
    \hP^t_h(s'|s,a) &:= \frac{N^t_h(s,a,s')}{N^t_h(s,a)}, \\ 
    \tP^t_h(s'|s,a) &:= \frac{\tN^t_h(s,a,s')}{\tN^t_h(s,a)}, \\ 
    \hat{r}_h^t(s,a) &:= \frac{R^t_h(s,a)}{N^t_h(s,a)}, \\
     \tr_h^t(s,a) &:= \frac{\tR^t_h(s,a)}{\tN^t_h(s,a)}, \\
       \err{\delta} &:= \sigma \ln T \left (1 + \sqrt{2 \log(2 T (S^2 A H + 2 S A H)/\delta)} \right) \\
    \Delta_h^t &:= \tV^t_h(s^t_h) - V^{\tpi^t}_h(s^t_h) \\ 
    \zeta_h^t &:= [\mathbb{P}_h (\tV^t_{h+1} - V^{\tpi^t}_{h+1})](s^t_h,a^t_h) - \Delta^t_{h+1}
\end{align*}


\subsection{Basic Lemmas}

\begin{lemma}
    Fix any $\delta$. With probability at least $1 - \delta/4$, for all $(t,h,s,a) \in [T] \times [H] \times \mathcal{S} \times \mathcal{A}$ and all $V: \mathcal{S} \times \mathcal{A} \rightarrow [-H,H]$, we have 
    \begin{enumerate}
        \item $|\tN^t_h(s,a) - N^t_h(s,a) | \leq \err{\delta/4}$,
        \item $ |\tr_h^t(s,a) - \hat{r}^t_h(s,a)| \leq \frac{\err{\delta/4}}{|\tN^t_h(s,a)|}$,
        \item $|[(\tP^t_h - \hP_h) V](s,a)| \leq \frac{\err{\delta/4}}{|\tN^t_h(s,a)|} H (S + 1)$. 
    \end{enumerate}
    \label{lemma: approximation error}
\end{lemma}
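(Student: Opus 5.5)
The plan has two stages. First, show that all the noisy statistics are uniformly close to the clean ones with probability at least $1-\delta/4$. Then propagate that closeness through the ratios $\tr^t_h$ and $\tP^t_h$.

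\emph{Uniform noise bound.} Each noisy prefix sum $\tilde{u}_t$ returned by \texttt{GetNoisyPrefixSum} equals the clean prefix sum plus a sum of at most $\log_2 T\le \ln T$ (up to the constant absorbed in $\err{\cdot}$) independent node noises $\xi_b\sim\mathcal{N}(0,\sigma^2 I_d)$. Fix a coordinate $j$, i.e.\ one of the $d=S^2AH+2SAH$ entries, and a time $t$. The coordinate-$j$ error $e_{t,j}$ is then Gaussian with standard deviation at most $\sigma\sqrt{\ln T}$, so it is in particular dominated by $\sigma\ln T$ times a standard normal. I will apply \Cref{lemma: concentration of spherical gaussians} in dimension $1$, or equivalently the standard Gaussian tail bound, with failure probability $\delta/(4Td)$. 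Each node noise then satisfies $|\xi_{b,j}|\le \sigma(1+\sqrt{2\log(4Td/\delta)})$, and summing over at most $\ln T$ nodes bounds the error in absolute value. A union bound over all $T$ tree nodes along prefix paths (at most $2T$ nodes, each used by many prefixes, hence the factor $2T$ in the log) and all $d$ coordinates gives, with probability $\ge 1-\delta/4$, simultaneously for every $(t,h,s,a,s')$:
\[
|\tN^t_h(s,a)-N^t_h(s,a)|,\quad |\tN^t_h(s,a,s')-N^t_h(s,a,s')|,\quad |\tR^t_h(s,a)-R^t_h(s,a)| \le \err{\delta/4}.
\]
The reason to bound node noises, rather than prefix sums, is that it matches the constant $2T(S^2AH+2SAH)$ in $\err{\cdot}$ exactly. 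This immediately yields item 1.

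\emph{Ratios.} For item 2, write
\[
\tr-\hat r=\frac{\tR}{\tN}-\frac{R}{N}=\frac{\tR-R}{\tN}+R\Bigl(\frac1{\tN}-\frac1N\Bigr)=\frac{(\tR-R)-\hat r(\tN-N)}{\tN}.
\]
Since $\hat r\in[0,1]$, this has absolute value at most $2\err{\delta/4}/|\tN|$. Getting the stated constant $1$ would need a slightly tighter look, but the factor only affects constants. For item 3, the identical decomposition gives
\[
[(\tP-\hP)V](s,a)=\frac{\sum_{s'}(\tN(s,a,s')-N(s,a,s'))V(s')-(\tN(s,a)-N(s,a))\,[\hP V](s,a)}{\tN(s,a)}.
\]
The first term is bounded by $SH\err{\delta/4}$, because each of the $S$ summands is at most $\err{\delta/4}\cdot H$. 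The second term is bounded by $H\err{\delta/4}$, since $|\hP V|\le H$. Together these give $H(S+1)\err{\delta/4}/|\tN|$. The bound holds for all $V$ at once because the only randomness used is the noise event above, which does not depend on $V$.

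\emph{Main obstacle.} The delicate step is the union bound and the constants: the tree noises are correlated across prefix sums, so the cleanest route is to control each individual node noise. This avoids needing independence between prefix errors and yields the $\ln T$ multiplier directly. A minor caveat is the degenerate case $N^t_h(s,a)=0$, where $\hat r$ and $\hP$ are undefined. I will handle it by convention ($\hat r=0$, $\hP V=0$), which leaves the decomposition valid, and note that the algorithm only uses these quantities when $\tN\ge 2\err{\delta/4}>0$.
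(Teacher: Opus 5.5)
Your proposal takes the same route as the paper, whose proof is only a short sketch. That sketch observes that each noisy prefix sum is the clean one plus at most $\log T$ independent node noises. It then applies one-dimensional Gaussian concentration with a union bound over at most $2T$ nodes and $d=S^2AH+2SAH$ coordinates. Write $e_R=\tR-R$, $e_N=\tN-N$ and $e_{s'}=\tN(s,a,s')-N(s,a,s')$. Your identities $\tr-\hat r=(e_R-\hat r\,e_N)/\tN$ and $[(\tP-\hP)V](s,a)=\bigl(\sum_{s'}e_{s'}V(s')-e_N\,[\hP V](s,a)\bigr)/\tN$ fill in the ratio step that the paper leaves implicit. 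They give item 3 exactly as stated.

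As written, however, your argument does not deliver the stated constants, and the paper's sketch does not address these points either.
\begin{enumerate}
\item \textbf{The $\log_2 T$ versus $\ln T$ step.} The inequality $\log_2 T\le\ln T$ is backwards. A prefix sum can use up to $\lfloor\log_2 T\rfloor+1$ nodes, so summing per-node bounds gives roughly $\err{\delta/4}/\ln 2$, not $\err{\delta/4}$. Separately, the per-node failure level must be $\delta/(8Td)$, not $\delta/(4Td)$, for $2Td$ events to total $\delta/4$.
\item \textbf{The constant in item 2.} As you noticed, coordinatewise bounds give item 2 only with $2\err{\delta/4}$. This is tight for that route: take $e_R=\err{\delta/4}$, $e_N=-\err{\delta/4}$ and $\hat r=1$.
\end{enumerate}

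Both issues are repaired by the observation you started with and then dropped.
\begin{itemize}
\item The dyadic node set $D(t)$ depends only on $t$, and the node noises are i.i.d. So each coordinate of the prefix error is exactly $\mathcal{N}(0,|D(t)|\sigma^2)$, and $e_R-e_N\sim\mathcal{N}(0,2|D(t)|\sigma^2)$.
\item A union bound over the $T$ prefix sums needs no independence, so correlation across prefixes is not an obstacle.
\item Since $\sup_{\hat r\in[0,1]}|e_R-\hat r\,e_N|=\max\{|e_R|,|e_R-e_N|\}$, union-bound these $T(d+SAH)\le 2Td$ Gaussian events at level $\delta/(8Td)$. This gives items 1 and 2 exactly as stated whenever $\sqrt{2(\lfloor\log_2 T\rfloor+1)}\le\ln T$, which holds for all $T\ge 24$.
\item Item 3 then follows from your identity unchanged.
\end{itemize}
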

\begin{proof}[Proof of \Cref{lemma: approximation error}]
    The proof directly follows from the fact that every noisy prefix sum $\tN_h^t(s,a)$ for a fixed $(h,s,a)$ is the result of perturbing $N^t_h(s,a)$ with at most $\log T$ independent spherical noise sampled from $\mathcal{N}(0, \sigma^2)$. The final form of the lemma simply follows from the concentration of spherical Gaussians (\Cref{lemma: concentration of spherical gaussians}) and the union bound over (at most) $2T$ nodes and $HS^2A + 2 HSA$ dimensions in each $u_t$ in the tree. 
\end{proof}

\begin{lemma}[Hoeffding's inequality]
Fix any $\delta > 0$. With probability at least $1 - \delta/4$, for all $(t,h,s,a) \in [T] \times [H] \times \mathcal{S} \times \mathcal{A}$, we have
\begin{align*}
    |\hat{r}^t_h(s,a) - r_h(s,a)| &\leq \sqrt{\frac{\ln(8 SAH T/\delta)}{2 N^t_h(s,a)}}, \\
    [(\hP_h^t  - \mathbb{P}_h) V^*_{h+1}](s,a) &\leq H \sqrt{\frac{\ln(8 SAH T/\delta)}{2 N^t_h(s,a)}}.
\end{align*}
    \label{lemma:hoeffding}
\end{lemma}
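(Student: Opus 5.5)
We need to prove the Hoeffding lemma: with probability at least $1-\delta/4$, simultaneously for all $(t,h,s,a)$, both the reward deviation and the transition deviation against the fixed function $V^*_{h+1}$ are bounded.

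The plan is to apply the standard Azar et al.\ style argument. Since $N^t_h(s,a)$ is random and depends on the adaptive policy, we first fix $(h,s,a)$ and a count $n\in[T]$. We then use the ``stack of samples'' (skeleton) construction. Conceptually, for each $(h,s,a)$ we pre-draw an i.i.d.\ sequence $(r^{(i)}, s'^{(i)})_{i\ge1}$ with $r^{(i)}\sim R(\bar r_h(s,a))$ and $s'^{(i)}\sim\mathbb{P}_h(\cdot|s,a)$. The $i$-th visit to $(h,s,a)$ consumes the $i$-th element. This reproduces the distribution of the trajectories, regardless of the algorithm's choices, including the injected Gaussian noise, which is independent of these samples. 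Under this coupling, $\hat r^t_h(s,a)$ equals the average of the first $n=N^t_h(s,a)$ rewards in the stack. Likewise, $[\hP^t_h V^*_{h+1}](s,a)$ equals the average of $V^*_{h+1}(s'^{(i)})$ over the first $n$ entries.

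For fixed $n$, Hoeffding's inequality applies to $n$ i.i.d.\ variables. For rewards these lie in $[0,1]$, with deviation $\sqrt{\ln(2/\delta')/(2n)}$. For $V^*_{h+1}(s'^{(i)})$ they lie in $[0,H]$, with deviation $H\sqrt{\ln(2/\delta')/(2n)}$. Crucially, $V^*_{h+1}$ is deterministic and not data-dependent, so no covering over value functions is needed. This is exactly why the statement uses $V^*$ and not $\tV^t$.

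We then take a union bound over $n\in[T]$, $(h,s,a)\in[H]\times\mathcal{S}\times\mathcal{A}$, and the two inequalities. Setting $\delta' = \delta/(8SAHT)$ gives total failure probability at most $2\cdot SAHT\cdot\delta'=\delta/4$ and logarithmic factor $\ln(16SAHT/\delta)$. This matches $\ln(8SAHT/\delta)$ up to a constant inside the log, which one can absorb by using the one-sided version for the transition term (the statement is one-sided there) and allocating $\delta/8$ to each family. Since the bound holds for every fixed $n$, it holds in particular at the random $n=N^t_h(s,a)$, for every $t$. The case $N^t_h(s,a)=0$ is vacuous, since the right-hand side is infinite.

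The main subtlety is justifying the skeleton coupling in the presence of the adaptive, noise-driven policy. The point to verify is that the visit count is a stopping-time-like index into an independent i.i.d.\ stack. Conditioning on the algorithm's internal randomness, including the Gaussian tree noise, does not alter the stack's distribution. Hence the fixed-$n$ Hoeffding bounds remain valid after the union bound.
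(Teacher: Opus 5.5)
Your route (stack-of-samples coupling, Hoeffding at each fixed count $n$, then a union bound over $n$ and $(h,s,a)$) is the standard argument. It is what the paper implicitly relies on: the lemma is stated as ``Hoeffding's inequality'' and no proof is given. The probabilistic content of your proposal is correct. The stacks are independent of the Gaussian tree noise and of the policy. No covering is needed because $V^*_{h+1}$ is fixed. The bound remains valid at the random index $N^t_h(s,a)$, and the case $N^t_h(s,a)=0$ is vacuous.

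The step that does not go through is your claim that the constant inside the logarithm can be brought down to $8$. The reward inequality is two-sided. So allocating $\delta/8$ to the reward family gives per-index failure $\delta/(8SAHT)$, and two-sided Hoeffding then yields $\sqrt{\ln(16SAHT/\delta)/(2n)}$. The one-sided trick only repairs the transition family.

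In fact, at level $\ln(8SAHT/\delta)$ the two-sided reward family alone costs $SAH(T-1)\cdot 2\delta/(8SAHT)=\delta(T-1)/(4T)$. That is essentially the entire $\delta/4$ budget, so for $T\ge 4$ a union bound over Hoeffding tails cannot certify both inequalities with the stated constants. What your argument actually proves is the lemma with $\ln(16SAHT/\delta)$ in place of $\ln(8SAHT/\delta)$. This is immaterial for the regret theorem, which is stated up to polylog factors, but you should record the constant you get rather than claim it is absorbed.

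The one-sided saving is also not usable in context. The optimism lemma needs the lower tail $[(\hP^t_h-\mathbb{P}_h)V^*_{h+1}](s,a)\ge -H\sqrt{\ln(8SAHT/\delta)/(2N^t_h(s,a))}$, while the regret decomposition uses the upper tail. So the transition bound should be proved two-sided as well. Your union bound handles this at the same $\ln(16SAHT/\delta)$ level with total failure probability $\delta/4$.
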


\begin{lemma}
    Fix any $\delta > 0$. With probability at least $1 - \delta/4$, for all $(t,h,s,a,s')$: 
    \begin{align*}
         \hP_h^t(s'|s,a) - \mathbb{P}_h(s'|s,a) \leq (8H + \frac{1}{3}) \frac{\ln(4 TH S^2 A/\delta)}{N^t_h(s,a)} + \frac{1}{H} \mathbb{P}_h(s'|s,a).
    \end{align*}
    \label{lemma: bernstein}
\end{lemma}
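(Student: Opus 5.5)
This is the standard multiplicative Bernstein bound for a single transition probability, so the plan is to apply Bernstein's inequality coordinatewise and then absorb the variance term with AM--GM.

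\textbf{Setting up the martingale.} Fix $(h,s,a,s')$. I would use the usual ``stack of samples'' device, as in \citet{azar2017minimax}. Conceptually, attach to $(h,s,a)$ an infinite i.i.d. sequence of next states $Y_1,Y_2,\ldots\sim\mathbb{P}_h(\cdot|s,a)$, and let the $j$-th visit to $(h,s,a)$ reveal $Y_j$. This is legitimate because the environment draws $s^t_{h+1}\sim\mathbb{P}_h(\cdot|s^t_h,a^t_h)$ independently of the past, including the algorithm's Gaussian noise. Then $\hP^t_h(s'|s,a)=\frac1n\sum_{j\le n}1\{Y_j=s'\}$ with $n=N^t_h(s,a)$.

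The adaptivity of the (noisy) policy only affects which $n$ is realized, not the law of the $Y_j$. So it suffices to get a bound that holds uniformly over $n\in[T]$.

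\textbf{Bernstein step.} For fixed $n$, Bernstein's inequality for Bernoulli$(p)$ variables, $p=\mathbb{P}_h(s'|s,a)$, variance $p(1-p)\le p$, gives with probability $1-\delta'$:
\[
\hP-p\le\sqrt{\frac{2p\,L}{n}}+\frac{L}{3n},\qquad L=\ln(1/\delta').
\]
Next apply AM--GM, $\sqrt{2pL/n}\le \frac{p}{H}+\frac{HL}{2n}$. This yields
\[
\hP-p\le \frac{p}{H}+\Big(\frac H2+\frac13\Big)\frac{L}{n},
\]
which is within the stated constant $(8H+\frac13)$. The generous constant leaves slack in case one wants to route the argument through Freedman's inequality or a peeling argument instead.

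\textbf{Union bound.} Take a union bound over $n\in[T]$, $h\in[H]$, $s,s'\in\mathcal{S}$, $a\in\mathcal{A}$, i.e.\ $TH S^2A$ events. Setting $\delta'=\delta/(4THS^2A)$ gives $L=\ln(4THS^2A/\delta)$ and total failure probability $\delta/4$. The case $N^t_h(s,a)=0$ is vacuous, since the right-hand side is infinite or $\hP$ is undefined.

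\textbf{Main obstacle.} The only delicate point is justifying the union over the random count $n$ under the adaptively chosen, noise-dependent policy. The stack-of-samples construction handles this: the $Y_j$ are independent of the Gaussian noise and of the action choices, so the fixed-$n$ bounds apply simultaneously for all $n$.
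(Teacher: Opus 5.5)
Your proposal is correct and follows the same route as the paper. Both arguments use a per-coordinate Bernstein bound for the indicator of $s'$, then an AM--GM split of $\sqrt{2pL/n}$ into $p/H$ plus an $O(HL/n)$ term (the paper calls this step Cauchy--Schwarz), then a union bound over $THS^2A$ events, which gives $\ln(4THS^2A/\delta)$. Your stack-of-samples device with the union taken over the count $n\in[T]$, rather than over the episode $t$, makes rigorous the paper's informal handling of the random $N^t_h(s,a)$. Your constant $H/2+1/3$ is also comfortably within the stated $8H+1/3$.
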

\begin{proof}[Proof of \Cref{lemma: bernstein}]
    By Bernstein's inequality, for any fixed $t,h,s,a,s', \delta$, with probability at least $1 - \delta$, we have
\begin{align*}
    \hP_h^t(s'|s,a) - \mathbb{P}_h(s'|s,a) &< \frac{\ln(1/\delta)}{3 N^t_h(s,a)}  +  \sqrt{\frac{2 \mathbb{P}_h(s'|s,a) \ln(1/\delta)}{N^t_h(s,a)}} \\ 
    &\leq (8H + \frac{1}{3}) \frac{\ln(1/\delta)}{N^t_h(s,a)} + \frac{1}{H} \mathbb{P}_h(s'|s,a), 
\end{align*}
where the last inequality uses Cauchy-Schwartz inequality. Using the union bound and rescaling $\delta$ completes our proof.
\end{proof}

\begin{lemma}
    With probability at least $1  - \delta/4$, for all $h \in [H]$, we have 
    \begin{align*}
        \sum_{t=1}^T \zeta^t_h \leq H \sqrt{32 T \ln(4H/\delta)}.
    \end{align*}
    \label{lemma: martingale}
\end{lemma}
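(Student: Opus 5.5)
We would prove \Cref{lemma: martingale} with the Azuma--Hoeffding inequality applied separately to each step $h$, followed by a union bound over $h \in [H]$. Fix $h$. Let $\mathcal{F}^t_h$ be the $\sigma$-algebra generated by everything revealed up to and including the choice of $(s^t_h,a^t_h)$ in episode $t$. This covers all trajectories of episodes $1,\ldots,t-1$, all Gaussian noise variables in the binary tree used to form $\tilde{u}_{t-1}$, and the first $h$ state-action pairs of episode $t$.

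\textbf{Step 1: measurability.} The policy $\tpi^t$ and the optimistic values $\tV^t_{h+1}$ are computed by \Cref{alg: UCB} from $\tilde{u}_{t-1}$ before episode $t$ starts, so they are $\mathcal{F}^t_h$-measurable. The same holds for $V^{\tpi^t}_{h+1}$, which is a deterministic function of $\tpi^t$ and the true MDP. The noise attached to nodes created later is independent of the episode-$t$ transition $s^t_{h+1}\sim\mathbb{P}_h(\cdot\mid s^t_h,a^t_h)$. Hence, conditionally on $\mathcal{F}^t_h$, the only randomness in $\Delta^t_{h+1}$ comes from $s^t_{h+1}$. (Our reading is that $\Delta^t_{h+1}$ denotes $\tV^t_{h+1}(s^t_{h+1}) - V^{\tpi^t}_{h+1}(s^t_{h+1})$.)

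\textbf{Step 2: martingale differences.} From Step 1,
\[
\sE\bigl[\Delta^t_{h+1}\mid \mathcal{F}^t_h\bigr] = [\mathbb{P}_h(\tV^t_{h+1}-V^{\tpi^t}_{h+1})](s^t_h,a^t_h),
\]
so $\sE[\zeta^t_h\mid\mathcal{F}^t_h]=0$. We order the filtrations episode by episode, $\mathcal{F}^1_h\subseteq\mathcal{F}^2_h\subseteq\cdots$. This is valid because $\zeta^t_h$ is measurable with respect to the information available at the start of episode $t+1$, which is contained in $\mathcal{F}^{t+1}_h$. So $(\zeta^t_h)_{t\le T}$ is a martingale difference sequence.

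\textbf{Step 3: boundedness.} \Cref{alg: UCB} clips $\tQ^t$ into $[0,H]$ (it is either $H$ or $\min\{H,\cdot\}$), and $V^{\tpi^t}\in[0,H]$. Therefore $\tV^t_{h+1}-V^{\tpi^t}_{h+1}\in[-H,H]$. Both $\Delta^t_{h+1}$ and its conditional mean lie in $[-H,H]$, which gives $|\zeta^t_h|\le 2H$.

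\textbf{Step 4: concentration.} Azuma--Hoeffding gives, with probability at least $1-\delta'$,
\[
\sum_{t=1}^T\zeta^t_h\le\sqrt{2T(2H)^2\ln(1/\delta')}=H\sqrt{8T\ln(1/\delta')}.
\]
Setting $\delta'=\delta/(4H)$ and taking a union bound over the $H$ steps yields the event of probability at least $1-\delta/4$. On that event the bound is $H\sqrt{8T\ln(4H/\delta)}\le H\sqrt{32T\ln(4H/\delta)}$, which is the claim with a slack constant. The constant $32$ would also absorb a cruder bounded-difference range such as $4H$.

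\textbf{Main obstacle.} The only delicate point is the measurability in Step 1. The noisy statistics come from the binary tree, whose node noises are shared across episodes. We must check that $\tV^t$ depends only on noise already sampled when $\tilde{u}_{t-1}$ was formed, and that this noise is independent of the fresh transition in episode $t$. This holds because the tree noise is independent of the environment and $\tilde{u}_{t-1}$ depends only on nodes covering leaves $\le t-1$. Once this is settled, the rest is the standard UCB-VI argument.
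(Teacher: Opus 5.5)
Your route is the paper's route. For fixed $h$, you treat $(\zeta^t_h)_{t\le T}$ as a martingale difference sequence, apply Azuma--Hoeffding, and take a union bound over $h\in[H]$; the paper's proof is exactly that one sentence. Your Steps 1--2 correctly supply the measurability argument the paper leaves implicit, and the constant bookkeeping in Step 4 is right.

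The problem is Step 3. \texttt{UPDATE} does not clip $\tQ^t_h$ into $[0,H]$. It either sets $\tQ^t_h=H$ or takes $\min\{H,\cdot\}$, which bounds $\tQ^t_h$ only from above. The estimates $\tr^t_h=\tR^t_h/\tN^t_h$ and $\tP^t_h(\cdot\mid s,a)=\tN^t_h(s,a,\cdot)/\tN^t_h(s,a)$ have Gaussian-perturbed numerators. These can be negative and arbitrarily large in magnitude, and the denominator is only guaranteed to be at least $2\err{\delta/4}$. So $\tV^t_{h+1}$ can be negative, with no almost-sure lower bound. Consequently $|\zeta^t_h|\le 2H$ fails as an almost-sure statement, which is exactly what Azuma--Hoeffding requires. (The paper's one-line proof silently skips this point too.)

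The repair is short and uses your own Step 1. Since $\tV^t_{h+1}$ is $\mathcal{F}^t_h$-measurable, define $\bar\zeta^t_h$ like $\zeta^t_h$ but with $\bar V^t_{h+1}:=\max\{\tV^t_{h+1},0\}\in[0,H]$ in place of $\tV^t_{h+1}$. This is still a martingale difference sequence, now genuinely bounded by $2H$, so Step 4 gives the bound for $\sum_t\bar\zeta^t_h$ with probability at least $1-\delta/4$. On the optimism event of \Cref{lemma: optimism}, $\tV^t_{h+1}\ge V^*_{h+1}\ge 0$ at every state, so $\bar\zeta^t_h=\zeta^t_h$ there. Hence the stated bound holds on the intersection with the optimism event. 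That is all the regret proof needs, since its event $E$ already includes optimism and the $\delta$-accounting is unchanged.

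Alternatively, add a lower clip at $0$ to \texttt{UPDATE}. Because $Q^*_h\ge 0$, this preserves optimism and never binds on the optimism event, and it makes your Step 3 literally true.
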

\begin{proof}[Proof of \Cref{lemma: martingale}]
    For any fixed $h$, $\{\zeta^t_h\}_{t \in [T]}$ is a martingale difference sequence. Applying Azuma inequality completes our proof. 
\end{proof}

\begin{lemma}
    Let $I_2 = \{t \in [T]:  \tN^t_h(s^t_h,a^t_h) \geq 2 \err{\delta/4}, \forall h \in [H] \}$ and $I_1 = [T] \backslash I_2$. If the inequalities in \Cref{lemma: approximation error} hold, we have
    \begin{align*}
        |I_1| \leq 3 HSA \err{\delta/4}. 
    \end{align*}
    \label{lemma: warmup steps are short}
\end{lemma}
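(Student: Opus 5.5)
We bound $|I_1|$ by charging each episode in $I_1$ to a tuple $(h,s,a)$ that was still under-visited, and by showing that each tuple can absorb only a few such charges. Throughout we assume the inequalities of \Cref{lemma: approximation error} hold.

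\textbf{Step 1: reduce noisy counts to true counts.} If $t \in I_1$, there is some $h \in [H]$ with $\tN^t_h(s^t_h,a^t_h) < 2\err{\delta/4}$. Part 1 of \Cref{lemma: approximation error} then gives
\begin{align*}
N^t_h(s^t_h,a^t_h) < \tN^t_h(s^t_h,a^t_h) + \err{\delta/4} < 3\err{\delta/4}.
\end{align*}
So every $t\in I_1$ witnesses a tuple $(h,s,a)=(h,s^t_h,a^t_h)$ that is visited in episode $t$ while its true count before episode $t$ is below $3\err{\delta/4}$.

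\textbf{Step 2: charge and count.} Charge each $t \in I_1$ to one such witnessing tuple, for instance the one with the smallest $h$. Fix a tuple $(h,s,a)$. Each episode charged to it is an episode in which $(h,s,a)$ is visited, so $N_h(s,a)$ increases by exactly one after that episode. Moreover, just before each such episode we have $N^t_h(s,a) < 3\err{\delta/4}$. The true counts are nondecreasing integers, so the charged episodes occur at distinct count values $0,1,\ldots,\lceil 3\err{\delta/4}\rceil - 1$. Hence at most $\lceil 3\err{\delta/4}\rceil$ episodes are charged to any single tuple.

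\textbf{Step 3: sum over tuples.} Summing over the $HSA$ tuples gives
\begin{align*}
|I_1| \le HSA\,\lceil 3\err{\delta/4}\rceil .
\end{align*}
This equals $3HSA\,\err{\delta/4}$ up to the ceiling, and the rounding is negligible since $\err{\delta/4} \geq 1$ for the relevant $\sigma$; alternatively it can be absorbed into the constant.

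\textbf{Main obstacle.} The key point is that the counting must use the true counts $N$, not the noisy counts $\tN$. The noisy counts need not be monotone and could in principle stay below the threshold indefinitely. This is why Step 1 translates to true counts first, and the extra $\err{\delta/4}$ incurred there is what produces the factor $3$ rather than $2$. The union bound over all $t$ needed for this translation is already built into \Cref{lemma: approximation error}.
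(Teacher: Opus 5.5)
Your proof is correct and is essentially the paper's argument. You pass from $\tN^t_h(s^t_h,a^t_h) < 2\err{\delta/4}$ to $N^t_h(s^t_h,a^t_h) < 3\err{\delta/4}$ via part~1 of \Cref{lemma: approximation error}, charge each episode of $I_1$ to an under-visited tuple $(h,s,a)$, and use that distinct charged episodes see distinct integer counts. Your per-tuple count $\lceil 3\err{\delta/4}\rceil$ is the accurate one, whereas the paper writes $3\err{\delta/4}$ and its follow-up remark even $\lfloor 3\err{\delta/4}\rfloor$. The literal constant $3$ can in fact fail when $3\err{\delta/4}\notin\mathbb{N}$, e.g.\ for a single tuple. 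So what is really proved is $|I_1| \le HSA\lceil 3\err{\delta/4}\rceil \le 4HSA\err{\delta/4}$ once $\err{\delta/4}\ge 1$, which is all the regret analysis needs.
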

\begin{proof}[Proof of \Cref{lemma: warmup steps are short}]
    Note that 
    \begin{align*}
        I_1 = \{t \in [T]: \exists h \in [H], \tN^t_h(s^t_h, a^t_h) < 2 \err{\delta/4}\}. 
    \end{align*}
    Let us define 
    \begin{align*}
        I'_1 = \{t \in [T]: \exists h \in [H], N^t_h(s^t_h, a^t_h) < 3 \err{\delta/4}\}. 
    \end{align*}
    If the inequalities in \Cref{lemma: approximation error} hold, then $I_1 \subseteq I_1'$. Thus, it suffices to bound $|I'_1|$. Each $t \in I'_1$ must create at least one mapping to $[H] \times \mathcal{S} \times \mathcal{A}$ via $(h,s,a)$ such that $N^t_h(s,a) < 3 \err{\delta/4}$ and $(s,a) = (s^t_h, a^t_h)$. For any $(h,s,a)$, there are at most $3 \err{\delta/4}$ elements in $I_1'$ that create a mapping to $(h,s,a)$, because otherwise $(h,s,a)$ must be visited at least $3 \err{\delta/4}$ by some $t \in I'_1$, leading to a contradiction. Overall, the number of elements in $I'_1$ must be at most $ 3 \err{\delta/4} H S A$. 
\end{proof}

\begin{remark}[Intuition for Lemma C.5]
Since $N^t_h(s,a)$ counts visits to $(h,s,a)$ before episode $t$ and increases by 1 each time $(h,s,a)$ is visited, the $k$-th episode to visit a fixed $(h,s,a)$ has $N^t_h(s,a) \geq k-1$. For this episode to be in $I'_1$ requires $k-1 < 3 \err{\delta/4}$, so at most $\lfloor 3\err{\delta/4} \rfloor$ episodes visiting any fixed $(h,s,a)$ can be in $I'_1$. Summing over $HSA$ tuples gives $|I'_1| \leq 3HSA\err{\delta/4}$.
\end{remark}

\subsection{Proof of the Regret Bound}
\begin{lemma}[Optimism]
Fix any $\delta > 0$. If we set the bonus function as follows: 
\begin{align*}
        b(\tilde{n}) = (H + 1) \sqrt{\frac{\ln(8 SAH T/\delta)}{2 (\tilde{n} - \err{\delta/4} ) }} + \frac{\err{\delta/4}}{\tilde{n}} (1 + 2 H (\sqrt{S}+1)),
\end{align*}
then, with probability at least $1 - \delta/2$, for all $(t, h,s,a) \in [T] \times [H] \times \mathcal{S} \times \mathcal{A}$,  we have 
\begin{align}
    \tQ_h^t(s,a) \geq Q^*_h(s,a) \text{ and }  \tV_h^t(s) \geq V^*_h(s), \forall (h,s,a). 
    \label{eq: optimistic values}
\end{align}
\label{lemma: optimism}
\end{lemma}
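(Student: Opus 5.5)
We prove the optimism statement by backward induction on $h$, from $h=H+1$ down to $h=1$, working on the intersection of the good events of \Cref{lemma: approximation error} and \Cref{lemma:hoeffding}. Each of these events has probability at least $1-\delta/4$, so their intersection has probability at least $1-\delta/2$.

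\textbf{Base case and setup.} At $h=H+1$ we have $\tQ^t_{H+1}=H\ge 0=Q^*_{H+1}$, so the claim holds. For the inductive step, assume $\tV^t_{h+1}\ge V^*_{h+1}$ pointwise and fix $(s,a)$. If $\tN^t_h(s,a)<2\err{\delta/4}$, or if the truncation $\min\{H,\cdot\}$ is active, then $\tQ^t_h(s,a)=H\ge Q^*_h(s,a)$ trivially. It remains to treat the case $\tN^t_h(s,a)\ge 2\err{\delta/4}$ with the untruncated value. Here Part 1 of \Cref{lemma: approximation error} gives
\[
N^t_h(s,a)\ge \tN^t_h(s,a)-\err{\delta/4}\ge \err{\delta/4}>0,
\]
so the empirical quantities are well defined and $\tilde n-\err{\delta/4}$ is a positive lower bound on $N$.

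\textbf{Main decomposition.} We write
\[
\tQ^t_h-Q^*_h=(\tr-r)+\tP\tV_{h+1}-\mathbb{P}V^*_{h+1}+b,
\]
and split the transition part as
\[
\tP(\tV_{h+1}-V^*_{h+1})+(\tP-\hP)V^*_{h+1}+(\hP-\mathbb{P})V^*_{h+1}.
\]
The remaining pieces are bounded as follows.
\begin{itemize}
\item The reward error splits into $(\tr-\hat r)+(\hat r-r)$. The first part is at least $-\err{\delta/4}/\tN$ by Part 2 of \Cref{lemma: approximation error}. The second is at least $-\sqrt{\ln(8SAHT/\delta)/(2N)}$ by \Cref{lemma:hoeffding}, and then at least $-\sqrt{\ln(\cdot)/(2(\tN-\err{\delta/4}))}$ since $N\ge \tN-\err{\delta/4}$.
\item The term $(\hP-\mathbb{P})V^*_{h+1}$ is handled by the same Hoeffding bound, with an extra factor $H$ (applied to $-V^*$, or through the two-sided version). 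Together with the reward error, this accounts for the $(H+1)\sqrt{\cdot}$ part of $b$.
\item The term $(\tP-\hP)V^*_{h+1}$ is at least $-\err{\delta/4}H(S+1)/\tN$ by Part 3 of \Cref{lemma: approximation error}. This must be matched against the $\err{\delta/4}\,2H(\sqrt S+1)/\tN$ part of the bonus.
\end{itemize}

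\textbf{Main obstacle.} The step we expect to be the obstacle is the term $\tP(\tV_{h+1}-V^*_{h+1})$. The difference $\tV_{h+1}-V^*_{h+1}$ is nonnegative by the induction hypothesis, but $\tP$ need not be a nonnegative measure, because noisy counts can be negative. We therefore write $\tP=\hP+(\tP-\hP)$. The part $\hP(\tV-V^*)$ is $\ge 0$. The part $(\tP-\hP)(\tV-V^*)$, with $\tV-V^*\in[0,H]$, costs at most a further $\err{\delta/4}H(S+1)/\tN$ in magnitude by Part 3 again. That gives a total of about $2H(S+1)\err{\delta/4}/\tN$, whereas the bonus provides $2H(\sqrt S+1)\err{\delta/4}/\tN$.

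To close the $S$ versus $\sqrt S$ gap, we would sharpen Part 3 using an $\ell_2$ argument. The noise vector on the $S$ coordinates $\tN(s,a,\cdot)$ has $\ell_2$ norm at most $\err{\delta/4}$ by \Cref{lemma: concentration of spherical gaussians}, and $\|V\|_2\le H\sqrt S$. Cauchy--Schwarz then gives $|(\tP-\hP)V|\le \err{\delta/4}H(\sqrt S+1)/\tN$, where the extra $+1$ comes from the denominator mismatch $|1/\tN-1/N|\cdot N\cdot\|\hP V\|_\infty$. Applying this bound to both $V^*_{h+1}$ and $\tV_{h+1}-V^*_{h+1}$ gives exactly the factor $2H(\sqrt S+1)$, and the extra $1\cdot\err{\delta/4}/\tN$ covers the reward noise.

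\textbf{Conclusion.} Summing all the pieces gives $\tQ^t_h(s,a)\ge Q^*_h(s,a)$. Taking the maximum over actions then gives $\tV^t_h\ge V^*_h$, which completes the induction.
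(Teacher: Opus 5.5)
Your decomposition is the paper's own. It uses backward induction and splits into $\hat r-r$, $(\hP-\mathbb{P})V^*_{h+1}$, $\hP(\tV_{h+1}-V^*_{h+1})\ge 0$, $\tr-\hat r$, $(\tP-\hP)V^*_{h+1}$ and $(\tP-\hP)(\tV_{h+1}-V^*_{h+1})$, with Part~3 of \Cref{lemma: approximation error} applied to the last two. You are right that this only produces $-(1+2H(S+1))\err{\delta/4}/\tN$, which the stated bonus does not dominate, since it carries only $2H(\sqrt S+1)$. The paper's proof reaches the same expression and then simply asserts $\ge 0$.

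The gap is in your repair. $\err{\delta/4}$ is a \emph{per-coordinate} bound: Part~1 comes from \Cref{lemma: concentration of spherical gaussians} applied one coordinate at a time, with a union bound over the $d$ coordinates and $2T$ nodes. It does not control the $\ell_2$ norm of the $S$-dimensional noise vector $\eta$ in $\tN^t_h(s,a,\cdot)$. That norm is larger by a factor of order $\sqrt S$. The same lemma with $d=S$ gives $\sigma\sqrt S\,(1+\sqrt{2\log(1/\delta)/S})$ per node, and $\chi^2$ concentration gives $\|\eta\|_2\gtrsim\sigma\sqrt S$ with high probability. So $\|\eta\|_2\le\err{\delta/4}$ is false once $S$ exceeds a polylogarithmic threshold. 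With the correct norm, Cauchy--Schwarz gives $\|\eta\|_2\|V\|_2\lesssim\sqrt S\,\err{\delta/4}\cdot H\sqrt S=HS\,\err{\delta/4}$, no better than Part~3.

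What survives is a $\sqrt S$ rate for the \emph{deterministic} $V^*_{h+1}$. There $\langle\eta,V^*_{h+1}\rangle$ is exactly Gaussian with standard deviation of order $\sigma H\sqrt{S\log T}$. A scalar tail bound plus a union bound over $(t,h,s,a)$ then gives $|\langle\eta,V^*_{h+1}\rangle|\lesssim H\sqrt S\,\err{\delta/4}$. This does not extend to $(\tP-\hP)(\tV_{h+1}-V^*_{h+1})$, for two reasons. First, $\tV^t_{h+1}$ is not independent of $\eta$: it is built from counts collected by policies computed from earlier noisy prefix sums that share tree nodes with the current one. Second, a bound uniform over $W\in[0,H]^S$ faces the worst case $\min_W\langle\eta,W\rangle=-H\sum_{s'}(\eta(s'))_-$, which is of order $HS$ times the noise scale.

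So your argument, like the paper's, proves optimism only with $1+2H(S+1)$ in the bonus in place of $1+2H(\sqrt S+1)$. Getting $\sqrt S$ would need a genuinely new idea for the data-dependent term. Switching to $S$ costs nothing downstream: the regret proof already pays $H(S+1)\err{\delta/4}/\tN$ through Part~3 and ends with the $H^{2.5}S^2A/\rho$ term anyway.
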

\begin{proof}[Proof of \Cref{lemma: optimism}]
    We use induction to prove the lemma. For a fixed episode $t$, consider $h = H+1, H, \ldots, 1$. The inequalities hold for $h = H + 1$. Assume $\tV_{h+1}^t(s) \geq V^*_{h+1}(s)$ for all $s$ and some $h$. If the algorithm has never visited $(h,s,a)$ prior to episode $t$, then $\tQ^t_h(s,a)$ has never got updated; thus, $\tQ^t_h(s,a) = H \geq Q^*_h(s,a)$. Otherwise, we have 
    \begin{align*}
        \tQ^t_h(s,a) = \begin{cases}
            \min \left\{H, \tr_h(s,a) + (\tP_h^t \tV_{h+1}^t)(s,a) + b(\tN^t_h(s,a)) \right\} &\text{ if } \tN^t_h(s,a) \geq 2 \err{\delta/4} \\ 
            H &\text{ if } \tN^t_h(s,a) < 2 \err{\delta/4}.
        \end{cases}
    \end{align*}
    If $\tN^t_h(s,a) < 2 \err{\delta/4} $ or $\tN^t_h(s,a) \geq 2 \err{\delta/4}$ and the minimum is $H$, then $ \tQ^t_h(s,a) = H \geq Q^*_h(s,a)$. We only need to consider the case $\tN^t_h(s,a) \geq 2 \err{\delta/4}$ and the minimum is not $H$. Then we have 
    \begin{align*}
        \tQ^t_h(s,a) - Q^*_h(s,a) &= \tr_h(s,a) - r_h(s,a) + (\tP_h^t \tV_{h+1}^t)(s,a) - (\mathbb{P}_h V^*_{h+1})(s,a)  + b(\tN^t_h(s,a)) \\ 
        &= \tr^t_h(s,a) - r_h(s,a) + [(\tP_h^t  - \mathbb{P}_h) V^*_{h+1}](s,a) + \tP_h^t ( \tV_{h+1}^t -  V^*_{h+1}) +  b(\tN^t_h(s,a)) \\ 
        &= \hat{r}^t_h(s,a) - r_h(s,a) + [(\hP_h^t  - \mathbb{P}_h) V^*_{h+1}](s,a) + \hP_h^t ( \tV_{h+1}^t -  V^*_{h+1})  \\ 
        &+ \tr^t_h(s,a) - \hat{r}^t_h(s,a) + [(\tP_h^t  - \hP^t_h) V^*_{h+1}](s,a) + (\tP_h^t - \hP_h^t) ( \tV_{h+1}^t -  V^*_{h+1}) +  b(\tN^t_h(s,a)) \\ 
        &\geq -(H + 1) \sqrt{\frac{\ln(8 SAH T/\delta)}{2 N^t_h(s,a)}} - \frac{\err{\delta/4}}{|\tN^t_h(s,a)|} (1 + 2 H (S+1)) + b(\tN^t_h(s,a)) \\
        &\geq 0
    \end{align*}
    where the first inequality is due to \Cref{lemma:hoeffding} and \Cref{lemma: approximation error}, and the last inequality is due to the definition of the bonus function.  
    and \Cref{lemma: approximation error} and the condition $\tN^t_h(s,a) \geq 2 \err{\delta/4}$. 
\end{proof}

\paragraph{Regret analysis.}
We are now ready to give a full proof for part 3 of \Cref{theorem: main result}. 

\begin{proof}[Proof of part 3 of \Cref{theorem: main result}]
Let $E$ be the event that, \Cref{eq: optimistic values} and all the inequalities in \Cref{lemma: approximation error}, \Cref{lemma:hoeffding}, and \Cref{lemma: bernstein} hold simultaneously. We have $\Pr(E) \geq 1 - \delta$. Let $I_2 = \{t \in [T]:  \tN^t_h(s^t_h,a^t_h) \geq 2 \err{\delta/4}, \forall h \in [H] \}$ and $I_1 = [T] \backslash I_2$. 

Under event $E$, we have 
\begin{align*}
    \text{Regret}(T) = \sum_{t=1}^T V^*_1(s_1^t) - V^{\tpi^t}_1(s_1^t) \leq \sum_{t=1}^T \tV^t_1(s_1^t) - V^{\tpi^t}_1(s_1^t) = \sum_{t=1}^T \Delta_1^t,
\end{align*}
due to \Cref{lemma: optimism}.

Observe that action $a^t_h = \tpi^t_h(s^t_h) = \argmax_{a \in \mathcal{A}} \tQ^t_h(s^t_h, a)$. Thus, 
\begin{align*}
    \Delta_h^t = \tV^t_h(s^t_h) - V^{\tpi^t}_h(s^t_h) = \tQ^t_h(s^t_h, a^t_h) - Q^{\tpi^t}_h(s^t_h, a^t_h). 
\end{align*}

For any $t \in I_2$, we have
\begin{align*}
    \tQ_h(s^t, a^t_h) - Q^{\tpi^t}_h(s^t_h, a^t_h) \leq \tr_h^t(s^t_h, a^t_h) - r_h(s^t_h, a^t_h) + (\tP_h^t \tV^t_{h+1} - \mathbb{P}_h V^{\tpi^t}_{h+1})(s^t_h,a^t_h) + b^t_h(\tN^t_h(s^t_h,a^t_h)).
\end{align*}


Under event $E$, for all $t,h$, we have
\begin{align*}
    &(\tP_h^t \tV^t_{h+1} - \mathbb{P}_h V^{\tpi^t}_{h+1})(s^t_h,a^t_h) \\
    &= [(\tP^t_h - \mathbb{P}_h) V^*_{h+1}](s^t_h, a^t_h) + [(\tP^t_h - \mathbb{P}_h)(\tV_{h+1}^t - V^*_{h+1})](s^t_h, a^t_h) + [\mathbb{P}_h(\tV^t_{h+1} - V^{\tpi^t}_{h+1})](s^t_h,a^t_h) \\ 
    &= [(\tP^t_h - \mathbb{P}_h) V^*_{h+1}](s^t_h, a^t_h) + [(\tP^t_h - \hP_h)(\tV_{h+1}^t - V^*_{h+1})](s^t_h, a^t_h) \\
    &+ [(\hP^t_h - \mathbb{P}_h)(\tV_{h+1}^t - V^*_{h+1})](s^t_h, a^t_h) + [\mathbb{P}_h(\tV^t_{h+1} - V^{\tpi^t}_{h+1})](s^t_h,a^t_h) \\ 
    &\leq H \sqrt{\frac{\ln(8 SAHT/\delta)}{2 N^t_h(s,a)}} + \frac{\err{\delta/4}}{\tN^t_h(s,a)} H (S + 1) \\ 
    &+ [(\hP^t_h - \mathbb{P}_h)(\tV_{h+1}^t - V^*_{h+1})](s^t_h, a^t_h) + [\mathbb{P}_h(\tV^t_{h+1} - V^{\tpi^t}_{h+1})](s^t_h,a^t_h) \\ 
    &\leq H \sqrt{\frac{\ln(8 SAHT/\delta)}{2 N^t_h(s^t_h,a^t_h)}} + \frac{\err{\delta/4}}{\tN^t_h(s,a)} H (S + 1) \\
    &+ SH (8H + \frac{1}{3}) \frac{\ln(TH S^2 A/\delta)}{N^t_h(s^t_h,a^t_h)} + \frac{1}{H}[\mathbb{P}_h(\tV^t_{h+1} - V^{*}_{h+1})](s^t_h,a^t_h) + [\mathbb{P}_h(\tV^t_{h+1} - V^{\tpi^t}_{h+1})](s^t_h,a^t_h) \\ 
    &\leq H \sqrt{\frac{\ln(8 SAHT/\delta)}{2 N^t_h(s,a)}} + \frac{\err{\delta/4}}{\tN^t_h(s^t_h,a^t_h)} H (S + 1) \\
    &+ SH (8H + \frac{1}{3}) \frac{\ln(8 TH S^2 A/\delta)}{N^t_h(s^t_h,a^t_h)} + (1+\frac{1}{H})[\mathbb{P}_h(\tV^t_{h+1} - V^{\tpi}_{h+1})](s^t_h,a^t_h)
\end{align*}
where the first inequality is due to \Cref{lemma:hoeffding} and the part 3 of \Cref{lemma: approximation error}, the second inequality is due to $\tV^t_{h+1} \geq V^*_{h+1}$ (\Cref{lemma: optimism}) and \Cref{lemma: bernstein}, and the last inequality uses $V^*_{h+1} \geq V^{\tpi^t}_{h+1}$. 

In addition, under event $E$, for all $t,h$, we have
\begin{align*}
    \tr_h^t(s^t_h,a^t_h) - r_h(s^t_h,a^t_h) \leq  H \sqrt{\frac{\ln(8 SAH T/\delta)}{2 N^t_h(s^t_h,a^t_h)}} + \frac{\err{\delta/4}}{|\tN^t_h(s^t_h,a^t_h)|}.
\end{align*}

Overall, we have that, under event $E$, for all $(t,h) \in I_2 \times [H]$, 
\begin{small}
\begin{align*}
    \Delta^t_h 
     &\leq  \underbrace{2H \sqrt{\frac{\ln(8 SAHT/\delta)}{2 N^t_h(s^t_h,a^t_h)}} + \frac{\err{\delta/4}}{\tN^t_h(s^t_h,a^t_h)} (1 + H (S + 1)) + SH (8H + \frac{1}{3}) \frac{\ln(8 TH S^2 A/\delta)}{N^t_h(s^t_h,a^t_h)} + b^t_h(\tN^t_h(s^t_h,a^t_h))}_{\xi^t_h} \\ 
    &+ (1 + \frac{1}{H})(\zeta^t_h + \Delta^t_{h+1}).
\end{align*}
\end{small}
By recursion over $h = 1, \ldots, H$, the above inequality implies that
\begin{align*}
    \Delta^t_1 &\leq \sum_{h=1}^H (1 + \frac{1}{H})^{h-1} \xi^t_h + e \sum_{h=1}^H (1 + \frac{1}{H})^h \zeta^t_h \\ 
    &\leq e  \sum_{h=1}^H \xi^t_h + e \sum_{h=1}^H \zeta^t_h .  
\end{align*}

Thus, under event $E$, we have 
\begin{align}
    \sum_{t \in I_2} \Delta^t_1 \leq e  \sum_{t \in I_2} \sum_{h=1}^H \xi^t_h + e \sum_{t \in I_2} \sum_{h=1}^H \zeta^t_h. 
    \label{eq: final regret form}
\end{align}
Note that we have the following inequalities: 
\begin{align*}
    \sum_{t \in I_2} \sum_h \frac{1}{N^t_h(s^t_h, a^t_h)} &\leq \sum_{t \in [T]} \sum_h \frac{1}{N^t_h(s^t_h, a^t_h)} = \sum_{(h,s,a)} \sum_{i=1}^{N^T_h(s,a)} \frac{1}{i} \leq \sum_{(h,s,a)} (1 + \ln(N^T_h(s,a)))\\
    &\leq HSA(1 + \ln(T/SA)), 
\end{align*}
and 
\begin{align*}
     \sum_{t \in I_2} \sum_h \frac{1}{\sqrt{N^t_h(s^t_h, a^t_h)}} &\leq  \sum_{t \in [T]} \sum_h \frac{1}{\sqrt{N^t_h(s^t_h, a^t_h)}} = \sum_{(h,s,a)} \sum_{i=1}^{N^T_h(s,a)} \frac{1}{\sqrt{i}} \\ &\leq 2 \sum_{(h,s,a)} \sqrt{N^T_h(s,a)} 
     \leq 2 \sqrt{HSA T H}. 
\end{align*}

Denote $\eps := \err{\delta/4}$ for simplicity. Under event $E$, we have
\begin{align*}
    \sum_{t \in I_2} \sum_h \frac{1}{\tN^t_h(s^t_h, a^t_h)} &\leq \sum_{t \in I_2} \sum_h \frac{1}{N^t_h(s^t_h, a^t_h) - \epsilon} \leq \sum_{t \in [T]} \sum_{h} \frac{1}{N^t_h(s^t_h, a^t_h)} \\
    &\leq HSA (1 + \ln(T/SA)),
\end{align*}
and
\begin{align*}
     \sum_{t \in I_2} \sum_h \frac{1}{\sqrt{\tN^t_h(s^t_h, a^t_h) - \epsilon}} &\leq \sum_{t \in I_2} \sum_h \frac{1}{\sqrt{N^t_h(s^t_h, a^t_h) - 2 \epsilon}} \leq \sum_{t \in [T]} \sum_h \frac{1}{\sqrt{N^t_h(s^t_h, a^t_h)}} \\
     &\leq 2 H \sqrt{SA T }. 
\end{align*}
Plugging these inequalities and \Cref{lemma: martingale} into \Cref{eq: final regret form}, we have that, under event $E$, 
\begin{align}
    \sum_{t \in I_2} \Delta^t_1 &\leq 10 H^2 \sqrt{SAT \ln(8 SAHT/\delta)} + 9 H^3 S^2 A \ln(THS^2A/\delta)\ln(e T/SA) + 8 \epsilon H^2 S^{2} A \ln(e T / SA).
    \label{eq: bound second term}
\end{align}

Finally, under event $E$, we also have
\begin{align}
    \sum_{t \in I_1} \Delta_1^t \leq 3H SAH \err{\delta/4},  
    \label{eq: bound first term}
\end{align}
due to \Cref{lemma: warmup steps are short}. 

Using \Cref{eq: bound second term} and \Cref{eq: bound first term} completes our proof.
\end{proof}

\section{Lower Bounds}
\label{section: lower bound}
We will first prove a lower bound for $\rho$-TV-stable algorithms for multi-armed bandits (MABs) and construct a  reduction from any $\rho$-TV-stable RL algorithm to $\rho$-TV-stable MAB algorithms. 

\begin{lemma}
    Let $\mathbf{A}$ be any $\rho$-TV-stable, sub-linear algorithm for $k$-armed bandits. Fix any non-optimal arm $a$ and let $\Delta_a$ be the gap between the expected mean of arm $a$ and of the optimal arm. Then, for sufficiently large $T$, $\mathbf{A}$ pulls arm $a$ at least $\frac{1}{16 \rho \Delta_a}$ many times with probability at least $1/2$, for some absolute constant $c$. 
    \label{lemma: lower bound of TV-stable mab}
\end{lemma}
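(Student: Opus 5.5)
We argue by contradiction, following the standard private-bandit lower-bound template of \citet{vietri2020private} but with \Cref{lemma: applying tv on iid samples} replacing the DP group-privacy step. Fix the instance $\nu$ with optimal arm $a^*$ and suboptimal arm $a$ with gap $\Delta_a$. Let $n := \frac{1}{16\rho\Delta_a}$ and suppose, toward a contradiction, that $\Pr_\nu(N_a(T) < n) > 1/2$, where $N_a(T)$ is the number of pulls of $a$. We build an alternative instance $\nu'$ that agrees with $\nu$ on every arm except $a$. In $\nu'$ the reward distribution of arm $a$ is shifted by about $2\Delta_a$, so that $a$ becomes the unique optimal arm with gap at least $\Delta_a$ over $a^*$. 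For example, take Bernoulli rewards with means shifted so that $\tova(\nu_a,\nu'_a)\le 2\Delta_a$.

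The key step is a coupling and simulation argument. We view the bandit algorithm as a function of a reward table: for each arm it consumes an i.i.d. stream of samples, and we truncate arm $a$'s stream to its first $n$ entries. We define a modified algorithm $\mathbf{A}'$ that runs $\mathbf{A}$ but stops pulling $a$ (or outputs a fixed event) once $n$ samples of $a$ have been used. $\mathbf{A}'$ depends on arm $a$'s data only through $n$ i.i.d. samples and inherits $\rho$-TV-stability with respect to changing one of those samples. Applying \Cref{lemma: applying tv on iid samples} to these $n$ samples, with all other arms' samples held fixed and then averaged, gives
\[
\tova(\mathbb{M}_\nu,\mathbb{M}_{\nu'}) \le 4n\rho\,\tova(\nu_a,\nu'_a)\le 8n\rho\Delta_a = \tfrac12 .
\]
Now consider the event $\mathcal{E}=\{N_a(T)<n\}$. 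On $\mathcal{E}$ the runs of $\mathbf{A}$ and $\mathbf{A}'$ coincide, so $\mathcal{E}$ is measurable with respect to the output of $\mathbf{A}'$. This is the point where we need care in defining $\mathbf{A}'$ so that the output of $\mathbf{A}'$ determines whether $\mathcal{E}$ occurred. It then follows that $\Pr_{\nu'}(N_a(T)<n)\ge \Pr_\nu(\mathcal{E})-\tfrac12>0$. To get a strict contradiction we may need slightly smaller constants; it is enough to take $n$ so that $8n\rho\Delta_a\le 1/4$ and to use the event with probability $\ge 1/2$.

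Finally we use sublinearity. Under $\nu'$, arm $a$ is optimal with gap at least $\Delta_a$. If it is pulled fewer than $n$ times with constant probability $c$, the regret on $\nu'$ is at least $c(T-n)\Delta_a$, which is linear in $T$. This contradicts the assumption that $\mathbf{A}$ is sub-linear on every instance, once $T$ is sufficiently large. Hence $\Pr_\nu(N_a(T)\ge n)\ge 1/2$, with the constant $c$ absorbing the slack in the constants.

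The main obstacle is the second step: justifying that \Cref{lemma: applying tv on iid samples} applies. That lemma requires an algorithm that is TV-stable as a function of an i.i.d. sample of fixed size $n$. The bandit algorithm is instead adaptive and TV-stable with respect to users, each episode being one row of the reward table. I would handle this by the reward-table view: each user $z_t$ carries a full vector of potential rewards for all arms, so changing the $j$-th sample of arm $a$ changes a single user and stability transfers. The other arms' coordinates are fixed by conditioning, and the truncation to $n$ samples makes the dependence on arm $a$ pass only through $n$ i.i.d. draws.
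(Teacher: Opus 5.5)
The overall route is the paper's. You use an alternative instance in which arm $a$ is raised above the optimum, and a simulation that fixes the other arms' data $\mathcal{D}$ and feeds $\mathbf{A}$ only $n$ i.i.d. samples of arm $a$. You apply \Cref{lemma: applying tv on iid samples} conditionally on $\mathcal{D}$ and average, then finish with sublinear regret on the alternative instance.

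The gap is the step you yourself flag, and your resolution of it does not work. Changing the $j$-th sample of arm $a$ is not the same as changing a single \emph{fixed} user. That sample is read at the round $\tau_j$ of the $j$-th pull of $a$, and $\tau_j$ depends on $\mathbf{A}$'s internal coins and on earlier rewards. TV-stability only compares sequences that differ at a fixed position.

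Here is a counterexample to the transfer. Let $\mathbf{A}$ draw $\tau$ uniformly from $\{1,\dots,T/2\}$. It pulls $a$ at round $\tau$ and a fixed arm $b\neq a$ at all other rounds $t\le T/2$. For $t>T/2$ it pulls $a$ if and only if the reward seen at round $\tau$ was $1$.
\begin{itemize}
\item Changing the user at any fixed position $t_0$ affects the output only on $\{\tau=t_0\}$, so $\mathbf{A}$ is $(2/T)$-TV-stable.
\item But the first sample of arm $a$ determines the whole second half, so your truncated $\mathbf{A}'$ moves by TV distance $1$ when that sample changes.
\end{itemize}
So per-user stability does not give the stability in the $n$ samples of arm $a$ that \Cref{lemma: applying tv on iid samples} needs.

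The paper does not derive this step either. It declares $M_{\mathbf{A}|\mathcal{D}}$ to be $\rho$-TV-stable because it post-processes $\mathbf{A}$ run on the reward table $(\mathcal{D},\underline{x})$. In effect, it takes stability of a bandit algorithm to mean stability under changing one entry of a per-arm reward stream. You should state that as your hypothesis rather than claim it follows from per-user stability. Under that hypothesis your $\mathbf{A}'$ is immediately $\rho$-stable in $\underline{x}$ by post-processing.

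On constants, your bookkeeping is more careful than the paper's. A shift of $2\Delta_a$ gives $\tova(\nu_a,\nu'_a)=2\Delta_a$, whereas the paper uses $\tova(P_a,Q_a)=\Delta_a$; that slip is what lets it reach $1/(16\rho\Delta_a)$. Your correction yields $1/(32\rho\Delta_a)$.

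You can recover the stated constant by raising arm $a$ only to $\mu^*+\Delta_a/2$, where $\mu^*$ is the optimal mean.
\begin{itemize}
\item With $n=1/(16\rho\Delta_a)$, this gives $4n\rho\,\tova(\nu_a,\nu'_a)=3/8$.
\item Sublinear regret on this instance, whose gap $\Delta_a/2$ stays fixed as $T\to\infty$, forces $\Pr_{\nu'}(N_a(T)<n)\le 1/8$ for large $T$.
\item Together these give $\Pr_\nu(N_a(T)\ge n)\ge 1/2$.
\end{itemize}
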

\begin{proof}[Proof of \Cref{lemma: lower bound of TV-stable mab}]
Fix any $\mathbf{A}$ be any $\rho$-TV-stable, sub-linear algorithm for $k$-armed bandits. Fix any $k$-armed bandit instance $P$ where the reward distribution for each arm is a Bernoulli. Let $a$ be any sub-optimal arm of this bandit instance, with the sub-optimality gap (i.e., the difference between the mean reward of the optimal arm and the mean reward of the sub-optimal arm $a$) $\Delta_a$. Let $N_a$ be the number of times out of $T$ rounds that $\mathbf{A}$ pulls arm $a$. Consider the event: 
\begin{align*}
    E = \{N_a < t_a\} \text{ where } t_a := \frac{1}{16 \rho \Delta_a}. 
\end{align*}

We need to show: 
\begin{align}
    \Pr_{\mathbf{A},P}(E) < \frac{1}{2}. 
    \label{eq: the number of pulls of a sub-optimal arm is large}
\end{align}

\begin{remark}
    Our proof strategy broadly follows the strategy used in \citep[Claim~14]{shariff2018differentially}, with two key distinctions: (i) we use \Cref{lemma: applying tv on iid samples} in the place of the version of \citep[Lemma~6.1]{karwa2017finite} that \cite{shariff2018differentially} used there, as we deal with TV-stability instead of DP, (ii) we believe our proof is clearer and more rigorous, as we describe our detailed reduction from the claim to be proved to \Cref{lemma: applying tv on iid samples}. 
\end{remark}
In particular, we will prove \Cref{eq: the number of pulls of an sub-optimal arm is large} via two arguments: first, we show a similar inequality but for a new yet similar MAB instance $Q$, and then connect this inequality to \Cref{eq: the number of pulls of an sub-optimal arm is large} via \Cref{lemma: applying tv on iid samples}. In particular, consider a new $k$-armed bandit instance $Q$ that has the same mean rewards for all $k$ arms as $P$, except only that the mean reward for arm $a$ is by $\Delta_a$ larger than the mean reward for the optimal arm of $P$. The expected regret of $\mathbf{A}$ in $Q$ is at least $\Pr_Q(E) (T - t_a) \Delta_a$, as it follows from that under $E$, the number of times an sub-optimal arm of $Q$ were pulled is at least $T - t_a$, and the sub-optimality gap of a sub-optimal arm in $Q$ is at least $\Delta_a$. For sufficiently large $T$, we have $T - t_a \geq \frac{T}{2}$. At the same time, the expected regret of $\mathbf{A}$ in $Q$ is at most $T^{1-\alpha}$ for some fixed $\alpha \in (0,1]$ as $\mathbf{A}$ has sublinear regrets. Thus, we have 
\begin{align}
    \Pr_{\mathbf{A},Q}(E) \leq \mathcal{O}(\frac{1}{T^{\alpha} \Delta_a}) \leq \frac{1}{4},
    \label{eq: Pr E Q}
\end{align}
for sufficiently large $T$. 

Now consider the following mechanism $M_{\mathbf{A}|\mathcal{D}}: \{0,1\}^{t_a} \rightarrow \{0,1\}$ that maps from a $t_a$-bit binary string to a binary output, conditioned on data $\mathcal{D}$ that is independent of $\mathbf{A}$. In addition, the mechanism depends on the learning algorithm $\mathbf{A}$. 

Given a sample $\underbar{x} \in \{0,1\}^{t_a}$ and a data $\mathcal{D} \in \{0,1\}^{(k-1) (T-t_a)}$, here is how $M_{\mathbf{A} | \mathcal{D}}$ produces the output on $\underbar{x}$. Initialize $U = \underbar{x}$. We iterate over $T$ steps. In each step: if $\mathbf{A}$ pulls $a$ and $U$ is empty, we terminate and output $1$; if $\mathbf{A}$ pulls $a$ and $U$ is not empty, we remove a sample from $U$ and use it as a reward sample for $a$ and feed it back to $\mathbf{A}$ and move to next step and repeat; if $\mathbf{A}$ does not pull $a$, extract from $\mathcal{D}$ a corresponding reward for the pulled arm, move to the next step and repeat. If the iteration loop over $T$ survives all the way down to the last iteration $T$, output zero, i.e., if after $T$ iterations we have not outputted $1$, output $0$. 

We have the following properties: 
\begin{itemize}
    \item $\Pr_{\mathbf{A}, P}(E | \mathcal{D}) = \Pr_{\mathbf{A},\underbar{x} \sim P_a}(M_{\mathbf{A}| \mathcal{D}}(\underbar{x})=1)$, by the construction of $M$,
    \item $M_{\mathbf{A}| \mathcal{D}}(\cdot)$ is $\rho$-TV-stable, because the output of $M_{\mathbf{A}| \mathcal{D}}(\underbar{x})$ is simply a deterministic map from the output of $\mathbf{A}$ on $(\mathcal{D}, \underbar{x})$ to $\{0,1\}$ and $\mathbf{A}$ is $\rho$-TV-stable. 
\end{itemize}

Thus, we now apply \Cref{lemma: applying tv on iid samples} to $M_{\mathbf{A}| \mathcal{D}}$, we have that for any $\mathcal{D}$, 
\begin{align*}
    \Pr_{\mathbf{A}, P}(E | \mathcal{D}) - \Pr_{\mathbf{A}, Q}(E | \mathcal{D}) &= \Pr_{\mathbf{A},\underbar{x} \sim P_a}(M_{\mathbf{A}| \mathcal{D}}(\underbar{x})=1) - \Pr_{\mathbf{A},\underbar{x} \sim Q_a}(M_{\mathbf{A}| \mathcal{D}}(\underbar{x})=1) \\
    &\leq 4 t_a \rho \tova(P_a,Q_a) \\
    &= 4 t_a \rho \Delta_a 
\end{align*}
Hence, 
\begin{align*}
    \Pr_{\mathbf{A}, P}(E ) - \Pr_{\mathbf{A}, Q}(E) &= \sum_{\mathcal{D}}\Pr_{\mathbf{A}, P}(E | \mathcal{D}) P(\mathcal{D}) - \sum_{\mathcal{D}} \Pr_{\mathbf{A}, Q}(E | \mathcal{D}) Q(\mathcal{D})  \\ 
    &= \sum_{\mathcal{D}}\Pr_{\mathbf{A}, P}(E | \mathcal{D}) \Pr(\mathcal{D}) - \sum_{\mathcal{D}} \Pr_{\mathbf{A}, Q}(E | \mathcal{D}) \Pr(\mathcal{D}) \\ 
    &\leq 4 t_a \rho \Delta_a \sum_{\mathcal{D}} \Pr(\mathcal{D}) \\ 
    &= 4 t_a \rho \Delta_a = \frac{1}{4}
\end{align*}
where the second inequality follows from that $P$ and $Q$ have the same arm distributions except for arm $a$ and $\mathcal{D}$ does not contain rewards for arm $a$. Now combine the above inequality with \Cref{eq: Pr E Q}, we have
\begin{align*}
    \Pr_{\mathbf{A}, P}(E ) \leq \frac{1}{4} + \frac{1}{4} = \frac{1}{2}.  
\end{align*}





\end{proof}

\begin{corollary}
    The expected regret of any $\rho$-TV-stable algorithms for $k$-armed bandits is $\Omega\left(\frac{k}{\rho} \right)$. 
    \label{corollary: lower bound for mab tv}
\end{corollary}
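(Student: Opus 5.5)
The plan is to derive \Cref{corollary: lower bound for mab tv} directly from \Cref{lemma: lower bound of TV-stable mab} by choosing a hard Bernoulli instance with a constant gap. I will then sum the forced pulls over all suboptimal arms.

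First, I reduce to sub-linear algorithms. If $\mathbf{A}$ is not sub-linear, its expected regret on some instance is not $o(T)$. For $T$ large enough this already exceeds $k/\rho$, at least in the regime $\rho \gtrsim k/T$ where the bound is meaningful; for smaller $\rho$ I will note that regret is at most $T$, so the claim should be read for $T \gtrsim k/\rho$. So I may assume $\mathbf{A}$ is sub-linear and apply \Cref{lemma: lower bound of TV-stable mab}.

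Second, I fix an instance $P$: arm $1$ is Bernoulli$(1/2+\Delta)$ and arms $2,\dots,k$ are Bernoulli$(1/2)$, with $\Delta$ a fixed constant such as $1/4$. Every suboptimal arm $a$ has gap $\Delta_a=\Delta$. By \Cref{lemma: lower bound of TV-stable mab}, for sufficiently large $T$ we have $\Pr(N_a \ge \frac{1}{16\rho\Delta}) \ge 1/2$ for each $a\neq 1$. Markov-type reasoning in reverse then gives $\mathbb{E}[N_a] \ge \frac{1}{2}\cdot\frac{1}{16\rho\Delta}$.

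Third, I use the regret decomposition $\mathbb{E}[R(T)] = \sum_{a\neq 1}\Delta_a \mathbb{E}[N_a]$. This gives $\mathbb{E}[R(T)] \ge (k-1)\Delta\cdot \frac{1}{32\rho\Delta} = \frac{k-1}{32\rho} = \Omega(k/\rho)$ for $k\ge 2$. The gap $\Delta$ cancels, so the choice of constant is immaterial except for making $T$ large enough to exceed $t_a$.

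The main subtlety is the meaning of ``sufficiently large $T$'' in the lemma. The threshold depends on $\Delta_a$ and on the sub-linear rate of $\mathbf{A}$. Fixing $\Delta$ as a constant keeps it uniform across arms, and I also need $T-t_a\ge T/2$, that is $T\ge \frac{1}{8\rho\Delta}$, which I will state explicitly. One could optionally add the standard $\Omega(\sqrt{kT})$ bound to obtain the combined $\Omega(\sqrt{kT}+k/\rho)$, but that is not needed here.
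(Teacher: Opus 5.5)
Your proposal is correct and follows the paper's proof. Both apply \Cref{lemma: lower bound of TV-stable mab} to each suboptimal arm, bound $\mathbb{E}[N_a]\ge t_a\Pr(N_a\ge t_a)$, plug this into $\mathbb{E}[R(T)]=\sum_{a}\Delta_a\mathbb{E}[N_a]$, and use $\Delta_a t_a = 1/(16\rho)$ so the gaps cancel to give $(k-1)/(32\rho)$; your concrete constant-gap Bernoulli instance and the sub-linearity and large-$T$ caveats only make explicit what the paper leaves implicit.
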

\begin{proof}[Proof of \Cref{corollary: lower bound for mab tv}]
    It follows from \Cref{lemma: lower bound of TV-stable mab} that, for any $k$-armed bandit where the first arm is the optimal arm and all the other $k-1$ arms $\{2, \ldots, k\}$ are sub-optimal with sub-optimal gaps $\{\Delta_2, \ldots, \Delta_k\}$, we have
    \begin{align*}
        \mathbb{E}[R_{\mathbf{A}}(T)] = \sum_{i=2}^k \Delta_i \mathbb{E}_{\mathbf{A}}[N_i] \geq \sum_{i=2}^k \Delta_i t_i \Pr(N_i \geq t_i) \geq \frac{k-1}{32 \rho}. 
    \end{align*}
\end{proof}

The next lemma shows that there exists a class of MDP instances with $S$ states, $A$ actions and $H$ horizon where any $\rho$-TV-stable RL algorithms can be simulated by $S$ $\rho$-TV-stable MAB algorithms. 

\begin{lemma}
    There exists a class of MDP instances with $3 S$ states, $A$ actions and $H$ horizon where any $\rho$-TV-stable RL algorithms can be simulated by $S$ $\rho$-TV-stable $A$-armed bandit algorithms where the scale of the rewards for each MAB is $H$.
    \label{lemma: tv rl can be simulated by tv mab}
\end{lemma}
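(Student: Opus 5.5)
I will build the hard instance family explicitly and then show that an RL learner on it can be run by $S$ independent bandit learners. Take $S$ ``initial'' states $s^{(1)},\ldots,s^{(S)}$. Give each $s^{(i)}$ its own pair of absorbing states $+^{(i)}$ and $-^{(i)}$, for $3S$ states in total. Fix an initial distribution uniform over the $s^{(i)}$; each episode $t$ starts at a random $s_1^t=s^{(i)}$.

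At step $1$ in $s^{(i)}$, action $a$ moves to $+^{(i)}$ with probability $\mu_{i,a}$ and to $-^{(i)}$ otherwise, with zero reward at step $1$. Every action in $+^{(i)}$ yields reward $1$ and keeps the agent in $+^{(i)}$. Every action in $-^{(i)}$ yields reward $0$ and keeps the agent in $-^{(i)}$. The return of an episode from $s^{(i)}$ under first action $a$ is therefore $(H-1)\cdot\mathrm{Bernoulli}(\mu_{i,a})$, so $Q^*_1(s^{(i)},a)=(H-1)\mu_{i,a}$. The regret of any policy from $s^{(i)}$ is exactly $(H-1)$ times the bandit gap of the first action chosen; only the step-$1$ action matters. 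Keeping the absorbers separate per initial state, as the paper stresses, makes the feedback at $s^{(i)}$ carry no information about $\mu_{j,\cdot}$ for $j\neq i$.

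The simulation runs as follows. Given a $\rho$-TV-stable RL algorithm $\mathbf{A}$, I construct for each $i$ an $A$-armed bandit algorithm $\mathbf{B}_i$ with rewards in $\{0,H-1\}$, so the reward scale is $H$. Inside $\mathbf{B}_i$ I simulate $\mathbf{A}$ on the full MDP. Episodes starting at $s^{(j)}$ with $j\neq i$ are generated internally: $\mathbf{B}_i$ draws the start states, and for $j\neq i$ it draws Bernoulli outcomes from a fixed dummy product instance, or equivalently from an independent data string $\mathcal{D}$ as in the proof of \Cref{lemma: lower bound of TV-stable mab}. When the simulated episode starts at $s^{(i)}$ and $\mathbf{A}$ plays $a$, $\mathbf{B}_i$ pulls arm $a$ and receives a bit $y$. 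It then feeds $\mathbf{A}$ the deterministic trajectory: the transition to $+^{(i)}$ or $-^{(i)}$ according to $y$, followed by $H-1$ rewards equal to $y$ (actions at later steps are irrelevant). The user $z_t$ of an episode is the depth-$H$ tree determined by the start state and the Bernoulli outcomes. A bandit reward sequence for arm pulls of $\mathbf{B}_i$ therefore corresponds to a user sequence for $\mathbf{A}$, and changing one bandit reward changes exactly one user $z_t$. Since $\mathbf{B}_i$'s output is a measurable function (post-processing) of $\mathbf{A}$'s output together with independent internal randomness, $\mathbf{B}_i$ inherits $\rho$-TV-stability: TV distance does not increase under post-processing, or under mixtures over the independent randomness by convexity. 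Finally, on the true MDP the regret of $\mathbf{A}$ equals $\sum_i (\text{regret of } \mathbf{B}_i \text{ on its rounds})$, and each $\mathbf{B}_i$ faces about $T/S$ rounds.

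The main obstacle is making the neighboring-dataset notion line up. For the bandit, neighbors differ in one reward of one pull. For the RL algorithm, neighbors differ in one user, whose identity (its episode index $t$) depends on the random schedule of start states. I would handle this by conditioning on the start-state sequence and on $\mathcal{D}$, both independent of $\mathbf{A}$. Conditioned on these, the map from bandit reward strings to user sequences is fixed, and a one-coordinate change maps to a one-user change, which gives stability for each fixed conditioning and hence after averaging. A secondary subtlety is that $\mathbf{A}$ sees the index $i$ and data from other components. This is harmless because that data is independent of arm $i$'s rewards, which is exactly why separate absorbers are needed.

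This simulation, combined with \Cref{corollary: lower bound for mab tv} applied to each of the $S$ bandits and scaled by $H$, yields the $HSA/\rho$ term.
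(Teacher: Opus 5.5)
Your proposal is correct and follows essentially the paper's route: the same $3S$-state instance with a private pair of absorbing states per initial state, and $\rho$-TV-stability of the per-state learner inherited from $\mathbf{A}$ by post-processing (the paper's marginalization to $\mathbf{A}_s$). Your explicit simulator $\mathbf{B}_i$, with conditioning on the start-state schedule and on $\mathcal{D}$ plus convexity of TV, makes rigorous the step the paper only asserts (that $\mathbf{A}_s$ is a genuine bandit algorithm). One caveat: the simulated episodes for $j\neq i$ must be drawn from the true parameters $\mu_{j,\cdot}$, hard-coded into $\mathbf{B}_i$, rather than from an arbitrary dummy instance; otherwise $\mathbf{A}$'s behavior at $s^{(i)}$, which may depend on what it sees elsewhere, changes, and your regret decomposition fails.
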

\begin{proof}[Proof of \Cref{lemma: tv rl can be simulated by tv mab}]
\begin{figure}[h]
    \centering
    \includegraphics[width=0.3\linewidth]{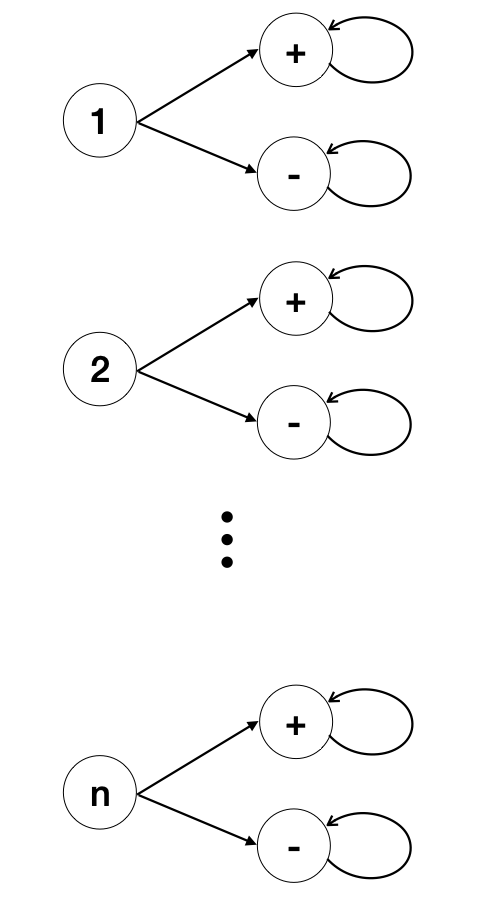}
    \caption{Hard MDPs.}
    \label{fig:hard mdp}
\end{figure}
To prove the hardness result, consider the following class of MDPs in \Cref{fig:hard mdp}. An MDP in this class starts uniformly at random in one of $n$ initial states. From each such state, any action from a set of $A$ actions will lead to only one of the two absorbing states where the agent will stay until the end of an episode. Such an MDP can be viewed as $n$ $A$-armed bandits in parallel. 

Let $\mathbf{A}$ be any $\rho$-TV-stable algorithm. Let $\{s_1^t\}_{t \in [T]}$ be $T$ initial states, representing $T$ users $\mathcal{U}$. For any $s \in \mathcal{S}$, let $\mathbf{A}_s$ be all the components of the output of $\mathbf{A}$ that correspond to $s^t_1 = s$. Suppose all the episodes $t$ where $s_1^t=s$ are $t_1, \ldots, t_{T_s}$. Fix any event $E_s \subseteq \Pi^{T_s}$. Let $E =\{e \in \Pi^{T}: (e_{t_1}, \ldots, e_{t_{T_s}}) \in E_s \}$. By marginalization, we have
\begin{align*}
    \Pr(\mathbf{A}_s(\mathcal{U}) \in E_s) = \Pr(\mathbf{A}(\mathcal{U}) \in E)
\end{align*}
Thus, let $\mathcal{U}'$ be a neighbor of $\mathcal{U}$ (they differ by one user), we have
\begin{align*}
    \Pr(\mathbf{A}_s(\mathcal{U}) \in E_s) = \Pr(\mathbf{A}(\mathcal{U}) \in E) \leq \Pr(\mathbf{A}(\mathcal{U}') \in E) + \rho = \Pr(\mathbf{A}_s(\mathcal{U}') \in E_s) + \rho,
\end{align*}
where the inequality is due to that $\mathbf{A}$ is $\rho$-TV-stable. Therefore, the above inequality implies that $\mathbf{A}_s$ is also $\rho$-TV-stable. 

In general, $\mathbf{A}_s$ may not be equivalent to an MAB algorithm, as it also uses the information of $\mathbf{A}$ pulling actions for states other than $s$. Thus, $\mathbf{A}_s$ may be more powerful than an MAB thus it can break the lower bound in \Cref{corollary: lower bound for mab tv}. However, in the class of MDPs we construct, the information of $\mathbf{A}$ pulling actions for state $s'$ gives zero information the rewards and next states for state $s \neq s'$. Thus, $\mathbf{A}_s$ is a valid MAB algorithm. 
\end{proof}
\begin{remark}
    Note that the MDP class by \citep{vietri2020private} use the same absorbing states for all initial states, while our MDP class use separate absorbing states for different initial states. We believe this is an analysis gap in \citep{vietri2020private}, though the fix is simple (using our MDP class in \Cref{fig:hard mdp} instead). The reason is that their MDP class may not give a valid reduction to MAB algorithms. In particular, since all the initial states share the same absorbing states, an RL algorithm can learn this fact, figure out that the absorbing state $+$ is better than the absorbing state $-$, and when it encounters a new initial state $s$, it will bias toward selecting actions that lead to the absorbing state $+$. This will break the lower bound in \Cref{corollary: lower bound for mab tv}.
    \label{remark: analysis gap of Vietri}
\end{remark}

\begin{proof}[Proof of \Cref{theorem: lower bound}]
The first term in our lower bound is the lower bound for any RL algorithm \citep{auer2008near}, thus is applicable to any $\rho$-TV-stable algorithms. We only need to prove that $\frac{HSA}{\rho}$ is also a lower bound. We use the same MDP class as \Cref{fig:hard mdp} in the proof of \Cref{lemma: tv rl can be simulated by tv mab}. Any RL algorithm must need to solve $S$ independent $A$-armed bandit problems of scale $H$. By \Cref{lemma: tv rl can be simulated by tv mab}, any $\rho$-TV-stable RL algorithm can be simulated by $S$ $\rho$-TV-stable MAB algorithms that solve $S$ independent MAB problems. By \Cref{corollary: lower bound for mab tv}, each of them must incur a regret of $\frac{H A}{\rho}$, where $H$ is the reward scale of each MAB. Solving an MDP in our MDP class requires to solving $S$ independent MAB problems, thus the total regret must incur at least $\frac{S H A}{\rho}$. 
\end{proof}





\end{document}